\theoremstyle{plain}
\newtheorem{theorem}{Theorem}[section]
\newtheorem{proposition}[theorem]{Proposition}
\newtheorem{lemma}[theorem]{Lemma}
\theoremstyle{definition}
\newtheorem{definition}[theorem]{Definition}
\theoremstyle{remark}
\icmltitlerunning{On the Robustness of Transformers against Context Hijacking for Linear Classification}
\begin{document}

\twocolumn[
\icmltitle{On the Robustness of Transformers against Context Hijacking \\ for Linear Classification}



\icmlsetsymbol{equal}{*}

\begin{icmlauthorlist}
\icmlauthor{Tianle Li}{equal,1}
\icmlauthor{Chenyang Zhang}{equal,2}
\icmlauthor{Xingwu Chen}{3}
\icmlauthor{Yuan Cao}{2}
\icmlauthor{Difan Zou}{1,3}
\end{icmlauthorlist}

\icmlaffiliation{1}{Insititute of Data Science, The University of Hong Kong}

\icmlaffiliation{2}{Department of Statistics and Actuarial Science, The University of Hong Kong}

\icmlaffiliation{3}{Department of Computer Science, The University of Hong Kong}


\icmlcorrespondingauthor{Difan Zou}{dzou@cs.hku.hk}

\icmlkeywords{Machine Learning, ICML}

\vskip 0.3in
]



\printAffiliationsAndNotice{\icmlEqualContribution} 

\begin{abstract}
Transformer-based Large Language Models (LLMs) have demonstrated powerful in-context learning capabilities. However, their predictions can be disrupted by factually correct context, a phenomenon known as context hijacking, revealing a significant robustness issue. To understand this phenomenon theoretically, we explore an in-context linear classification problem based on recent advances in linear transformers. In our setup, context tokens are designed as factually correct query-answer pairs, where the queries are similar to the final query but have opposite labels.
Then, we develop a general theoretical analysis on the robustness of the linear transformers, which is formulated as a function of the model depth, training context lengths, and number of hijacking context tokens. A key finding is that a well-trained deeper transformer can achieve higher robustness, which aligns with empirical observations. We show that this improvement arises because deeper layers enable more fine-grained optimization steps, effectively mitigating interference from context hijacking. This is also well supported by our numerical experiments. Our findings provide theoretical insights into the benefits of deeper architectures and contribute to enhancing the understanding of transformer architectures.
\end{abstract}

\section{Introduction}
Transformers~\cite{vaswani2017attention} have demonstrated remarkable capabilities in various fields of deep learning, such as natural language processing~\cite{radford2019language,achiam2023gpt,vig2019analyzing,touvron2023llama,ouyang2022training,devlin2018bert}. A common view of the superior performance of transformers lies in its remarkable in-context learning ability ~\cite{brown2020language,chen2022meta,liu2023pre}, that is, transformers can flexibly adjust predictions based on additional data given in context contained in the input sequence itself, without updating parameters. This impressive ability has triggered a series of theoretical studies attempting to understand the in-context learning mechanism of transformers ~\cite{olsson2022context,garg2022can,xieexplanation,guotransformers,wumany}. These studies suggest that transformers can behave as meta learners~\cite{chen2022meta}, implementing certain meta algorithms (such as gradient descent~\cite{von2023transformers,ahn2023transformers,zhang2024context} based on context examples, and then applying these algorithms to the queried input.

Despite the benefits of in-context learning abilities in transformers, this feature can also lead to certain negative impacts. Specifically, while well-designed in-context prompts can help generate desired responses, they can also mislead the transformer into producing incorrect or even harmful outputs, raising significant concerns about the robustness of transformers ~\cite{chowdhury2024breaking,liu2023trustworthy,zhao2024evaluating}. For instance, a significant body of work focuses on jailbreaking attacks ~\cite{chao2023jailbreaking,niu2024jailbreaking,shen2024anything,deng2023jailbreaker,yu2023gptfuzzer}, which aim to design specific context prompts that can bypass the defense mechanisms of large language models (LLMs) to produce answers to dangerous or harmful questions  (e.g., ``\texttt{how to build a bomb?}''). It has been demonstrated that, as long as the context prompt is sufficiently long and flexible to be adjusted, almost all LLMs can be successfully attacked ~\cite{anil2024many}. These studies can be categorized under adversarial robustness, where an attacker is allowed to perturb the contextual inputs arbitrarily to induce the transformer model to generate targeted erroneous outputs ~\cite{shi2023large,pandia2021sorting,creswellselection,yoranmaking}.

\begin{figure*}
    \centering
    \includegraphics[width=1\linewidth]{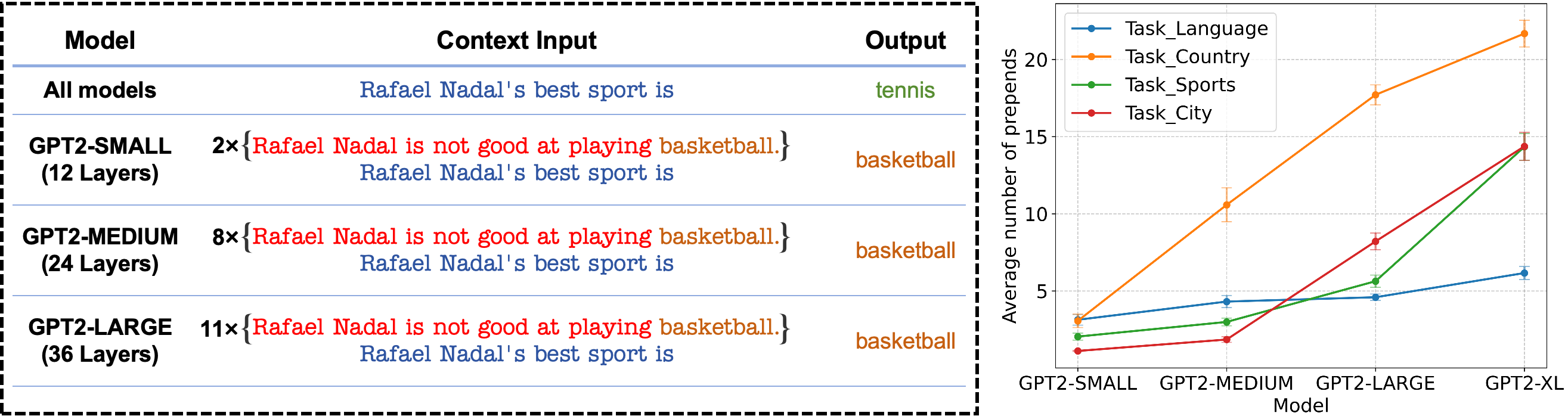}
    \vskip -0.in
    \caption{\textbf{Context hijacking phenomenon in LLMs of different depths.} \textit{Left}: If there are no or only a few factually correct prepends, LLMs of different depths can correctly predict the next token. When the number of prepends increases, the outputs of models are disrupted, and deeper models are more robust. \textit{Right}: Four different types of tasks are introduced, each with a fixed template, and tested on LLMs of different depths. The horizontal axis is the model with depth from small to large, and the vertical axis is the average number of prepends required to successfully interfere with the model output. Experiments show that deeper models perform more robustly. (Experimental setup is given in Appendix \ref{appen:chinLLMs})}
    \label{fig:realworldexpr}
    \vskip -0.1in
\end{figure*}

However, in addition to the adversarial attack that may use harmful or incorrect context examples, it has been shown that the predictions of LLMs can also be disrupted by harmless and factually correct context. Such a phenomenon is referred to as \textit{context hijacking} \cite{jiangllms,jeong2023hijacking}, which is primarily discovered on fact retrieval tasks, i.e. the output of the LLMs can be simply manipulated by modifying the context with additional factual information. For example, as shown in Figure \ref{fig:realworldexpr}, the GPT2 model can correctly answer the question ``\texttt{\textcolor{blue}{Rafael Nadal's best sport is}}'' with ``\texttt{\textcolor{teal}{tennis}}'' when giving context examples. However, if factually correct context examples such as ``\texttt{\textcolor{red}{Rafael Nadal is not good at playing} \textcolor{orange}{basketball}}'' are provided before the question, the GPT-2 model may incorrectly respond with ``\texttt{\textcolor{orange}{basketball}}''.
Then, it is interesting to investigate whether such a phenomenon depends on different tasks and transformer architectures. To this end, we developed a class of context hijacking tasks and counted the number of context examples that led to incorrect outputs (see Figure \ref{fig:realworldexpr}). Our findings indicate that increasing the number of prepended context examples amplifies the effect on the transformer's prediction, making it more likely to generate incorrect outputs. Additionally, we observed that deeper transformer models exhibit higher robustness to context hijacking, requiring more prepended context examples to alter the model's output. Therefore, conducting a precise robustness analysis regarding context hijacking could provide valuable insights in understanding the architecture of the transformer model.


In this paper, we aim to develop a comprehensive theoretical analysis on the robustness of transformer against context hijacking. In particular, we follow the general design of many previous theoretical works \citep{olsson2022context,ahn2023transformers,frei2024trained} on the in-context learning of transformers, by considering the multi-layer linear transformer models for linear classification tasks, where the hijacking examples are designed as the data on the boundary but with an opposite label to the queried input. Starting from the view that the $L$-th transformer models can implement $L$-step gradient descent on the context examples, with an arbitrary initialization, we formulate the transformer training as finding the optimal multi-step gradient descent methods with respect to the learning rates and initialization. Then, we prove the optimal multi-step gradient strategy, and formulate the optimal learning rate and initialization as the function of the iteration number (i.e., model depth) and the context length. Furthermore, we deliver the theoretical analysis on the robustness based on the proved optimal gradient descent strategy, which shows that as the transformer become deeper, the corresponding more fine-grained optimization steps can be less affected by the hijacking examples, thus leading to higher robustness. This is well aligned with the empirical findings and validated by our numerical experiments. We summarize the main contributions of this paper as follows: 
\begin{itemize}[leftmargin=*,nosep]
    \item We develop the first theoretical framework to study the robustness of multi-layer transformer model against context hijacking, where the hijacked context example is designed as the data with the factually correct label but close to the prediction boundary. This is different from a very recent related work on the robustness of transformer \citep{anwar2024adversarial} that allows the context data to be arbitrarily perturbed, which could be factually incorrect.
    \item Based on the developed theoretical framework, we formulate the test robust accuracy of the transformer as a function with respect to the training context length, number of hijacked context examples, and the depth of the transformer model.  The key of our analysis is that we model the in-context learning mechanism of a well-trained multi-layer transformer as an optimized multi-step gradient descent, where the corresponding optimal initialization and learning rates can be precisely characterized. This could be independent interest to other problems that involve the gradient descent methods on linear problems.
    \item Based on the developed theoretical results, we demonstrate that deeper transformers are more robust because they are able to perform more fine-grained optimization steps on the context samples, which can potentially explain the practical observations of LLMs in the real world (see Figure \ref{fig:realworldexpr}). The theoretical results are well supported by synthetic numerical experiments in various settings.
\end{itemize}
\textbf{Notations.} Given two sequences $\{x_n\}$ and $\{y_n\}$, we denote $x_n = O(y_n)$ if there exist some absolute constant $C_1 > 0$ and $N > 0$ such that $|x_n|\le C_1 |y_n|$ for all $n \geq N$. Similarly, we denote $x_n = \Omega(y_n)$ if there exist $C_2 >0$ and $N > 0$ such that $|x_n|\ge C_2 |y_n|$ for all $n > N$. We say $x_n = \Theta(y_n)$ if $x_n = O(y_n)$ and $x_n = \Omega(y_n)$ both holds. Additionally, we denote $x_n = o(y_n)$ if, for any $\epsilon > 0$, there exists some $N(\epsilon) > 0$ such that $|x_n|\le \epsilon |y_n|$ for all $n \geq N(\epsilon)$, and we denote $x_n = \omega(y_n)$ if $y_n = o(x_n)$. We use $\tilde O(\cdot)$, $\tilde \Omega(\cdot)$, and $\tilde \Theta(\cdot)$ to hide logarithmic factors in these notations respectively. Finally, for any $n\in \NN_+$, we use $[n]$ to denote the set $\{1, 2, \cdots, n\}$.
\section{Related works}
\textbf{In-context learning via transformers.}
The powerful performance of transformers is generally believed to come from its in-context learning ability~\cite{brown2020language,chen2022meta,min2022metaicl,liu2023pre,xieexplanation}. A line of recent works study the phenomenon of in-context learning from both theoretical ~\cite{bai2024transformers,guotransformers,lintransformers,chen2024transformers, frei2024trained,huangcontext,siyu2024training,lifine} and empirical ~\cite{garg2022can,akyurek2022learning,li2023transformersa,raventos2024pretraining,pathaktransformers,panwarcontext,bhattamishraunderstanding,fu2023does,lee2024supervised} perspectives on diverse settings. ~\citet{brown2020language} first showed that GPT-3 can perform in-context learning. 
~\citet{chen2024transformers} studied the role of different heads within transformers in performing in-context learning focusing on the sparse linear regression setting. ~\citet{frei2024trained} 
studied the ability of one-layer linear transformers to perform in-context learning for 
linear classification tasks. 


\textbf{Mechanism interpretability of transformers.}
Among the various theoretical interpretations of transformers~\cite{friedman2024learning,yuntransformers,dehghaniuniversal,lindner2024tracr, pandit2021probing,perez2021attention, bills2023language,wei2022statistically,weiss2021thinking,zhoualgorithms, chen2024can}, one of the most widely studied theories is the ability of transformers to implement optimization algorithms such as gradient descent~\cite{von2023transformers,ahn2023transformers,zhang2024context,bai2024transformers,wumany,chengtransformers,akyureklearning,dai2023can,zhang2024trained}.
~\citet{von2023transformers} theoretically and empirically proved that transformers can learn in-context by implementing a single step of gradient descent per layer. 
~\citet{ahn2023transformers} theoretically analyzed that transformers can learn to implement preconditioned gradient descent for in-context learning.
~\citet{zhang2024context} considered ICL in the setting of linear regression with a non-zero mean Gaussian prior, a more general and common scenario where different tasks share a signal, which is highly relevant to our work.

\textbf{Robustness of transformers.}
The security issues of large language models have always attracted a great deal of attention~\cite{yao2024survey,liu2023prompt,perezignore,zou2023universal,apruzzese2023real}. However, most of the research focuses on jail-breaking black-box models~\cite{chowdhury2024breaking}, such as context-based adversarial attacks~\cite{kumar2023certifying,wei2023jailbreak,xullm,wang2023adversarial,zhu2023promptbench,cheng2024leveraging,wang2023robustness}. There is very little white-box interpretation work of attacks on the transformer, the foundation model of LLMs~\cite{qiang2023hijacking,baileyimage,he2024data,anwar2024adversarial,jiangllms}. 
~\citet{qiang2023hijacking} first considered attacking large language models during in-context learning, but they did not study the role of transformers in robustness. 
~\citet{jiangllms} proposed the phenomenon of context hijacking, which became the key motivation of our work. They analyzed this problem from the perspective of associative memory models instead of the in-context learning ability of transformers.

\section{Preliminaries}\label{sec:Preliminaries}

In this section, we will provide a detailed introduction to our setup of the context hijacking problem, including the data model,  transformer architecture, and evaluation metric.

\subsection{Data model}
To understand the mechanism of context hijacking phenomenon, we model it as a binary classification task, where the query-answer pair is modeled as the input-response pair ($(\xb, y)\in \RR^d \times \{\pm 1\}$). In particular, we present the definition of the data model as follows:


\begin{definition}[Data distribution]\label{def:train_data}
Let $\wb^*\in\RR^{d}$ be a vector drawn from a prior distribution on the $d$ dimensional unit sphere $\SSS^{d-1}$, denoted by $p_{\bbeta^*}(\cdot)$, where $\bbeta^*\in\SSS^{d-1}$ denotes the expected direction of $\wb^*$. 
    Then given the generated $\wb^*$, the data pair $(\xb,y)$ is generated as follows:  the feature vector is $\xb\sim \cN(\mathbf{0}_d, \Ib_d)$ and the corresponding label is $y=\mathrm{sign}(\la\wb^*,\xb\ra)$.

\end{definition}
Based on the data distribution of each instance, we then introduce the detailed setup of the in-context learning task in our work. In particular, we consider the setting that the transformer is trained on the data with clean context examples and evaluated on the data with hijacked context. 

\textbf{Training phase.} During the training phase, we are given $n$ clean context examples 
$\{(\xb_1,y_1),\dots,(\xb_n,y_n)\}$ and a query $\xb_{\mathrm{query}}$ with its label $y_{\mathrm{query}}$. In particular, here we mean the clean examples as the $\{(\xb_1,y_1),\dots,(\xb_n,y_n)\}$ are drawn from the same data distribution $\cD_{\wb^*}$ as $(\xb_{\mathrm{query}}, y_{\mathrm{query}})$. Then, the input data matrix for in-context learning is designed as follows:
\begin{align}
\Zb &= \begin{bmatrix} \xb_1  & \dots & \xb_n & \xb_{\mathrm{query}} \\ y_1  & \dots & y_n & 0 \end{bmatrix} \in \RR^{(d+1) \times (n+1)}. \label{eq:Z_train}
\end{align}
Here, to ensure that the dimension of $\xb_{\mathrm{query}}$ aligns with those of other input pair $(\xb_{i}, y_{i})$, we concatenate it with $0$ as a placeholder for the unknown label $y_{\mathrm{query}}$. Ideally, we anticipate that given the input $\Zb$, the output of the transformer model, denoted by $\hat y_{\mathrm{query}}$ can match the ground truth one. Moreover, we also emphasize that within each data matrix $\Zb$, the context examples and the queried data should be generated based on the same ground truth vector $\wb^*$, while for different input matrices, e.g., $\Zb$ and $\Zb'$, we allow their corresponding ground truth vectors could be different, which are i.i.d. drawn from the prior $p_{\bbeta^*}(\cdot)$.


The training data distribution simulates the pre-training data of the large language model. 
Unlike existing works \citep{ahn2023transformers,olsson2022context} where the prior of $\wb^*$ is assumed to have a zero mean, we consider a setting where $\wb^*$ has a non-zero mean (i.e., $\bbeta^*$). This approach is inspired by empirical observations (see Figure \ref{fig:realworldexpr}) that transformer models can perform accurate zero-shot predictions. Consequently, our model can encapsulate both memorization and in-context learning, where the former corresponds to recovering the mean of the prior distribution, i.e., $\bbeta^*$, and the latter aims to manipulate the $\{(\xb_1,y_1), \dots, (\xb_n,y_n)\}$ effectively. In contrast, existing works primarily focus on the latter, thereby failing to fully explain the interplay between memorization and in-context learning.


\textbf{Test phase.} During the test phase, context examples are designed based on the query input $\xb_{\mathrm{query}}$ to effectively execute the attack. Inspired by empirical observations (Figure \ref{fig:realworldexpr}) and prior experience with jailbreaking attacks \citep{anil2024many}, we choose to use repeated hijacking context examples during the test phase. Specifically, since the hijacked context should be factually correct, we consider data similar to the queried input but with a correct and opposite label of low confidence. Mathematically, this involves projecting $\xb_{\mathrm{query}}$ onto the classification boundary. To this end, given the target query data $(\xb_{\mathrm{query}}, y_{\mathrm{query}})$, we formalize the design of the hijacked context example as follows.
\begin{definition}[Hijacked context data]\label{def:hijackedcontext} Let $(\xb, y)$ be a input pair and $\wb^*$ be the corresponding ground truth vector. Additionally, denote $\xb_\perp$ as the projection of  $\xb$ on the boundary of classifier, i.e. $\xb_\perp = (\Ib_d - \wb^*(\wb^*)^\top)\cdot\xb$. 
Then, the query pair $(\xb_{\mathrm{query}}, y_{\mathrm{query}})$ is generated as $\xb_{\mathrm{query}} = \xb_\perp + \sigma \wb^*$ and $y_{\mathrm{query}} =\sign\big(\la\wb^*, \xb_{\mathrm{query}}\ra\big)=\sign(\sigma)$ with $\sigma$ being a random variable, and 
the hijacked context example is designed as $\xb_{\mathrm{hc}}=\xb_\perp$ and $y_{\mathrm{hc}}=-y_{\mathrm{query}}$. 
\end{definition}
Note that we pick $\la\xb_{\mathrm{hc}},\wb^*\ra=0$ to enforce hijacked context lies on the boundary of the classifier. A more rigorous design is to set $\xb_{\mathrm{hc}} = \xb_\perp - \eta\cdot y_{\mathrm{query}}\cdot \wb^*$ for some positive quantity $\eta$, where it can be clearly shown that $y_{\mathrm{hc}}=\text{sign}(\la\xb_{\mathrm{hc}},\wb^*\ra)=-y_{\mathrm{query}}$. Definition \ref{def:hijackedcontext} concerns the limiting regime by enforcing $\eta\rightarrow 0^+$.

Then, based on the above design, the input data matrix in the test phase is constructed as follows:
\begin{align}
\Zb^{\mathrm{hc}} = \begin{bmatrix} \xb_{\mathrm{hc}} &  \dots & \xb_{\mathrm{hc}} & \xb_{\mathrm{query}} \\ y_{\mathrm{hc}}  & \dots & y_{\mathrm{hc}}& 0 \end{bmatrix} \in \RR^{(d+1) \times (N+1)}.\label{eq:Z_test}
\end{align}
Here we use $N$ to denote the number of hijacked context examples. The example $(\xb_{\mathrm{hc}}, y_{\mathrm{hc}})$ can also be interpreted to the closest data to $\xb_{\mathrm{query}}$ but with a different label $-y_{\mathrm{query}}$, which principally has the ability to perturb the prediction of $\xb_{\mathrm{query}}$. Additionally, because the prediction is highly likely to be correct in the zero-shot regime (i.e., $N=0$ ), the prediction in the test phase can be viewed as a competition between model memorization and adversarial in-context learning. This dynamic is primarily influenced by the number of hijacked context examples.

\subsection{Transformer model}  Following the extensive prior theoretical works for transformer \citep{zhang2024trained, zhang2024context, chen2024transformers, frei2024trained, ahn2023transformers},  we consider linear attention-only transformers, a prevalent simplified structure to investigate the behavior of transformer models. In particular, We define an $L$-layer linear transformer $\mathtt{TF}$ as a stack of $L$ single-head linear attention-only layers. For the input matrix $\Zb_{i-1}\in \RR^{(d+1) \times (n+1)}$, the $i$-th single-head linear attention-only layer $\mathtt{TF}_i$ updates the input as follows:
\begin{flalign}\label{eq:Zi}
\Zb_{i}= \mathtt{TF}_i(\Zb_{i-1})=\Zb_{i-1}+\Pb_i\Zb_i \Mb(\Zb_{i-1}^{\top}\Qb_i\Zb_{i-1}),
\end{flalign}
where $\Mb:= \begin{pmatrix} \Ib_{n} & 0 \\ 0 & 0 \end{pmatrix} \in \RR^{(n+1) \times (n+1)}$ is the mask matrix. We design this architecture to constrain the model's focus to the first $n$ in-context examples. Moreover, the matrix $\Pb: = \Wb_{v} \in \RR^{(d+1) \times (d+1)}$ serves as the value matrix in the standard self-attention layer, while the matrix $\Qb := \Wb_{k}^{\top} \Wb_{q} \in \RR^{(d+1) \times (d+1)}$ consolidates the key matrix and query matrix. This mild re-parameterization has been widely considered in numerous recent theoretical works \citep{huangcontext, wangtransformers, tian2023scan, jelassi2022vision}. To adapt the transformer for solving the linear classification problem, we introduce an additional linear embedding layer $ \Wb_E \in \mathbb{R}^{(d+1) \times (d+1)}$. Then the output of the transformer $\mathtt{TF}$ is defined as
\begin{flalign}\label{def:ybar}
\widehat{y}_{\mathrm{query}}&=\mathtt{TF}(\Zb_0;\Wb_E, \{\Pb_\ell, \Qb_\ell\}_{\ell=1}^L) \nonumber\\
&=-[\mathtt{TF}_{L}\circ \cdots \circ \mathtt{TF}_{1} \circ \Wb_E(\Zb_0)]_{(d+1),(n+1)} \nonumber\\
&=-[\Zb_L]_{(d+1),(n+1)},
\end{flalign}
i.e. the negative of the $(d+1,n+1)$-th entry of $\Zb_L$, and this position is replaced by $0$ in the input $\Zb_0$. The reason for taking the minus sign here is to align with previous work~\cite{von2023transformers,shen2024anything}, which will be explained in Proposition \ref{def: prop5.1}.

%

\subsection{Evaluation metrics}
Based on the illustration regarding the transformer architecture, we first define the in-context learning risk of a $L$-layer model $\mathtt{TF}$ in the training phase. In particular, let $\cD_{\mathrm{tr}}$ be the distribution of the input data matrix $\Zb$ in \eqref{eq:Z_train} and the target $y_{\mathrm{query}}$, which covers the randomness of both $(\xb,y)$ and $\wb^*$, then the risk function in the training phase is defined as:
\begin{flalign}\label{eq:loss_tf}
\cR\left(\mathtt{TF}\right) := \mathbb{E}_{\Zb,y_{\mathrm{query}}\sim \cD_{\mathrm{tr}}} \big[\left( \mathtt{TF}(\Zb;\btheta) - y_{\mathrm{query}} \right)^2\big].
\end{flalign}
where $\btheta=\{\Wb_E, \{\Pb_\ell, \Qb_\ell\}_{\ell=1}^L\}$ denotes the collection of all trainable parameters of $\mathtt{TF}$. This risk function will be 
leveraged for training the transformer models (where we use the stochastic gradient in the experiments).

Additionally, in the test phase, let $\cD_{\mathrm{te}}$ be the distribution of the input data matrix $\Zb^{\mathrm{hc}}$ in \eqref{eq:Z_test} and the target $y_{\mathrm{query}}$, we consider the following population prediction error:
\begin{flalign}\label{eq:error_tf}
\cE\left(\mathtt{TF}\right) := \mathbb{P}_{\Zb^{\mathrm{hc}},y_{\mathrm{query}}\sim \cD_{\mathrm{te}}} \big[ \mathtt{TF}(\Zb^{\mathrm{hc}};\btheta)\cdot y_{\mathrm{query}} <0\big].
\end{flalign}



\section{Main theory}\label{sec:theory}
In this section, we present how we establish our theoretical analysis framework regarding the robustness of transformers against context hijacking. In summary,  we can briefly sketch our framework into the following several steps: 
\begin{itemize}[leftmargin=*,nosep]
    \item Step 1. We establish the equivalence between the $L$-layer transformers and $L$ steps gradient descent, converting the original problem of identifying well-trained transformers to the problem of finding the optimal parameters of gradient descent (i.e., initialization and learning rates).
    \item Step 2. We derive the optimal learning rates and initialization of gradient descent, revealing its relationship with the number of layers $L$ and training context length $n$. 
    \item Step 3. By formulating the classification error of a linear model obtained by $L$ steps gradient descent with optimal parameters on hijacking distribution $\cD_{\mathrm{te}}$, we characterize how the number of layers $L$, the training context length $n$ and test context length $N$ affect the robustness.
\end{itemize}
\subsection{Optimizing over in-context examples}
Inspired by a line of recent works \citep{zhang2024context, bai2024transformers,chen2024transformers, ahn2023transformers, olsson2022context} which connects the in-context learning of transformer with the gradient descent algorithm, we follow a similar approach by showing that, in the following proposition, multi-layer transformer can implement multi-step gradient descent, starting from any initialization, on the context examples.  



\begin{proposition}\label{def: prop5.1}
For any $L$-layer single-head linear transformer, let 
$\widehat{y}_{\mathrm{query}}^{(l)}$ be the output of the $l$-th layer of the transformer, i.e. the $(d+1,n+1)$-th entry of $\Zb_l$. Then, there exists a single-head linear transformer with $L$ layers such that $\widehat{y}_{\mathrm{query}}^{(l)} = -\la \wb_{\mathrm{gd}}^{(l)}, {\xb}_{\mathrm{query}} \ra$. Here, $\wb_{\mathrm{gd}}^{(l)}$'s are the parameter vectors obtained by the following gradient descent iterative rule and the initialization $\wb_{\mathrm{gd}}^{(0)}$ can be arbitrary:
\begin{flalign}\label{eq:gd_update}
&\wb_{\mathrm{gd}}^{(l+1)} = \wb_{\mathrm{gd}}^{(l)} - \bGamma_{l} \nabla \widetilde{L}(\wb_{\mathrm{gd}}^{(l)}), \quad \nonumber\\
&\text{where} \quad \widetilde{L}(\wb) = \frac{1}{2} \sum_{i=1}^{n} (\la\wb, \xb_i\ra - y_i)^2.
\end{flalign}
Here $\bGamma_{l}$ can be any $d\times d$ matrix.
\end{proposition}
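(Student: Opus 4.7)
The plan is to prove the proposition by explicit construction. I would pick parameters $(\Wb_E, \{\Pb_\ell, \Qb_\ell\}_{\ell=1}^L)$ with a prescribed block structure and verify by induction on $\ell$ that the $(d{+}1,n{+}1)$-th entry of $\Zb_\ell$ equals $\la \wb_{\mathrm{gd}}^{(\ell)},\xb_{\mathrm{query}}\ra$, so that by \eqref{def:ybar} $\widehat{y}_{\mathrm{query}}^{(\ell)} = -\la \wb_{\mathrm{gd}}^{(\ell)},\xb_{\mathrm{query}}\ra$. Concretely, the ansatz is
\begin{align*}
\Wb_E &= \begin{pmatrix} \Ib_d & \mathbf{0} \\ (\wb_{\mathrm{gd}}^{(0)})^\top & -1 \end{pmatrix}, \\
\Pb_\ell &= \begin{pmatrix} \mathbf{0}_{d\times d} & \mathbf{0} \\ \mathbf{0}^\top & -1 \end{pmatrix}, \\
\Qb_\ell &= \begin{pmatrix} \bGamma_{\ell-1}^\top & \mathbf{0} \\ \mathbf{0}^\top & 0 \end{pmatrix},
\end{align*}
so that $\wb_{\mathrm{gd}}^{(0)}$ encodes the arbitrary initialization and the free block $\bGamma_{\ell-1}$ is the preconditioner of the step $\wb_{\mathrm{gd}}^{(\ell-1)}\to\wb_{\mathrm{gd}}^{(\ell)}$ in \eqref{eq:gd_update}. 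The vanishing top-$d$ rows of every $\Pb_\ell$ immediately freeze the top $d$ entries of each column of $\Zb_\ell$ throughout the layers at their initial values.

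The inductive invariant I would maintain is that, for each $\ell\in\{0,\dots,L\}$, the $j$-th column of $\Zb_\ell$ has top-$d$ block $\xb_j$ (with the convention $\xb_{n+1}:=\xb_{\mathrm{query}}$) and last entry $\la \wb_{\mathrm{gd}}^{(\ell)},\xb_j\ra - y_j$ (with $y_{n+1}:=0$). The base case $\ell=0$ follows by applying $\Wb_E$ to the raw input in \eqref{eq:Z_train}. For the inductive step, the block form of $\Qb_\ell$ reduces $\Zb_{\ell-1}^\top \Qb_\ell \Zb_{\ell-1}$ to the matrix with $(j,k)$-entry $\xb_j^\top \bGamma_{\ell-1}^\top \xb_k$, the mask $\Mb$ discards its $(n{+}1)$-st row, and $\Pb_\ell$ keeps only the last output row, giving the column-wise increment
\[
\Delta [\Zb_\ell]_{d+1,k} = -\sum_{j=1}^n \big(\la \wb_{\mathrm{gd}}^{(\ell-1)}, \xb_j\ra - y_j\big)\, \xb_j^\top \bGamma_{\ell-1}^\top \xb_k
\]
for every $k\in[n+1]$. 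A direct comparison with \eqref{eq:gd_update} identifies this increment with $\la \wb_{\mathrm{gd}}^{(\ell)} - \wb_{\mathrm{gd}}^{(\ell-1)}, \xb_k\ra$, which closes the induction.

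Setting $\ell=L$ and reading off the $(d{+}1,n{+}1)$-th entry then yields the proposition; the initialization is free through $\Wb_E$, and each preconditioner is free as the top-left block of $\Qb_\ell$. I do not anticipate any deep obstacle, since the whole argument is block linear algebra on a $(d{+}1)\times(n{+}1)$ matrix. The one mildly subtle point is sign and transposition bookkeeping: the attention layer naturally produces $\xb_j^\top \bGamma^\top \xb_k$, the GD update naturally produces $\la \bGamma \xb_j, \xb_k\ra$, and the transformer output carries a minus sign by definition \eqref{def:ybar}. Reconciling these three conventions is precisely what forces the explicit transpose in $\Qb_\ell$, the $-1$ in the bottom-right of $\Pb_\ell$, and the residual convention $\la \wb_{\mathrm{gd}}^{(\ell)},\xb_j\ra - y_j$ stored in the last row of $\Zb_\ell$.
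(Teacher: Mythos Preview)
Your approach is correct and essentially identical to the paper's: both construct $\Wb_E,\Pb_\ell,\Qb_\ell$ with the same block pattern, freeze the top $d$ rows, maintain an inductive invariant that the last row of $\Zb_\ell$ stores the residuals $\pm(\la\wb_{\mathrm{gd}}^{(\ell)},\xb_j\ra-y_j)$, and identify the attention increment with the preconditioned GD step. The only discrepancy is a global sign: the proposition defines $\widehat{y}_{\mathrm{query}}^{(l)}$ as the $(d{+}1,n{+}1)$-entry of $\Zb_l$ itself (not its negative as in \eqref{def:ybar}, which applies only to the final output), so your construction yields $\widehat{y}_{\mathrm{query}}^{(l)}=+\la\wb_{\mathrm{gd}}^{(l)},\xb_{\mathrm{query}}\ra$; flipping the signs in $\Wb_E$ and $\Pb_\ell$ (as the paper does, taking $(-\wb_{\mathrm{gd}}^{(0)},+1)$ and $+1$ respectively, with $-\bGamma_\ell$ in $\Qb_\ell$) fixes this immediately.
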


As $\bGamma_{l}$ could be any $d\times d$ matrix, 
Proposition~\ref{def: prop5.1} demonstrates that the output of the $L$-layer transformers is equivalent to that of a linear model trained via $L$-steps of full-batch preconditioned gradient descent on the context examples, with $\{\bGamma_{l}\}_{l=0}^{L-1}$ being the learning rates. 
This suggests that each $L$-layer transformer defined in~\eqref{def:ybar}, with different parameters, can be viewed as an optimization process of a linear model characterized by a distinct set of initialization and learning rates $\{\wb_{\mathrm{gd}}^{(0)}, \bGamma_{0}, \ldots, \bGamma_{L-1}\}$. 
Therefore, it suffice to directly find the optimal parameters of the gradient descent process, without needing to infer the specific parameters of the well-trained transformers.
Among all related works presenting similar conclusions that transformers can implement gradient descent, our result is general as we prove that transformers can implement multi-step gradient descent from any initialization. In comparison, for example, \citet{zhang2024context} shows that a single-layer transformer with MLP can implement one-step gradient descent from non-zero initialization. \citet{ahn2023transformers} demonstrate that linear transformers can implement gradient descent, but only from $\mathbf{0}$ initialization. 

\subsection{Optimal multi-step gradient descent}

Based on the discussion in the previous section, Proposition~\ref{def: prop5.1} successfully transforms the original problem of identifying the parameters of well-trained transformers into the task of finding the optimal learning rates and initialization for the gradient descent process~\eqref{eq:gd_update}. In this section, we present our conclusions regarding these optimal parameters. As we consider optimizing over the general training distribution $\cD_{\mathrm{tr}}$, where the tokens $\xb_i$'s follow the isotropic distribution, it follows that the updating step size should be equal in each direction from the perspective of expectation. Therefore we consider the case $\bGamma_l=\alpha_l\Ib_d$ to simplify the problem, with $\alpha_l$ being a scalar for all $l\in \{0, \ldots, L-1\}$. In the following, we focus on the optimal set of parameters $\{\wb_{\mathrm{gd}}^{(0)}, \alpha_0, \ldots, \alpha_{L-1}\}$. Specifically, we consider the population loss for $\wb_{\mathrm{gd}}^{(L)}$ as
\begin{align*}
\cR(\wb_{\mathrm{gd}}^{(L)}):= \mathbb{E}_{T, \wb^* \sim\cD_{\mathrm{tr}}} \big[\big( \la\wb_{\mathrm{gd}}^{(L)}, \xb_{\mathrm{query}}\ra - y_{\mathrm{query}}\big)^2\big],
\end{align*}
where $T =\{(\xb_1, y_1),\ldots, (\xb_n, y_n), (\xb_{\mathrm{query}}, y_{\mathrm{query}})\}$ is the set of all classification pairs \footnote{Here we slightly abuse the notation of $\cD_{\mathrm{tr}}$ to denote both the distribution of $\Zb, \wb_{\mathrm{query}}$ and $T, \wb^*$.}. This definition resembles $\cR(\mathtt{TF})$ defined in~\eqref{eq:loss_tf}. We attempt to find the $\wb_{\mathrm{gd}}^{(L)}$ that minimizes this population loss, along with the corresponding learning rates $\{\alpha_l\}_{l=0}^L$ and initialization $\wb_{\mathrm{gd}}^{(0)}$, which can generate this $\wb_{}$ via gradient descent. 
We first present the following proposition demonstrating that there exists commutative invariance among the learning rates $\{\alpha_l\}_{l=0}^L$ for producing $\wb_{\mathrm{gd}}^{(L)}$.

\begin{proposition}\label{prop:commutative invariance}
Let $\{\alpha_0, \alpha_1, \ldots, \alpha_{L-1}\}$ be a set of learning rates, and $\{\alpha_0', \alpha_1', \ldots, \alpha_{L-1}'\}$ be another set of learning rates that is a permutation of $\{\alpha_0, \alpha_1, \ldots, \alpha_{L-1}\}$, meaning both sets contain the same elements, with the only difference being the order of these elements. With $\wb_{\mathrm{gd}}^{(L)}\in \RR^{d}$ denoting as the parameters achieved by learning rates $\{\alpha_0, \alpha_1, \ldots, \alpha_{L-1}\}$ and $\wb_{\mathrm{gd'}}^{(L)}\in \RR^{d}$ as the parameters achieved by learning rates $\{\alpha_0', \alpha_1', \ldots, \alpha_{L-1}'\}$ from the same initialization $\wb_{\mathrm{gd}}^{(0)}$, it holds that $\wb_{\mathrm{gd}}^{(L)} = \wb_{\mathrm{gd'}}^{(L)}$.
\end{proposition}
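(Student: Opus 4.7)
The plan is to recognize that each gradient-descent step on the quadratic loss $\widetilde{L}$ is an affine map of $\wb$ determined by a single scalar $\alpha_l$, and that any two such affine maps commute because they are built from polynomials in the same data-dependent matrix.

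First I would write the gradient in closed form. Setting $A := \sum_{i=1}^n \xb_i\xb_i^\top$ and $\mathbf{b} := \sum_{i=1}^n y_i\xb_i$, the update \eqref{eq:gd_update} with $\bGamma_l = \alpha_l\Ib_d$ becomes the affine transformation
\begin{equation*}
T_{\alpha}(\wb) := (\Ib_d - \alpha A)\wb + \alpha\mathbf{b},
\end{equation*}
so that $\wb_{\mathrm{gd}}^{(L)} = (T_{\alpha_{L-1}} \circ \cdots \circ T_{\alpha_0})(\wb_{\mathrm{gd}}^{(0)})$ and, analogously, $\wb_{\mathrm{gd}'}^{(L)} = (T_{\alpha_{L-1}'} \circ \cdots \circ T_{\alpha_0'})(\wb_{\mathrm{gd}}^{(0)})$. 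Note that $A$ and $\mathbf{b}$ are fixed by the context examples and do not depend on the $\alpha_l$'s.

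Next I would establish the key commutation identity $T_\alpha \circ T_\beta = T_\beta \circ T_\alpha$ by direct expansion:
\begin{equation*}
T_\alpha\bigl(T_\beta(\wb)\bigr) = (\Ib_d - \alpha A)(\Ib_d - \beta A)\,\wb + \bigl[(\alpha+\beta)\Ib_d - \alpha\beta A\bigr]\mathbf{b}.
\end{equation*}
The right-hand side is symmetric in $\alpha$ and $\beta$: the factors $\Ib_d - \alpha A$ and $\Ib_d - \beta A$ are polynomials in the common matrix $A$ and hence commute, and the coefficient of $\mathbf{b}$ is manifestly symmetric in $\alpha,\beta$.

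Finally, I would close the argument by invoking the elementary fact that the symmetric group on $L$ elements is generated by adjacent transpositions: any permutation of $(\alpha_0, \ldots, \alpha_{L-1})$ can be realized by a finite sequence of swaps of neighboring entries, each of which preserves the composition $T_{\alpha_{L-1}} \circ \cdots \circ T_{\alpha_0}$ by the commutation identity above. Starting from the common initialization $\wb_{\mathrm{gd}}^{(0)}$ then yields $\wb_{\mathrm{gd}}^{(L)} = \wb_{\mathrm{gd}'}^{(L)}$, as claimed. There is no substantive obstacle in this plan; the only point worth stressing is that folding the entire update into the single affine operator $T_{\alpha_l}$, rather than tracking the linear and inhomogeneous pieces separately, is what makes the commutation transparent and bypasses any need to verify symmetry of an unrolled inhomogeneous coefficient by hand (which would otherwise amount to rediscovering the elementary symmetric polynomials in the $\alpha_l$'s).
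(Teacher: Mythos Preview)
Your proof is correct but takes a genuinely different route from the paper. The paper establishes a closed-form expansion (Lemma~\ref{lemma:iterative_induction}) by induction, writing $\wb_{\mathrm{gd}}^{(L)}$ as a polynomial in $\hat\bSigma=\sum_i\xb_i\xb_i^\top$ whose coefficients are exactly the elementary symmetric polynomials $A_{L,k}$ in $\{\alpha_0,\ldots,\alpha_{L-1}\}$; permutation invariance then follows because those symmetric polynomials depend only on the multiset of learning rates. You instead package each step as the affine map $T_\alpha$, check the pairwise commutation $T_\alpha\circ T_\beta=T_\beta\circ T_\alpha$ by a one-line expansion, and finish via adjacent transpositions generating $S_L$. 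Your argument is shorter and more self-contained for this proposition alone, and your parenthetical remark about ``rediscovering the elementary symmetric polynomials'' is precisely what the paper does. The payoff of the paper's longer route is that the explicit closed form is reused downstream (notably in the proof of Theorem~\ref{thm:training_loss_bound}), so the extra work is amortized; your approach would need that expansion to be derived separately if one continued with the rest of the analysis.
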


Proposition~\ref{prop:commutative invariance} implies that the learning rates at different steps contribute equally to the overall optimization process. Consequently, we will consider a consistent learning rate $\alpha$ through the entire gradient descent procedure, which significantly reduces the difficulty of analysis and does not incur any loss of generality. Now we are ready to present our main results regarding the derivation of the optimal parameters $\alpha$ and $\wb_{\mathrm{gd}}^{(0)}$.

\begin{figure*}
    \centering
    \includegraphics[width=1\linewidth]{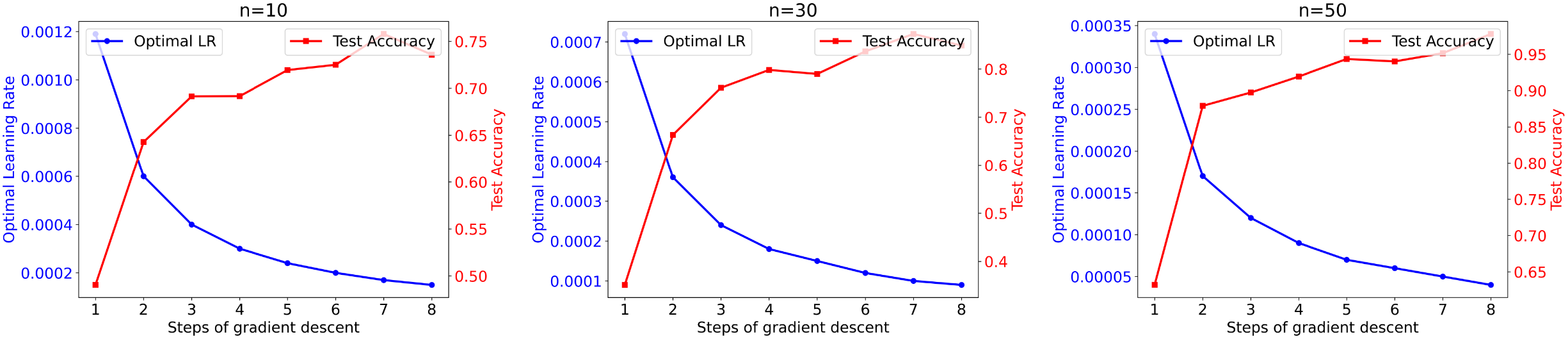}
    \vskip -0.in
    \caption{Gradient descent experiments using a single-layer neural network. We use grid search to obtain the optimal learning rate for different training context lengths $n$ and different steps of gradient descent $L$. Then we use the corresponding optimal learning rate to perform multi-step gradient descent optimization on the test dataset. The results show that longer training context lengths and more gradient descent steps lead to smaller optimal learning rate and better optimization.}
    \label{fig:gd_gridsearch}
    \vskip -0.1in
\end{figure*}

\begin{theorem}\label{thm:training_loss_bound}
    For training distribution $\cD_{\mathrm{tr}}$ in Definition~\ref{def:train_data}, suppose that the training context length $n$ is sufficiently large such that $n\geq \tilde\Omega(\max\{d^2, dL\})$. Additionally, suppose that the perturbation of $\wb^*$ around its expectation $\bbeta^*$ is smaller than $\frac{\pi}{2}$, i.e. $\la\wb^*, \bbeta^*\ra>0$. Based on these assumptions, the optimal learning rate $\alpha$ and initialization $\wb_{\mathrm{gd}}^{(0)}$, i.e. $\alpha, \wb_{\mathrm{gd}}^{(0)} =\arg\min_{\alpha, \wb_{\mathrm{gd}}^{(0)}}\cR(\wb_{\mathrm{gd}}^{(L)})$, take the value as follows:
    \begin{align*}
        \alpha &= \tilde\Theta\bigg(\frac{1}{nL}\bigg); \qquad \wb_{\mathrm{gd}}^{(0)} = c\bbeta^*,
    \end{align*}
    where $c$ is an absolute constant.
\end{theorem}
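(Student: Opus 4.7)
The plan is to combine Propositions \ref{def: prop5.1} and \ref{prop:commutative invariance} to reduce the task to finding a single scalar $\alpha$ and a single vector $\wb_{\mathrm{gd}}^{(0)}$ that minimize $\cR(\wb_{\mathrm{gd}}^{(L)})$, then unroll the iteration \eqref{eq:gd_update} in closed form. Letting $\Xb\in\RR^{n\times d}$ collect the context features and $\wb_{\mathrm{ls}} := (\Xb^\top\Xb)^{-1}\Xb^\top\yb$ denote the least-squares solution on the context pairs, the update gives
\begin{align*}
\wb_{\mathrm{gd}}^{(L)} = (\Ib_d - \alpha\Xb^\top\Xb)^L\bigl(\wb_{\mathrm{gd}}^{(0)} - \wb_{\mathrm{ls}}\bigr) + \wb_{\mathrm{ls}}.
\end{align*}
This decouples the initialization from the data term and motivates the reparametrization $\gamma := (1 - \alpha n)^L$, in terms of which the iterate should behave like the convex combination $\gamma\,\wb_{\mathrm{gd}}^{(0)} + (1-\gamma)\,\wb_{\mathrm{ls}}$ once the random matrices concentrate.

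Next I would make the concentration rigorous. Standard Gaussian covariance concentration gives $\|\Xb^\top\Xb/n - \Ib_d\|_{\mathrm{op}} = \tilde O(\sqrt{d/n})$, and combining the identity $\mathbb{E}[\mathrm{sign}(\la\wb^*,\xb\ra)\,\xb\mid\wb^*] = \sqrt{2/\pi}\,\wb^*$ with vector concentration yields
\begin{align*}
\wb_{\mathrm{ls}} = \sqrt{2/\pi}\,\wb^* + \bxi, \qquad \mathbb{E}[\bxi\mid\wb^*]=\mathbf{0}_d,\quad \mathbb{E}\bigl[\|\bxi\|_2^2\mid\wb^*\bigr]=\tilde\Theta(d/n).
\end{align*}
Writing $\Xb^\top\Xb = n\Ib_d + \Eb$ with $\|\Eb\|_{\mathrm{op}} = \tilde O(\sqrt{nd})$ and expanding $(\Ib_d-\alpha\Xb^\top\Xb)^L = ((1-\alpha n)\Ib_d - \alpha\Eb)^L$ binomially around the scalar part, the $k$-th term has operator norm at most $\binom{L}{k}(\alpha\|\Eb\|_{\mathrm{op}})^{k}|1-\alpha n|^{L-k} = \tilde O((dL/n)^{k/2})$ when $\alpha\asymp 1/(nL)$; a parallel estimate controls the geometric sum $\sum_{l=0}^{L-1}(\Ib_d-\alpha\Xb^\top\Xb)^l$. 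The two regime hypotheses enter here: $n\geq\tilde\Omega(d^2)$ makes the single-step spectral error negligible against the signal, and $n\geq\tilde\Omega(dL)$ ensures the $L$ levels of compounding remain small, so that with negligible error $\wb_{\mathrm{gd}}^{(L)} = \gamma\,\wb_{\mathrm{gd}}^{(0)} + (1-\gamma)\,\wb_{\mathrm{ls}}$.

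Finally I would decompose $\wb_{\mathrm{gd}}^{(0)} = c\bbeta^* + \wb_\perp$ with $\wb_\perp\perp\bbeta^*$, set $\mu := \mathbb{E}\la\bbeta^*,\wb^*\ra>0$ (which is positive by the hypothesis $\la\wb^*,\bbeta^*\ra>0$ and the definition of $\bbeta^*$ as the expected direction of $\wb^*$), and substitute into $\cR(\wb_{\mathrm{gd}}^{(L)}) = \mathbb{E}\|\wb_{\mathrm{gd}}^{(L)}\|_2^2 - 2\sqrt{2/\pi}\,\mathbb{E}\la\wb_{\mathrm{gd}}^{(L)},\wb^*\ra + 1$. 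A direct computation using $\mathbb{E}[\wb^*]=\mu\bbeta^*$ and the step-3 representation of $\wb_{\mathrm{ls}}$ yields
\begin{align*}
\cR(\wb_{\mathrm{gd}}^{(L)}) = \gamma^2\|\wb_\perp\|_2^2 + \gamma^2\bigl(c-\sqrt{2/\pi}\,\mu\bigr)^2 + \tfrac{2}{\pi}(1-\mu^2)\gamma^2 + \tilde\Theta(d/n)\cdot(1-\gamma)^2 + \text{const},
\end{align*}
which is separable. Setting each non-negative term to its minimum gives $\wb_\perp=\mathbf{0}_d$, $c=\sqrt{2/\pi}\,\mu$ (an absolute constant depending only on $p_{\bbeta^*}$), and $\gamma^\star = \tilde\Theta(d/n)\big/\bigl[\tfrac{2}{\pi}(1-\mu^2)+\tilde\Theta(d/n)\bigr] = \tilde\Theta(d/n)$. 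Inverting the reparametrization, $\alpha n = 1-(\gamma^\star)^{1/L}$: for $L=1$ this is $\Theta(1)$, and for larger $L$ it equals $\log(1/\gamma^\star)/L\cdot(1+o(1)) = \tilde\Theta(1/L)$, so $\alpha = \tilde\Theta\bigl(1/(nL)\bigr)$ in all regimes.

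The main technical obstacle is the matrix-power control in the second paragraph: a naive Neumann/telescoping bound on $(\Ib_d-\alpha\Xb^\top\Xb)^L$ loses a factor of $L$ at each stage, so one must keep both the binomial coefficient and the spectral norm in the expansion and verify that every higher-order term is dominated by the signal scale $\tfrac{2}{\pi}(1-\mu^2)$; this is exactly what forces the two hypotheses $n\geq\tilde\Omega(d^2)$ and $n\geq\tilde\Omega(dL)$ rather than the weaker $n\gg d$ that would suffice for one-step concentration. A secondary subtlety is ruling out alternative global minima, which is handled by the fact that the $(\gamma,c,\wb_\perp)$-reparametrization turns $\cR$ into a manifestly non-negative sum of squares whose minimizer is unique.
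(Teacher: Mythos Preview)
Your proposal is correct and lands on the same bias--variance balance as the paper: a term of size $(1-\alpha n)^{2L}\cdot\EE\|\wb_{\mathrm{gd}}^{(0)}-\sqrt{2/\pi}\,\wb^*\|_2^2$ against a data-noise term of order $\tilde\Theta(d/n)$, whose tradeoff yields $\alpha=\tilde\Theta(1/(nL))$ and $\wb_{\mathrm{gd}}^{(0)}\parallel\bbeta^*$.

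The execution differs in two places. First, the paper pins down the direction of $\wb_{\mathrm{gd}}^{(0)}$ \emph{exactly}, before any concentration: by rotational invariance one has $\EE[(\Ib_d-\alpha\hat\bSigma)^{2L}]=c_1\Ib_d$ and $\EE\big[(\Ib_d-\alpha\hat\bSigma)^{l}\sum_i y_i\xb_i\big]=c_2\bbeta^*$, so $\cR$ is an exact quadratic in $\wb_{\mathrm{gd}}^{(0)}$ minimized on the $\bbeta^*$-axis for \emph{every} $\alpha$. Your argument deduces $\wb_\perp=\mathbf{0}$ only from the $\gamma$-approximated risk, which identifies the minimizer of the surrogate rather than of $\cR$ itself. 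Second, the paper never introduces $\wb_{\mathrm{ls}}$: it uses the telescoping identity $\Ib_d-(\Ib_d-\alpha\hat\bSigma)^L=\alpha\sum_{l<L}(\Ib_d-\alpha\hat\bSigma)^l\hat\bSigma$ to split $\wb_{\mathrm{gd}}^{(L)}-\sqrt{2/\pi}\wb^*$ into an initialization piece and a data piece, applies the crude inequality $(a+b)^2\le 2a^2+2b^2$, and bounds the data piece by $\tilde O(\alpha dL)$ directly. Your least-squares anchor and $\gamma$-reparametrization give a cleaner separable quadratic and make the role of the variance of $\wb^*$ (your $1-\mu^2$ factor) explicit, at the cost of one extra technicality: the claim $\EE[\bxi\mid\wb^*]=\mathbf{0}_d$ is not literally true because $(\Xb^\top\Xb)^{-1}$ and $\Xb^\top\yb$ are dependent, but by symmetry the conditional mean is $c'\wb^*$ with $|c'|=\tilde O(\sqrt{d/n})$, so the resulting cross term is lower order and harmless.
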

Theorem~\ref{thm:training_loss_bound} clearly identifies the optimal learning rate $\alpha$ and initialization $\wb_{\mathrm{gd}}^{(0)}$. Specifically, it shows that the optimal initialization $\wb_{\mathrm{gd}}^{(0)}$ aligns the direction of the expectation $\bbeta^*$, with its length independent of the number of steps $L$, and the context length $n$. Such a conclusion complies with our intuitions as the initialization $\wb_{\mathrm{gd}}^{(0)}$ represents the memory of large language models, which is not dependent on the task-specific context examples. In contrast, the optimal learning rate $\alpha$ is inversely related to both $n$ and $L$. This suggests that in both cases: (i) with more in-context examples; and (ii) with more layers, the output of pre-trained transformers will equal to that of a more fine-grained gradient descent process using a smaller learning rate. Generally, a small-step strategy ensures the convergence of the objective, highlighting the potential benefits of deeper architectures and training inputs with longer context. 

\subsection{Robustness against context hijacking}
The previous two subsections illustrate that for any input with context examples, we can obtain the corresponding prediction for that input from the well-trained transformers by applying gradient descent with the optimal parameters we derived in Theorem~\ref{thm:training_loss_bound}. As we model $\cD_{\mathrm{te}}$ the distribution of hijacking examples, to examine the robustness of $L$-layer transformers against hijacking, we only need to check whether the linear model achieved by $L$-step gradient on $(\xb_{\mathrm{hc}}, y_{\mathrm{hc}})$ can still conduct successful classification on $\xb_{\mathrm{query}}$. 
Specifically, we consider the classification error of the parameter vector $\tilde\wb_{\mathrm{gd}}^{(L)}$ as, 
\begin{align*}
    \cE(\tilde\wb_{\mathrm{gd}}^{(L)}):= \mathbb{P}_{T, \wb^*\sim\cD_{\mathrm{te}}} \big(y_{\mathrm{query}}\cdot \la\tilde\wb_{\mathrm{gd}}^{(L)}, \xb_{\mathrm{query}}\ra  <0\big),
\end{align*}
where $T=\{(\xb_{\mathrm{hc}}, y_{\mathrm{hc}}), (\xb_{\mathrm{query}}, y_{\mathrm{query}})\}$, and $\tilde\wb_{\mathrm{gd}}^{(L)}$ is obtained by implementing gradient descent on $(\xb_{\mathrm{hc}}, y_{\mathrm{hc}})$ with $L$ steps and the optimal $\alpha$ and $\wb_{\mathrm{gd}}^{(0)}$. Similar to the previous result, 
$\cE(\tilde\wb_{\mathrm{gd}}^{(L)})$ is identical to $\cE(\mathtt{TF})$ defined in~\eqref{eq:error_tf}. Based on these preliminaries, we are ready to present our results regarding the robustness against context hijacking. We first introduce the following lemma illustrating that when the context length of hijacking examples is small, we hardly observe the label flipping phenomenons of the prediction from well-trained transformers. 
\begin{lemma}\label{lemma:test_I}
    Assume that all assumptions in Theorem~\ref{thm:training_loss_bound} still hold.  Additionally, assume that the length of hijacking examples $N$ is small such that $N\leq \tilde O\big(\frac{n}{d^{3/2}}\big)$ and $\sigma$ follows any continuous distribution. Based on these assumptions, it holds that 
    \begin{align*}
        \cE(\tilde\wb_{\mathrm{gd}}^{(L)}) \leq \cE(\wb_{\mathrm{gd}}^{(0)}) +  o(1).
    \end{align*}
\end{lemma}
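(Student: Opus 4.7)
The plan is to leverage Proposition~\ref{def: prop5.1} and Theorem~\ref{thm:training_loss_bound} to reduce the prediction on the hijacked input $\Zb^{\mathrm{hc}}$ to $L$ steps of gradient descent on $N$ identical copies of $(\xb_{\mathrm{hc}}, y_{\mathrm{hc}})$, starting from $\wb_{\mathrm{gd}}^{(0)}=c\bbeta^*$ with learning rate $\alpha=\tilde\Theta(1/(nL))$. Because all context tokens coincide, the empirical loss collapses to $\widetilde L(\wb)=\tfrac{N}{2}(\la\wb,\xb_{\mathrm{hc}}\ra-y_{\mathrm{hc}})^2$, so every update is a scalar multiple of $\xb_{\mathrm{hc}}$. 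Writing $z_l:=\la\tilde\wb_{\mathrm{gd}}^{(l)},\xb_{\mathrm{hc}}\ra$ and $\beta:=\alpha N\|\xb_{\mathrm{hc}}\|^2$, the iterate decouples into the one-dimensional affine recursion $z_{l+1}-y_{\mathrm{hc}}=(1-\beta)(z_l-y_{\mathrm{hc}})$, whose closed form is $z_L-z_0=\bigl(1-(1-\beta)^L\bigr)(y_{\mathrm{hc}}-z_0)$.

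I would then quantify $\beta$ and $z_0$ stochastically. Since $\xb_{\mathrm{hc}}=(\Ib_d-\wb^*(\wb^*)^\top)\xb$ is a standard Gaussian projected onto a $(d-1)$-dimensional subspace, standard concentration gives $\|\xb_{\mathrm{hc}}\|^2=\Theta(d)$ with high probability; combined with $\alpha=\tilde\Theta(1/(nL))$ and $N\le\tilde O(n/d^{3/2})$, this yields $\beta\le\tilde O(1/(L\sqrt d))$ and hence $1-(1-\beta)^L\le L\beta=\tilde O(1/\sqrt d)=o(1)$. Moreover $z_0=c\la\bbeta^*,\xb_\perp\ra$ is a centered Gaussian of variance at most $c^2$, so $|z_0|=\tilde O(1)$ w.h.p. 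Combining these, $|z_L-z_0|\le L\beta\,(1+|z_0|)=o(1)$ on a high-probability event of $(\xb,\wb^*)$.

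The key structural observation is that $\xb_{\mathrm{hc}}=\xb_\perp$ is orthogonal to $\wb^*$ by construction, so every gradient step is orthogonal to $\wb^*$ and the component along $\wb^*$ is preserved: $\la\tilde\wb_{\mathrm{gd}}^{(L)},\wb^*\ra=c\la\bbeta^*,\wb^*\ra$. Plugging in $\xb_{\mathrm{query}}=\xb_\perp+\sigma\wb^*$ yields the clean identity
\[
\la\tilde\wb_{\mathrm{gd}}^{(L)},\xb_{\mathrm{query}}\ra-\la\wb_{\mathrm{gd}}^{(0)},\xb_{\mathrm{query}}\ra=z_L-z_0,
\]
so the two decision statistics differ by only $o(1)$ on the high-probability event. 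To turn this into the claimed bound, I would condition on $(\xb,\wb^*)$ and take the probability over $\sigma$: the symmetric difference of the two sign events $\{y_{\mathrm{query}}(z_0+c\sigma\la\bbeta^*,\wb^*\ra)<0\}$ and $\{y_{\mathrm{query}}(z_L+c\sigma\la\bbeta^*,\wb^*\ra)<0\}$ is contained in a $\sigma$-interval of length $|z_L-z_0|/(c\la\bbeta^*,\wb^*\ra)=o(1)$. The uniform continuity of the CDF of any continuous random variable (valid on all of $\mathbb{R}$ because $F(\pm\infty)\in\{0,1\}$) then makes the probability of such a thin interval $o(1)$ uniformly in its location. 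Integrating back and absorbing the rare bad events delivers $\cE(\tilde\wb_{\mathrm{gd}}^{(L)})\le\cE(\wb_{\mathrm{gd}}^{(0)})+o(1)$.

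The main obstacle I anticipate is this last step: converting the deterministic $o(1)$ perturbation of the decision statistic into an $o(1)$ additive misclassification probability when $\sigma$ is merely assumed continuous (no density or moment assumption). This requires uniform continuity of the CDF together with a careful union bound absorbing the rare high-norm events for $z_0$ and $\|\xb_{\mathrm{hc}}\|^2$. One must also verify that $c\la\bbeta^*,\wb^*\ra$ stays bounded away from zero so the length of the buffer interval is not inflated; this should follow from the prior assumption $\la\wb^*,\bbeta^*\ra>0$ in Theorem~\ref{thm:training_loss_bound} together with suitable control on the support of the prior $p_{\bbeta^*}$.
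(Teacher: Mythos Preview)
Your proposal is correct and follows essentially the same route as the paper: derive the closed-form iterate on the repeated hijacked example (your scalar recursion is exactly the content of the paper's Lemma~\ref{lemma:test_closed_form}), use the concentration $\|\xb_\perp\|_2^2=\Theta(d)$ and $\alpha=\tilde\Theta(1/(nL))$ to show the perturbation of the decision statistic is $o(1)$, and then compare the two misclassification probabilities via the CDF of $\sigma$.

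Two small differences are worth noting. First, the paper controls the CDF difference by a first-order Taylor expansion involving $F_\sigma'$, which tacitly assumes a (bounded) density; your uniform-continuity argument is more faithful to the stated hypothesis that $\sigma$ is merely continuous, and avoids that extra smoothness. Second, you use the sharper bound $|z_0|=|c\la\bbeta^*,\xb_\perp\ra|=\tilde O(1)$, whereas the paper only invokes $\la\xb_\perp,\bbeta^*\ra=\tilde O(\sqrt d)$ from Lemma~\ref{lemma:concentration_III}; this tightens the final rate from $\tilde O(Nd^{3/2}/n)$ to $\tilde O(Nd/n)$, which under $N\le\tilde O(n/d^{3/2})$ gives a clean $o(1)$ without the borderline $\tilde O(1)$ that the paper's looser estimate produces. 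Your flagged obstacle about $c\la\bbeta^*,\wb^*\ra$ being bounded away from zero is real and is handled implicitly by the paper (it simply divides by $\la\wb^*,\bbeta^*\ra$ inside the expectation); the cleanest fix is dominated convergence over the prior of~$\wb^*$, since $\la\wb^*,\bbeta^*\ra>0$ almost surely.
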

Lemma~\ref{lemma:test_I} demonstrates that when the context length of hijacking examples is small, the classification error of the linear model obtained through gradient descent on these hijacking examples is very close to that of the optimal initialization. The reasoning is straightforward: When $N$ is relatively smaller compared to training context length $n$, and since the optimal learning rate $\alpha$ is on the order of the reciprocal of $n$, the contributions from the hijacking examples become almost negligible in gradient descent iterations, allowing the model to remain close to its initialization. Consequently, we consider the case that $N$ is comparable with $n$ in the following theorem.
\begin{theorem}\label{thm:test_II}
    Assume that all assumptions in Theorem~\ref{thm:training_loss_bound} still hold.  Additionally, assume that $N\geq \tilde\Omega\big(\frac{n}{d^{3/2}}\big)$, $n\geq \tilde\Omega\big(Nd\big) $,  and  $\sigma$ follows some uniform distribution. Based on these assumptions, it holds that 
    \begin{align*}
        \cE(\tilde\wb_{\mathrm{gd}}^{(L)}) \leq c_1 - c_2 \Bigg(1-\tilde\Theta\bigg(\frac{Nd}{nL}\bigg)\Bigg)^L,
    \end{align*}
    where $c_1$, $c_2$ are two positive scalar solely depending on the distribution of $\sigma$ and $\wb^*$.
\end{theorem}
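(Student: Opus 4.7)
The plan is to leverage Proposition~\ref{def: prop5.1} and Theorem~\ref{thm:training_loss_bound} to reduce the computation of $\cE(\tilde\wb_{\mathrm{gd}}^{(L)})$ to running the explicit gradient-descent recursion in~\eqref{eq:gd_update} on the test matrix $\Zb^{\mathrm{hc}}$. Because $\Zb^{\mathrm{hc}}$ in~\eqref{eq:Z_test} consists of $N$ identical copies of $(\xb_{\mathrm{hc}}, y_{\mathrm{hc}})$, the update collapses to $\tilde\wb_{\mathrm{gd}}^{(l+1)} = \tilde\wb_{\mathrm{gd}}^{(l)} - \alpha N(\la\tilde\wb_{\mathrm{gd}}^{(l)}, \xb_{\mathrm{hc}}\ra - y_{\mathrm{hc}})\xb_{\mathrm{hc}}$. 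Since Definition~\ref{def:hijackedcontext} enforces $\la\xb_{\mathrm{hc}}, \wb^*\ra=0$, projecting each iterate along the orthogonal pair $(\wb^*, \xb_{\mathrm{hc}})$ decouples the dynamics: writing $u_l := \la\tilde\wb_{\mathrm{gd}}^{(l)}, \wb^*\ra$ and $v_l := \la\tilde\wb_{\mathrm{gd}}^{(l)}, \xb_{\mathrm{hc}}\ra$, one obtains $u_l \equiv c\rho$ for all $l$ with $\rho := \la\bbeta^*, \wb^*\ra>0$, while $v_l$ obeys the scalar recursion $v_{l+1} - y_{\mathrm{hc}} = (1-\beta)(v_l - y_{\mathrm{hc}})$ with $\beta := \alpha N\|\xb_{\mathrm{hc}}\|^2$.

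Iterating this recursion yields the closed form $v_L = y_{\mathrm{hc}} + (1-\beta)^L(v_0 - y_{\mathrm{hc}})$, where $v_0 = c\la\bbeta^*, \xb_{\mathrm{hc}}\ra$ is a mean-zero quantity of variance $O\bigl(c^2(1-\rho^2)\bigr)$. Plugging in the optimal $\alpha = \tilde\Theta(1/(nL))$ from Theorem~\ref{thm:training_loss_bound} and applying standard $\chi^2$ concentration to $\|\xb_{\mathrm{hc}}\|^2 = \|\xb\|^2 - \la\wb^*, \xb\ra^2$ gives $\beta = \tilde\Theta\bigl(Nd/(nL)\bigr)$, which matches the factor appearing in the theorem. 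The hypothesis $n \geq \tilde\Omega(Nd)$ guarantees $\beta \ll 1$ so the geometric decay $(1-\beta)^L$ is well defined and non-vanishing, whereas $N \geq \tilde\Omega(n/d^{3/2})$ ensures $1-(1-\beta)^L$ is large enough to dominate the fluctuations of $v_0$.

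Next, I would translate $v_L$ into the classification error. The prediction on $\xb_{\mathrm{query}} = \xb_{\mathrm{hc}} + \sigma\wb^*$ equals $v_L + \sigma c\rho$, and using $y_{\mathrm{hc}} = -\mathrm{sign}(\sigma)$, the error event $\mathrm{sign}(\sigma)(v_L + \sigma c\rho) < 0$ rearranges to
\begin{align*}
|\sigma|\, c\rho \;<\; \bigl(1 - (1-\beta)^L\bigr) \;-\; \mathrm{sign}(\sigma)\,(1-\beta)^L\, v_0.
\end{align*}
Taking probability over the uniform distribution of $\sigma$ and the prior $p_{\bbeta^*}$ (which together produce a smooth density for $|\sigma|c\rho$ near the origin) gives an expression that is affine in $(1-\beta)^L$, of the form $c_1 - c_2(1-\beta)^L$ up to lower-order terms absorbable into the $\tilde\Theta$ notation, where $c_1, c_2 > 0$ are determined by the densities of $\sigma$ and $\rho$.

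The main obstacle will be making the noise-control step rigorous: $v_0$ is a Gaussian-like variable whose typical size is of constant order, so one must show that averaging the error event over $v_0$ at fixed $\sigma$ still leaves a linear dependence on $(1-\beta)^L$ rather than smearing it out. The two assumptions on $N$ are precisely what enable this: $n \geq \tilde\Omega(Nd)$ stabilizes the geometric decay, while $N \geq \tilde\Omega(n/d^{3/2})$ lifts the deterministic signal $1-(1-\beta)^L$ above the stochastic floor contributed by $v_0$ and by the $\chi^2$ fluctuation of $\|\xb_{\mathrm{hc}}\|^2$. Carefully bookkeeping these concentration estimates and verifying that the implicit logarithmic factors in every $\tilde\Theta$ align is where the bulk of the technical work resides.
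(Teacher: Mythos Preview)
Your proposal is correct and follows essentially the same route as the paper. The paper parameterizes the iterate as $\tilde\wb_{\mathrm{gd}}^{(l)}=c\bbeta^*+a(l)\,\xb_\perp$ (proved by the same one-line recursion you write), which is equivalent to tracking your projections $(u_l,v_l)$, and arrives at the identical error event $|\sigma|\,c\rho < \bigl(1-(1-\beta)^L\bigr) - (1-\beta)^L v_0$ with $\beta=\alpha N\|\xb_\perp\|_2^2$.

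The one substantive difference is how the noise term $v_0=c\la\bbeta^*,\xb_\perp\ra$ is dispatched. You plan to control it by concentration, arguing that the signal $1-(1-\beta)^L$ dominates the ``stochastic floor''; but $v_0$ has constant-order variance independent of $N,n,d,L$, so that route is awkward and the lower bound $N\ge\tilde\Omega(n/d^{3/2})$ does not by itself lift the signal above it. The paper instead uses a symmetry argument: conditionally on $\wb^*$, $\la\xb_\perp,\bbeta^*\ra$ is a centered Gaussian, so $\mathrm{sign}(\la\xb_\perp,\bbeta^*\ra)$ is independent of both $|\la\xb_\perp,\bbeta^*\ra|$ and $\|\xb_\perp\|_2^2$. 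Since $F_\sigma'$ is \emph{constant} on the support of a uniform law, a first-order Taylor expansion in $v_0$ shows its contribution has expectation exactly zero. This is precisely where the uniform-distribution hypothesis on $\sigma$ is used, and it eliminates the concentration bookkeeping you anticipated; the conditions on $N$ then serve only to pin $\beta=\tilde\Theta\bigl(\tfrac{Nd}{nL}\bigr)\in(0,1)$ via the $\chi^2$ concentration of $\|\xb_\perp\|_2^2$.
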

Based on a general assumption that $\sigma$ follows the uniform distribution, Theorem~\ref{thm:test_II} formulates the upper bound of the classification error as a function of the training context length $n$, the number of hijacking examples $N$, and the number of layers $L$. Specifically, this upper bound contains a term proportional to $-\big(1-\tilde\Theta\left(\frac{Nd}{nL}\right)\big)^L$. As $\big(1-\tilde\Theta\left(\frac{Nd}{nL}\right)\big)^L$ is a monotonically increasing function for $N$ and a monotonically decreasing function for $n$ and $L$, Theorem~\ref{thm:test_II} successfully demonstrates two facts: (i) well-trained transformers with deeper architectures, or those pre-trained on longer context examples, will exhibit more robustness against context hijacking; and (ii) for a given well-trained transformer, the context hijacking phenomenon is easier to observe when provided with more hijacking examples. These conclusions align well with our experimental observations (Figure ~\ref{fig:lineartf_N}).




\begin{figure*}
    \centering
    \includegraphics[width=1\linewidth]{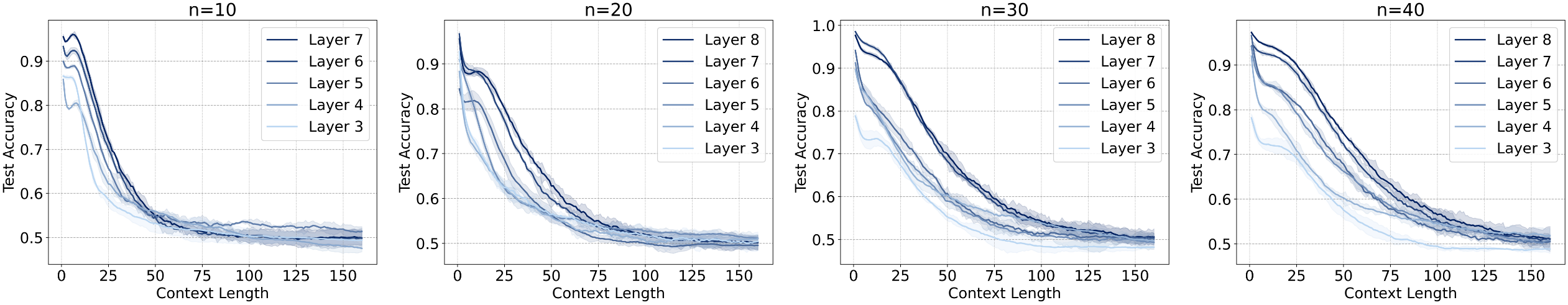}
    \vskip -0.in
    \caption{Linear transformers experiments with different depths and different training context lengths. By testing the trained linear transformers on the test set, we can find that as the number of interference samples increases, the model prediction accuracy becomes worse. However, deeper models have higher accuracy, indicating stronger robustness. As the training context length increases, the model robustness will also increase because the accuracy converges significantly more slowly.}
    \label{fig:lineartf_N}
    \vskip -0.1in
\end{figure*}

\begin{figure}
    \centering
    \includegraphics[width=1\linewidth]{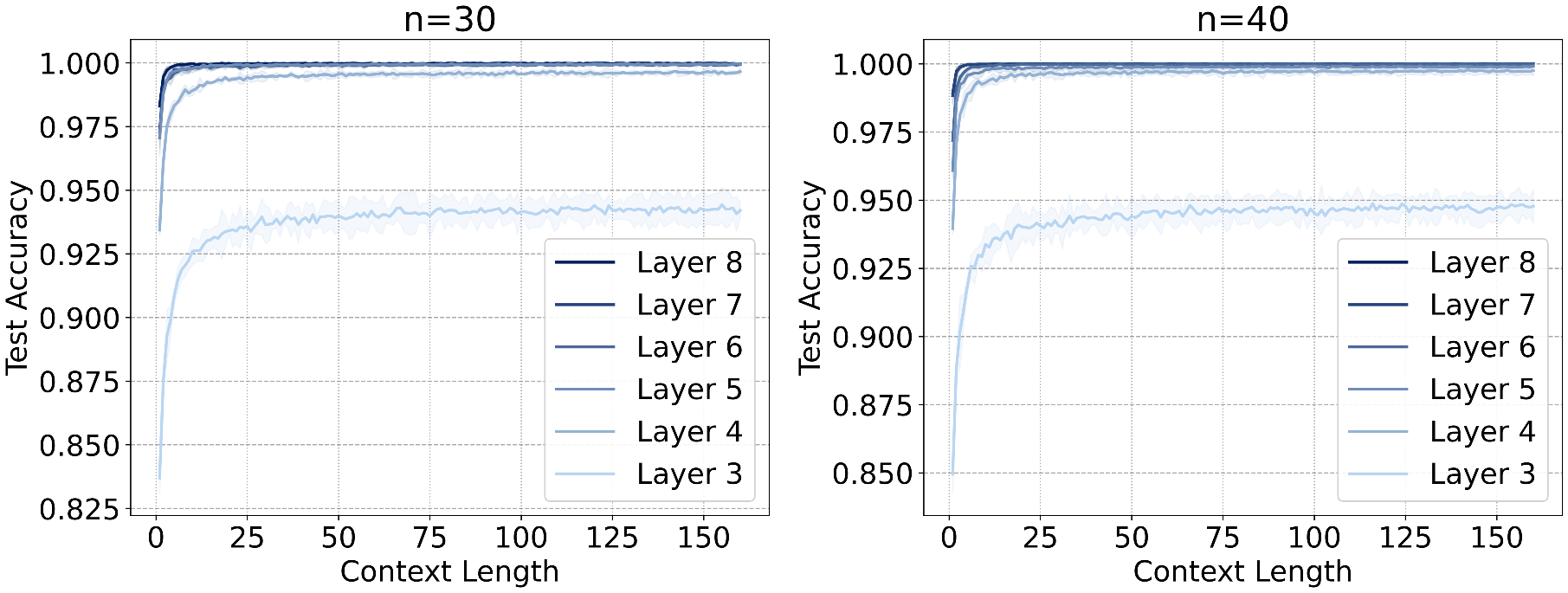}
    \vskip -0.in
    \caption{Linear transformers experiments on training dataset. By testing trained linear transformers on the training set, the initial accuracy of the model is high and can be improved with the increase of context length, indicating that the model can use in-context learning to fine-tune $\bbeta^\star$ to $\wb^\star$. And deeper models have stronger optimization capabilities.}
    \label{fig:lineartf_train}
    \vskip -0.1in
\end{figure}

\section{Experiments}

In this section, we conduct experiments based on the setting in Section \ref{sec:Preliminaries} to verify the theory we proposed in Section \ref{sec:theory}. We first verify the consistency of our theory with optimal multi-step gradient descent. Then we train a series of linear transformers to examine their robustness on test data. 
\subsection{Optimal gradient descent with different steps}
In our theoretical framework, for the same optimization objective, the optimal gradient descent with more steps $L$ or longer training context length $n$ will have a smaller learning rate per step (Theorem ~\ref{thm:training_loss_bound}), and this combination of more steps with small learning rates will perform better on the optimization process over context samples (Theorem ~\ref{thm:test_II}). Our theory shows that a trained transformer will learn the optimal multi-step gradient descent, which will make it more robust during testing. Therefore, we directly verified the consistency between practice and theory in the multi-step gradient descent experiment.

We construct a single-layer neural network to conduct optimal multi-step gradient descent experiments. Each training sample $(\xb_i, y_i)$ is drawn i.i.d. from the distribution $\cD_{\mathrm{tr}}$ defined in Section \ref{def:train_data}. We consider the learning rate that minimizes the loss of the test sample which is also drawn from $\cD_{\mathrm{tr}}$ when the single-layer neural network is trained using 1 to 8 steps of gradient descent, that is, the optimal learning rate $\alpha_L$ corresponding to $L$-step gradient descent, which can be obtained by grid search. Figure \ref{fig:gd_gridsearch} shows that $\alpha_L$ decreases as $L$ and $n$ increases, which is aligned with our theoretical results (Theorem ~\ref{thm:training_loss_bound}).

Next, we discuss the second part of the theoretical framework, i.e., gradient descent with more steps and small step size performs a more fine-grained optimization (Theorem ~\ref{thm:test_II}), which can be verified by our experiment results. We apply the optimal learning rate searched in the training phase to the test phase, and perform gradient descent optimization on the test samples drawn from $\cD_{\mathrm{te}}$ with the optimal learning rate and its corresponding number of steps. We can find that with the increase in the number of gradient descent steps and the decrease in the corresponding learning rate, the performance of the model will be significantly improved.

\subsection{Robustness of linear transformers with different number of layers}

Applying our theoretical framework to the context hijacking task on transformers can explain it well, indicating that our theory has practical significance. We train linear transformers with different depths and context lengths on the training dataset based on distribution $\cD_{\mathrm{tr}}$. We mainly investigate the impact of training context length $n$, and model depth $L$ and the testing context length $N$ on model classification accuracy.

We first test the trained transformers on the training dataset to verify that the model can fine-tune the memorized $\bbeta^*$ to $\wb^\star$ . According to the Figure~\ref{fig:lineartf_train}, we can find that the model has a high classification accuracy when there are very few samples at the beginning. This means that the model successfully memorizes the shared signal $\bbeta^*$. As the context length increases, the accuracy of the model gradually increases and converges, meaning that the model can fine-tune the pre-trained $\bbeta^*$ by using the context samples. In addition, deeper models can converge to larger values faster, corresponding to the theoretical view that deeper models can perform more sophisticated optimization.

Then we conduct experiments on the test set. Observing the experiment results (Figure \ref{fig:lineartf_N}), we can see that as the context length increases, the accuracy of the model decreases significantly and converges to 50\%, showing that the model is randomly classifying the final query $\xb_\mathrm{query}$. This is consistent with the 
 context hijacking phenomenon that the model's robustness will deteriorate as the number of interference prompt words increases. When the number of layers increases, the models with different depths show the same trend as the context length increases, but the accuracy of the model will increase significantly, which is consistent with the phenomenon that deeper models show stronger robustness in practical applications. In addition, the model becomes significantly more robust as the training context length increases, which is reflected in the fact that the classification accuracy converges more slowly as the length increases.

\section{Conclusion and discussion}
In this paper, we explore the robustness of transformers from the perspective of in-context learning. We are inspired by the real-world problem of LLMs, namely context hijacking~\cite{jiangllms}, and we build a comprehensive theoretical framework by modeling context hijacking phenomenon as a linear classification problem. We first demonstrate the context hijacking phenomenon by conducting experiments on LLMs with different depths, i.e., the output of the LLMs can be simply manipulated by modifying the context with factually correct information. This reflects an intuition: deeper models may be more robust. Then we develop a comprehensive theoretical analysis of the robustness of transformer, showing that the well-trained transformers can achieve the optimal gradient descent strategy. More specifically, we show that as the number of model layers or the length of training context increase, the model will be able to perform more fine-grained optimization steps over context samples, which can be less affected by the hijacking examples, leading to stronger robustness. Specifically considering the context hijacking task, our theory can fully explain the various phenomena, which is supported by a series of numerical experiments.

Our work provides a new perspective for the robustness explanation of transformers and the understanding of in-context learning ability, which offer new insights to understand the benefit of deeper architecture. Besides, our analysis on the optimal multi-step gradient descent may also be leveraged to other problems that involve the numerical optimization for linear problems. 










\bibliography{example_paper}

\begin{thebibliography}{85}
\providecommand{\natexlab}[1]{#1}
\providecommand{\url}[1]{\texttt{#1}}
\expandafter\ifx\csname urlstyle\endcsname\relax
  \providecommand{\doi}[1]{doi: #1}\else
  \providecommand{\doi}{doi: \begingroup \urlstyle{rm}\Url}\fi

\bibitem[Achiam et~al.(2023)Achiam, Adler, Agarwal, Ahmad, Akkaya, Aleman, Almeida, Altenschmidt, Altman, Anadkat, et~al.]{achiam2023gpt}
Achiam, J., Adler, S., Agarwal, S., Ahmad, L., Akkaya, I., Aleman, F.~L., Almeida, D., Altenschmidt, J., Altman, S., Anadkat, S., et~al.
\newblock Gpt-4 technical report.
\newblock \emph{arXiv preprint arXiv:2303.08774}, 2023.

\bibitem[Ahn et~al.(2023)Ahn, Cheng, Daneshmand, and Sra]{ahn2023transformers}
Ahn, K., Cheng, X., Daneshmand, H., and Sra, S.
\newblock Transformers learn to implement preconditioned gradient descent for in-context learning.
\newblock \emph{Advances in Neural Information Processing Systems}, 36:\penalty0 45614--45650, 2023.

\bibitem[Aky{\"u}rek et~al.(2022{\natexlab{a}})Aky{\"u}rek, Schuurmans, Andreas, Ma, and Zhou]{akyurek2022learning}
Aky{\"u}rek, E., Schuurmans, D., Andreas, J., Ma, T., and Zhou, D.
\newblock What learning algorithm is in-context learning? investigations with linear models.
\newblock In \emph{The Eleventh International Conference on Learning Representations}, 2022{\natexlab{a}}.

\bibitem[Aky{\"u}rek et~al.(2022{\natexlab{b}})Aky{\"u}rek, Schuurmans, Andreas, Ma, and Zhou]{akyureklearning}
Aky{\"u}rek, E., Schuurmans, D., Andreas, J., Ma, T., and Zhou, D.
\newblock What learning algorithm is in-context learning? investigations with linear models.
\newblock In \emph{The Eleventh International Conference on Learning Representations}, 2022{\natexlab{b}}.

\bibitem[Anil et~al.(2024)Anil, Durmus, Rimsky, Sharma, Benton, Kundu, Batson, Tong, Mu, Ford, et~al.]{anil2024many}
Anil, C., Durmus, E., Rimsky, N., Sharma, M., Benton, J., Kundu, S., Batson, J., Tong, M., Mu, J., Ford, D.~J., et~al.
\newblock Many-shot jailbreaking.
\newblock In \emph{The Thirty-eighth Annual Conference on Neural Information Processing Systems}, 2024.

\bibitem[Anwar et~al.(2024)Anwar, Von~Oswald, Kirsch, Krueger, and Frei]{anwar2024adversarial}
Anwar, U., Von~Oswald, J., Kirsch, L., Krueger, D., and Frei, S.
\newblock Adversarial robustness of in-context learning in transformers for linear regression.
\newblock \emph{arXiv preprint arXiv:2411.05189}, 2024.

\bibitem[Apruzzese et~al.(2023)Apruzzese, Anderson, Dambra, Freeman, Pierazzi, and Roundy]{apruzzese2023real}
Apruzzese, G., Anderson, H.~S., Dambra, S., Freeman, D., Pierazzi, F., and Roundy, K.
\newblock “real attackers don't compute gradients”: bridging the gap between adversarial ml research and practice.
\newblock In \emph{2023 IEEE Conference on Secure and Trustworthy Machine Learning (SaTML)}, pp.\  339--364. IEEE, 2023.

\bibitem[Bai et~al.(2024)Bai, Chen, Wang, Xiong, and Mei]{bai2024transformers}
Bai, Y., Chen, F., Wang, H., Xiong, C., and Mei, S.
\newblock Transformers as statisticians: Provable in-context learning with in-context algorithm selection.
\newblock \emph{Advances in neural information processing systems}, 36, 2024.

\bibitem[Bailey et~al.(2023)Bailey, Ong, Russell, and Emmons]{baileyimage}
Bailey, L., Ong, E., Russell, S., and Emmons, S.
\newblock Image hijacks: Adversarial images can control generative models at runtime.
\newblock In \emph{Forty-first International Conference on Machine Learning}, 2023.

\bibitem[Bartlett et~al.(2020)Bartlett, Long, Lugosi, and Tsigler]{bartlett2020benign}
Bartlett, P.~L., Long, P.~M., Lugosi, G., and Tsigler, A.
\newblock Benign overfitting in linear regression.
\newblock \emph{Proceedings of the National Academy of Sciences}, 117\penalty0 (48):\penalty0 30063--30070, 2020.

\bibitem[Bhattamishra et~al.(2023)Bhattamishra, Patel, Blunsom, and Kanade]{bhattamishraunderstanding}
Bhattamishra, S., Patel, A., Blunsom, P., and Kanade, V.
\newblock Understanding in-context learning in transformers and llms by learning to learn discrete functions.
\newblock In \emph{The Twelfth International Conference on Learning Representations}, 2023.

\bibitem[Bills et~al.(2023)Bills, Cammarata, Mossing, Tillman, Gao, Goh, Sutskever, Leike, Wu, and Saunders]{bills2023language}
Bills, S., Cammarata, N., Mossing, D., Tillman, H., Gao, L., Goh, G., Sutskever, I., Leike, J., Wu, J., and Saunders, W.
\newblock Language models can explain neurons in language models.
\newblock \emph{URL https://openaipublic. blob. core. windows. net/neuron-explainer/paper/index. html.(Date accessed: 14.05. 2023)}, 2, 2023.

\bibitem[Brown(2020)]{brown2020language}
Brown, T.~B.
\newblock Language models are few-shot learners.
\newblock \emph{arXiv preprint arXiv:2005.14165}, 2020.

\bibitem[Chao et~al.(2023)Chao, Robey, Dobriban, Hassani, Pappas, and Wong]{chao2023jailbreaking}
Chao, P., Robey, A., Dobriban, E., Hassani, H., Pappas, G.~J., and Wong, E.
\newblock Jailbreaking black box large language models in twenty queries.
\newblock \emph{arXiv preprint arXiv:2310.08419}, 2023.

\bibitem[Chen \& Zou(2024)Chen and Zou]{chen2024can}
Chen, X. and Zou, D.
\newblock What can transformer learn with varying depth? case studies on sequence learning tasks.
\newblock \emph{arXiv preprint arXiv:2404.01601}, 2024.

\bibitem[Chen et~al.(2024)Chen, Zhao, and Zou]{chen2024transformers}
Chen, X., Zhao, L., and Zou, D.
\newblock How transformers utilize multi-head attention in in-context learning? a case study on sparse linear regression.
\newblock \emph{arXiv preprint arXiv:2408.04532}, 2024.

\bibitem[Chen et~al.(2022)Chen, Zhong, Zha, Karypis, and He]{chen2022meta}
Chen, Y., Zhong, R., Zha, S., Karypis, G., and He, H.
\newblock Meta-learning via language model in-context tuning.
\newblock In \emph{60th Annual Meeting of the Association for Computational Linguistics, ACL 2022}, pp.\  719--730. Association for Computational Linguistics (ACL), 2022.

\bibitem[Cheng et~al.(2023)Cheng, Chen, and Sra]{chengtransformers}
Cheng, X., Chen, Y., and Sra, S.
\newblock Transformers implement functional gradient descent to learn non-linear functions in context.
\newblock In \emph{Forty-first International Conference on Machine Learning}, 2023.

\bibitem[Cheng et~al.(2024)Cheng, Georgopoulos, Cevher, and Chrysos]{cheng2024leveraging}
Cheng, Y., Georgopoulos, M., Cevher, V., and Chrysos, G.~G.
\newblock Leveraging the context through multi-round interactions for jailbreaking attacks.
\newblock \emph{arXiv preprint arXiv:2402.09177}, 2024.

\bibitem[Chowdhury et~al.(2024)Chowdhury, Islam, Kumar, Shezan, Jain, and Chadha]{chowdhury2024breaking}
Chowdhury, A.~G., Islam, M.~M., Kumar, V., Shezan, F.~H., Jain, V., and Chadha, A.
\newblock Breaking down the defenses: A comparative survey of attacks on large language models.
\newblock \emph{arXiv preprint arXiv:2403.04786}, 2024.

\bibitem[Creswell et~al.(2022)Creswell, Shanahan, and Higgins]{creswellselection}
Creswell, A., Shanahan, M., and Higgins, I.
\newblock Selection-inference: Exploiting large language models for interpretable logical reasoning.
\newblock In \emph{The Eleventh International Conference on Learning Representations}, 2022.

\bibitem[Dai et~al.(2023)Dai, Sun, Dong, Hao, Ma, Sui, and Wei]{dai2023can}
Dai, D., Sun, Y., Dong, L., Hao, Y., Ma, S., Sui, Z., and Wei, F.
\newblock Why can gpt learn in-context? language models secretly perform gradient descent as meta-optimizers.
\newblock In \emph{Findings of the Association for Computational Linguistics: ACL 2023}, pp.\  4005--4019, 2023.

\bibitem[Dehghani et~al.(2019)Dehghani, Gouws, Vinyals, Uszkoreit, and Kaiser]{dehghaniuniversal}
Dehghani, M., Gouws, S., Vinyals, O., Uszkoreit, J., and Kaiser, L.
\newblock Universal transformers.
\newblock In \emph{International Conference on Learning Representations}, 2019.

\bibitem[Deng et~al.(2023)Deng, Liu, Li, Wang, Zhang, Li, Wang, Zhang, and Liu]{deng2023jailbreaker}
Deng, G., Liu, Y., Li, Y., Wang, K., Zhang, Y., Li, Z., Wang, H., Zhang, T., and Liu, Y.
\newblock Jailbreaker: Automated jailbreak across multiple large language model chatbots.
\newblock \emph{arXiv preprint arXiv:2307.08715}, 2023.

\bibitem[Devlin(2018)]{devlin2018bert}
Devlin, J.
\newblock Bert: Pre-training of deep bidirectional transformers for language understanding.
\newblock \emph{arXiv preprint arXiv:1810.04805}, 2018.

\bibitem[Frei \& Vardi(2024)Frei and Vardi]{frei2024trained}
Frei, S. and Vardi, G.
\newblock Trained transformer classifiers generalize and exhibit benign overfitting in-context.
\newblock \emph{arXiv preprint arXiv:2410.01774}, 2024.

\bibitem[Friedman et~al.(2024)Friedman, Wettig, and Chen]{friedman2024learning}
Friedman, D., Wettig, A., and Chen, D.
\newblock Learning transformer programs.
\newblock \emph{Advances in Neural Information Processing Systems}, 36, 2024.

\bibitem[Fu et~al.(2023)Fu, Yang, Wang, Lu, and Zheng]{fu2023does}
Fu, J., Yang, T., Wang, Y., Lu, Y., and Zheng, N.
\newblock How does representation impact in-context learning: A exploration on a synthetic task.
\newblock \emph{arXiv preprint arXiv:2309.06054}, 2023.

\bibitem[Garg et~al.(2022)Garg, Tsipras, Liang, and Valiant]{garg2022can}
Garg, S., Tsipras, D., Liang, P.~S., and Valiant, G.
\newblock What can transformers learn in-context? a case study of simple function classes.
\newblock \emph{Advances in Neural Information Processing Systems}, 35:\penalty0 30583--30598, 2022.

\bibitem[Guo et~al.(2023)Guo, Hu, Mei, Wang, Xiong, Savarese, and Bai]{guotransformers}
Guo, T., Hu, W., Mei, S., Wang, H., Xiong, C., Savarese, S., and Bai, Y.
\newblock How do transformers learn in-context beyond simple functions? a case study on learning with representations.
\newblock In \emph{The Twelfth International Conference on Learning Representations}, 2023.

\bibitem[He et~al.(2024)He, Xu, Xing, Liu, Yamada, and Tang]{he2024data}
He, P., Xu, H., Xing, Y., Liu, H., Yamada, M., and Tang, J.
\newblock Data poisoning for in-context learning.
\newblock \emph{arXiv preprint arXiv:2402.02160}, 2024.

\bibitem[Huang et~al.(2023)Huang, Cheng, and Liang]{huangcontext}
Huang, Y., Cheng, Y., and Liang, Y.
\newblock In-context convergence of transformers.
\newblock In \emph{Forty-first International Conference on Machine Learning}, 2023.

\bibitem[Jelassi et~al.(2022)Jelassi, Sander, and Li]{jelassi2022vision}
Jelassi, S., Sander, M., and Li, Y.
\newblock Vision transformers provably learn spatial structure.
\newblock \emph{Advances in Neural Information Processing Systems}, 35:\penalty0 37822--37836, 2022.

\bibitem[Jeong(2023)]{jeong2023hijacking}
Jeong, J.
\newblock Hijacking context in large multi-modal models.
\newblock \emph{arXiv preprint arXiv:2312.07553}, 2023.

\bibitem[Jiang et~al.(2024)Jiang, Rajendran, Ravikumar, and Aragam]{jiangllms}
Jiang, Y., Rajendran, G., Ravikumar, P.~K., and Aragam, B.
\newblock Do llms dream of elephants (when told not to)? latent concept association and associative memory in transformers.
\newblock In \emph{The Thirty-eighth Annual Conference on Neural Information Processing Systems}, 2024.

\bibitem[Kumar et~al.(2023)Kumar, Agarwal, Srinivas, Li, Feizi, and Lakkaraju]{kumar2023certifying}
Kumar, A., Agarwal, C., Srinivas, S., Li, A.~J., Feizi, S., and Lakkaraju, H.
\newblock Certifying llm safety against adversarial prompting.
\newblock \emph{arXiv preprint arXiv:2309.02705}, 2023.

\bibitem[Lee et~al.(2024)Lee, Xie, Pacchiano, Chandak, Finn, Nachum, and Brunskill]{lee2024supervised}
Lee, J., Xie, A., Pacchiano, A., Chandak, Y., Finn, C., Nachum, O., and Brunskill, E.
\newblock Supervised pretraining can learn in-context reinforcement learning.
\newblock \emph{Advances in Neural Information Processing Systems}, 36, 2024.

\bibitem[Li et~al.(2023a)Li, Ildiz, Papailiopoulos, and Oymak]{li2023transformersa}
Li, Y., Ildiz, M.~E., Papailiopoulos, D., and Oymak, S.
\newblock Transformers as algorithms: Generalization and stability in in-context learning.
\newblock In \emph{International Conference on Machine Learning}, pp.\  19565--19594. PMLR, 2023a.

\bibitem[Li et~al.(2024)Li, Rawat, and Oymak]{lifine}
Li, Y., Rawat, A.~S., and Oymak, S.
\newblock Fine-grained analysis of in-context linear estimation: Data, architecture, and beyond.
\newblock In \emph{The Thirty-eighth Annual Conference on Neural Information Processing Systems}, 2024.

\bibitem[Lin et~al.(2023)Lin, Bai, and Mei]{lintransformers}
Lin, L., Bai, Y., and Mei, S.
\newblock Transformers as decision makers: Provable in-context reinforcement learning via supervised pretraining.
\newblock In \emph{The Twelfth International Conference on Learning Representations}, 2023.

\bibitem[Lindner et~al.(2024)Lindner, Kram{\'a}r, Farquhar, Rahtz, McGrath, and Mikulik]{lindner2024tracr}
Lindner, D., Kram{\'a}r, J., Farquhar, S., Rahtz, M., McGrath, T., and Mikulik, V.
\newblock Tracr: Compiled transformers as a laboratory for interpretability.
\newblock \emph{Advances in Neural Information Processing Systems}, 36, 2024.

\bibitem[Liu et~al.(2023{\natexlab{a}})Liu, Yuan, Fu, Jiang, Hayashi, and Neubig]{liu2023pre}
Liu, P., Yuan, W., Fu, J., Jiang, Z., Hayashi, H., and Neubig, G.
\newblock Pre-train, prompt, and predict: A systematic survey of prompting methods in natural language processing.
\newblock \emph{ACM Computing Surveys}, 55\penalty0 (9):\penalty0 1--35, 2023{\natexlab{a}}.

\bibitem[Liu et~al.(2023{\natexlab{b}})Liu, Deng, Li, Wang, Wang, Wang, Zhang, Liu, Wang, Zheng, et~al.]{liu2023prompt}
Liu, Y., Deng, G., Li, Y., Wang, K., Wang, Z., Wang, X., Zhang, T., Liu, Y., Wang, H., Zheng, Y., et~al.
\newblock Prompt injection attack against llm-integrated applications.
\newblock \emph{arXiv preprint arXiv:2306.05499}, 2023{\natexlab{b}}.

\bibitem[Liu et~al.(2023{\natexlab{c}})Liu, Yao, Ton, Zhang, Cheng, Klochkov, Taufiq, and Li]{liu2023trustworthy}
Liu, Y., Yao, Y., Ton, J.-F., Zhang, X., Cheng, R. G.~H., Klochkov, Y., Taufiq, M.~F., and Li, H.
\newblock Trustworthy llms: A survey and guideline for evaluating large language models' alignment.
\newblock \emph{arXiv preprint arXiv:2308.05374}, 2023{\natexlab{c}}.

\bibitem[Meng et~al.(2022)Meng, Bau, Andonian, and Belinkov]{meng2022locating}
Meng, K., Bau, D., Andonian, A., and Belinkov, Y.
\newblock Locating and editing factual associations in gpt.
\newblock \emph{Advances in Neural Information Processing Systems}, 35:\penalty0 17359--17372, 2022.

\bibitem[Min et~al.(2022)Min, Lewis, Zettlemoyer, and Hajishirzi]{min2022metaicl}
Min, S., Lewis, M., Zettlemoyer, L., and Hajishirzi, H.
\newblock Metaicl: Learning to learn in context.
\newblock In \emph{Proceedings of the 2022 Conference of the North American Chapter of the Association for Computational Linguistics: Human Language Technologies}, pp.\  2791--2809, 2022.

\bibitem[Niu et~al.(2024)Niu, Ren, Gao, Hua, and Jin]{niu2024jailbreaking}
Niu, Z., Ren, H., Gao, X., Hua, G., and Jin, R.
\newblock Jailbreaking attack against multimodal large language model.
\newblock \emph{arXiv preprint arXiv:2402.02309}, 2024.

\bibitem[Olsson et~al.(2022)Olsson, Elhage, Nanda, Joseph, DasSarma, Henighan, Mann, Askell, Bai, Chen, et~al.]{olsson2022context}
Olsson, C., Elhage, N., Nanda, N., Joseph, N., DasSarma, N., Henighan, T., Mann, B., Askell, A., Bai, Y., Chen, A., et~al.
\newblock In-context learning and induction heads.
\newblock \emph{arXiv preprint arXiv:2209.11895}, 2022.

\bibitem[Ouyang et~al.(2022)Ouyang, Wu, Jiang, Almeida, Wainwright, Mishkin, Zhang, Agarwal, Slama, Ray, et~al.]{ouyang2022training}
Ouyang, L., Wu, J., Jiang, X., Almeida, D., Wainwright, C., Mishkin, P., Zhang, C., Agarwal, S., Slama, K., Ray, A., et~al.
\newblock Training language models to follow instructions with human feedback.
\newblock \emph{Advances in neural information processing systems}, 35:\penalty0 27730--27744, 2022.

\bibitem[Pandia \& Ettinger(2021)Pandia and Ettinger]{pandia2021sorting}
Pandia, L. and Ettinger, A.
\newblock Sorting through the noise: Testing robustness of information processing in pre-trained language models.
\newblock In \emph{Proceedings of the 2021 Conference on Empirical Methods in Natural Language Processing}, pp.\  1583--1596, 2021.

\bibitem[Pandit \& Hou(2021)Pandit and Hou]{pandit2021probing}
Pandit, O. and Hou, Y.
\newblock Probing for bridging inference in transformer language models.
\newblock In \emph{NAACL 2021-Annual Conference of the North American Chapter of the Association for Computational Linguistics}, 2021.

\bibitem[Panwar et~al.(2023)Panwar, Ahuja, and Goyal]{panwarcontext}
Panwar, M., Ahuja, K., and Goyal, N.
\newblock In-context learning through the bayesian prism.
\newblock In \emph{The Twelfth International Conference on Learning Representations}, 2023.

\bibitem[Pathak et~al.(2023)Pathak, Sen, Kong, and Das]{pathaktransformers}
Pathak, R., Sen, R., Kong, W., and Das, A.
\newblock Transformers can optimally learn regression mixture models.
\newblock In \emph{The Twelfth International Conference on Learning Representations}, 2023.

\bibitem[Perez \& Ribeiro(2022)Perez and Ribeiro]{perezignore}
Perez, F. and Ribeiro, I.
\newblock Ignore previous prompt: Attack techniques for language models.
\newblock In \emph{NeurIPS ML Safety Workshop}, 2022.

\bibitem[P{\'e}rez et~al.(2021)P{\'e}rez, Barcel{\'o}, and Marinkovic]{perez2021attention}
P{\'e}rez, J., Barcel{\'o}, P., and Marinkovic, J.
\newblock Attention is turing-complete.
\newblock \emph{Journal of Machine Learning Research}, 22\penalty0 (75):\penalty0 1--35, 2021.

\bibitem[Qiang et~al.(2023)Qiang, Zhou, and Zhu]{qiang2023hijacking}
Qiang, Y., Zhou, X., and Zhu, D.
\newblock Hijacking large language models via adversarial in-context learning.
\newblock \emph{arXiv preprint arXiv:2311.09948}, 2023.

\bibitem[Radford et~al.(2019)Radford, Wu, Child, Luan, Amodei, Sutskever, et~al.]{radford2019language}
Radford, A., Wu, J., Child, R., Luan, D., Amodei, D., Sutskever, I., et~al.
\newblock Language models are unsupervised multitask learners.
\newblock \emph{OpenAI blog}, 1\penalty0 (8):\penalty0 9, 2019.

\bibitem[Ravent{\'o}s et~al.(2024)Ravent{\'o}s, Paul, Chen, and Ganguli]{raventos2024pretraining}
Ravent{\'o}s, A., Paul, M., Chen, F., and Ganguli, S.
\newblock Pretraining task diversity and the emergence of non-bayesian in-context learning for regression.
\newblock \emph{Advances in Neural Information Processing Systems}, 36, 2024.

\bibitem[Shen et~al.(2024)Shen, Chen, Backes, Shen, and Zhang]{shen2024anything}
Shen, X., Chen, Z., Backes, M., Shen, Y., and Zhang, Y.
\newblock " do anything now": Characterizing and evaluating in-the-wild jailbreak prompts on large language models.
\newblock In \emph{Proceedings of the 2024 on ACM SIGSAC Conference on Computer and Communications Security}, pp.\  1671--1685, 2024.

\bibitem[Shi et~al.(2023)Shi, Chen, Misra, Scales, Dohan, Chi, Sch{\"a}rli, and Zhou]{shi2023large}
Shi, F., Chen, X., Misra, K., Scales, N., Dohan, D., Chi, E.~H., Sch{\"a}rli, N., and Zhou, D.
\newblock Large language models can be easily distracted by irrelevant context.
\newblock In \emph{International Conference on Machine Learning}, pp.\  31210--31227. PMLR, 2023.

\bibitem[Siyu et~al.(2024)Siyu, Heejune, Tianhao, and Zhuoran]{siyu2024training}
Siyu, C., Heejune, S., Tianhao, W., and Zhuoran, Y.
\newblock Training dynamics of multi-head softmax attention for in-context learning: Emergence, convergence, and optimality.
\newblock In \emph{The Thirty Seventh Annual Conference on Learning Theory}, pp.\  4573--4573. PMLR, 2024.

\bibitem[Tian et~al.(2023)Tian, Wang, Chen, and Du]{tian2023scan}
Tian, Y., Wang, Y., Chen, B., and Du, S.~S.
\newblock Scan and snap: Understanding training dynamics and token composition in 1-layer transformer.
\newblock \emph{Advances in Neural Information Processing Systems}, 36:\penalty0 71911--71947, 2023.

\bibitem[Touvron et~al.(2023)Touvron, Lavril, Izacard, Martinet, Lachaux, Lacroix, Rozi{\`e}re, Goyal, Hambro, Azhar, et~al.]{touvron2023llama}
Touvron, H., Lavril, T., Izacard, G., Martinet, X., Lachaux, M.-A., Lacroix, T., Rozi{\`e}re, B., Goyal, N., Hambro, E., Azhar, F., et~al.
\newblock Llama: Open and efficient foundation language models.
\newblock \emph{arXiv preprint arXiv:2302.13971}, 2023.

\bibitem[Vaswani et~al.(2017)Vaswani, Shazeer, Parmar, Uszkoreit, Jones, Gomez, Kaiser, and Polosukhin]{vaswani2017attention}
Vaswani, A., Shazeer, N., Parmar, N., Uszkoreit, J., Jones, L., Gomez, A.~N., Kaiser, {\L}., and Polosukhin, I.
\newblock Attention is all you need.
\newblock \emph{Advances in neural information processing systems}, 30, 2017.

\bibitem[Vig \& Belinkov(2019)Vig and Belinkov]{vig2019analyzing}
Vig, J. and Belinkov, Y.
\newblock Analyzing the structure of attention in a transformer language model.
\newblock In \emph{Proceedings of the 2019 ACL Workshop BlackboxNLP: Analyzing and Interpreting Neural Networks for NLP}, pp.\  63--76, 2019.

\bibitem[Von~Oswald et~al.(2023)Von~Oswald, Niklasson, Randazzo, Sacramento, Mordvintsev, Zhmoginov, and Vladymyrov]{von2023transformers}
Von~Oswald, J., Niklasson, E., Randazzo, E., Sacramento, J., Mordvintsev, A., Zhmoginov, A., and Vladymyrov, M.
\newblock Transformers learn in-context by gradient descent.
\newblock In \emph{International Conference on Machine Learning}, pp.\  35151--35174. PMLR, 2023.

\bibitem[Wang et~al.(2023{\natexlab{a}})Wang, Liu, Park, Jiang, Zheng, Wu, Chen, and Xiao]{wang2023adversarial}
Wang, J., Liu, Z., Park, K.~H., Jiang, Z., Zheng, Z., Wu, Z., Chen, M., and Xiao, C.
\newblock Adversarial demonstration attacks on large language models.
\newblock \emph{arXiv preprint arXiv:2305.14950}, 2023{\natexlab{a}}.

\bibitem[Wang et~al.(2023{\natexlab{b}})Wang, Xixu, Hou, Chen, Zheng, Wang, Yang, Ye, Huang, Geng, et~al.]{wang2023robustness}
Wang, J., Xixu, H., Hou, W., Chen, H., Zheng, R., Wang, Y., Yang, L., Ye, W., Huang, H., Geng, X., et~al.
\newblock On the robustness of chatgpt: An adversarial and out-of-distribution perspective.
\newblock In \emph{ICLR 2023 Workshop on Trustworthy and Reliable Large-Scale Machine Learning Models}, 2023{\natexlab{b}}.

\bibitem[Wang et~al.(2024)Wang, Wei, Hsu, and Lee]{wangtransformers}
Wang, Z., Wei, S., Hsu, D., and Lee, J.~D.
\newblock Transformers provably learn sparse token selection while fully-connected nets cannot.
\newblock In \emph{Forty-first International Conference on Machine Learning}, 2024.

\bibitem[Wei et~al.(2022)Wei, Chen, and Ma]{wei2022statistically}
Wei, C., Chen, Y., and Ma, T.
\newblock Statistically meaningful approximation: a case study on approximating turing machines with transformers.
\newblock \emph{Advances in Neural Information Processing Systems}, 35:\penalty0 12071--12083, 2022.

\bibitem[Wei et~al.(2023)Wei, Wang, Li, Mo, and Wang]{wei2023jailbreak}
Wei, Z., Wang, Y., Li, A., Mo, Y., and Wang, Y.
\newblock Jailbreak and guard aligned language models with only few in-context demonstrations.
\newblock \emph{arXiv preprint arXiv:2310.06387}, 2023.

\bibitem[Weiss et~al.(2021)Weiss, Goldberg, and Yahav]{weiss2021thinking}
Weiss, G., Goldberg, Y., and Yahav, E.
\newblock Thinking like transformers.
\newblock In \emph{International Conference on Machine Learning}, pp.\  11080--11090. PMLR, 2021.

\bibitem[Wu et~al.(2023)Wu, Zou, Chen, Braverman, Gu, and Bartlett]{wumany}
Wu, J., Zou, D., Chen, Z., Braverman, V., Gu, Q., and Bartlett, P.
\newblock How many pretraining tasks are needed for in-context learning of linear regression?
\newblock In \emph{The Twelfth International Conference on Learning Representations}, 2023.

\bibitem[Xie et~al.(2021)Xie, Raghunathan, Liang, and Ma]{xieexplanation}
Xie, S.~M., Raghunathan, A., Liang, P., and Ma, T.
\newblock An explanation of in-context learning as implicit bayesian inference.
\newblock In \emph{International Conference on Learning Representations}, 2021.

\bibitem[Xu et~al.(2023)Xu, Kong, Liu, Cui, Wang, Zhang, and Kankanhalli]{xullm}
Xu, X., Kong, K., Liu, N., Cui, L., Wang, D., Zhang, J., and Kankanhalli, M.
\newblock An llm can fool itself: A prompt-based adversarial attack.
\newblock In \emph{The Twelfth International Conference on Learning Representations}, 2023.

\bibitem[Yao et~al.(2024)Yao, Duan, Xu, Cai, Sun, and Zhang]{yao2024survey}
Yao, Y., Duan, J., Xu, K., Cai, Y., Sun, Z., and Zhang, Y.
\newblock A survey on large language model (llm) security and privacy: The good, the bad, and the ugly.
\newblock \emph{High-Confidence Computing}, pp.\  100211, 2024.

\bibitem[Yoran et~al.(2023)Yoran, Wolfson, Ram, and Berant]{yoranmaking}
Yoran, O., Wolfson, T., Ram, O., and Berant, J.
\newblock Making retrieval-augmented language models robust to irrelevant context.
\newblock In \emph{The Twelfth International Conference on Learning Representations}, 2023.

\bibitem[Yu et~al.(2023)Yu, Lin, Yu, and Xing]{yu2023gptfuzzer}
Yu, J., Lin, X., Yu, Z., and Xing, X.
\newblock Gptfuzzer: Red teaming large language models with auto-generated jailbreak prompts.
\newblock \emph{arXiv preprint arXiv:2309.10253}, 2023.

\bibitem[Yun et~al.(2020)Yun, Bhojanapalli, Rawat, Reddi, and Kumar]{yuntransformers}
Yun, C., Bhojanapalli, S., Rawat, A.~S., Reddi, S., and Kumar, S.
\newblock Are transformers universal approximators of sequence-to-sequence functions?
\newblock In \emph{International Conference on Learning Representations}, 2020.

\bibitem[Zhang et~al.(2024{\natexlab{a}})Zhang, Frei, and Bartlett]{zhang2024trained}
Zhang, R., Frei, S., and Bartlett, P.~L.
\newblock Trained transformers learn linear models in-context.
\newblock \emph{Journal of Machine Learning Research}, 25\penalty0 (49):\penalty0 1--55, 2024{\natexlab{a}}.

\bibitem[Zhang et~al.(2024{\natexlab{b}})Zhang, Wu, and Bartlett]{zhang2024context}
Zhang, R., Wu, J., and Bartlett, P.~L.
\newblock In-context learning of a linear transformer block: benefits of the mlp component and one-step gd initialization.
\newblock \emph{arXiv preprint arXiv:2402.14951}, 2024{\natexlab{b}}.

\bibitem[Zhao et~al.(2024)Zhao, Pang, Du, Yang, Li, Cheung, and Lin]{zhao2024evaluating}
Zhao, Y., Pang, T., Du, C., Yang, X., Li, C., Cheung, N.-M.~M., and Lin, M.
\newblock On evaluating adversarial robustness of large vision-language models.
\newblock \emph{Advances in Neural Information Processing Systems}, 36, 2024.

\bibitem[Zhou et~al.(2023)Zhou, Bradley, Littwin, Razin, Saremi, Susskind, Bengio, and Nakkiran]{zhoualgorithms}
Zhou, H., Bradley, A., Littwin, E., Razin, N., Saremi, O., Susskind, J.~M., Bengio, S., and Nakkiran, P.
\newblock What algorithms can transformers learn? a study in length generalization.
\newblock In \emph{The Twelfth International Conference on Learning Representations}, 2023.

\bibitem[Zhu et~al.(2023)Zhu, Wang, Zhou, Wang, Chen, Wang, Yang, Ye, Zhang, Zhenqiang~Gong, et~al.]{zhu2023promptbench}
Zhu, K., Wang, J., Zhou, J., Wang, Z., Chen, H., Wang, Y., Yang, L., Ye, W., Zhang, Y., Zhenqiang~Gong, N., et~al.
\newblock Promptbench: Towards evaluating the robustness of large language models on adversarial prompts.
\newblock \emph{arXiv e-prints}, pp.\  arXiv--2306, 2023.

\bibitem[Zou et~al.(2023)Zou, Wang, Carlini, Nasr, Kolter, and Fredrikson]{zou2023universal}
Zou, A., Wang, Z., Carlini, N., Nasr, M., Kolter, J.~Z., and Fredrikson, M.
\newblock Universal and transferable adversarial attacks on aligned language models.
\newblock \emph{arXiv preprint arXiv:2307.15043}, 2023.

\end{thebibliography}
\bibliographystyle{icml2025}

\newpage
\appendix
\onecolumn
\section{Proof of Proposition \ref{def: prop5.1}}\label{appendix:prop5.1}
In this section we provide a proof for Proposition \ref{def: prop5.1}.
\begin{proof}[Proof of Proposition~\ref{def: prop5.1}]
Our proof is inspired by Lemma 1 in \citet{ahn2023transformers}, while we consider a non-zero initialization. We first provide the parameters $\Wb_E, \Pb_\ell, \Qb_\ell\in \RR^{(d+1) \times (d+1)}$ of a $L$-layers transformer.
$$
\Wb_E = \begin{bmatrix}
\Ib_{n} & 0 \\
-\wb_{\mathrm{gd}}^{(0)} & 1
\end{bmatrix}, \quad
\Pb_\ell = \begin{bmatrix}
\mathbf{0}_{d \times d} & 0 \\
0 & 1
\end{bmatrix}, \quad
\Qb_\ell = \begin{bmatrix}
-\bGamma_\ell & 0 \\
0 & 0
\end{bmatrix} \quad \text{where } \bGamma_\ell \in \RR^{d \times d}.
$$
For the linear classification problem, the input sample $\Zb_0\in \RR^{(d+1) \times (n+1)}$ consists of $\{(\xb_i, y_i)\}_i=1^n$ and $(\xb_{\mathrm{query}}, y_{\mathrm{query}})$ in \eqref{eq:Z_train} , which will first be embedded by $\Wb_E$. Let $\Xb^{(0)}\in \RR^{d \times (n+1)}$ denote the first $d$ rows of $\Wb_E(\Zb_0)$ and let $\Yb^{(0)}\in \RR^{1 \times (n+1)}$ denote the $(d+1)$-th row of $\Wb_E(\Zb_0)$. In subsequent iterative updates in (\ref{eq:Zi}), the values at the same position will be denoted as $\Xb^{(l)}$ and $\Yb^{(l)}$, for $l = 1, \ldots, L$. Similarly, define $\bar{\Xb}^{(l)}\in \RR^{d \times n}$ and $\bar{\Yb}^{(l)}\in \RR^{1 \times n}$ as matrices that exclude the last query sample $(\xb_{\mathrm{query}}^{(l)}, y_{\mathrm{query}}^{(l)})$. That is, they only contain the first $n$ columns of the output of the $l$-th layer. Let $\xb_i^{(l)}$ and $y_i^{(l)}$ be the $i$-th pair of samples output by the $l$-th layer. Define a function $g(\xb, y, l) : \RR^d \times \RR \times \ZZ \rightarrow \RR$: let $\xb_{\mathrm{query}}^{(0)} = \xb$ and $y_{\mathrm{query}}^{(0)} = y - \la\wb_{\mathrm{gd}}^{(0)}, \xb\ra$, then $g(\xb, y, l) := y_{\mathrm{query}}^{(l)}$.
Next, based on the update formula (\ref{eq:Zi}) and the parameters constructed above, we have:
$$
\Xb^{(l+1)} = \Xb^{(l)} = \cdots = \Xb^{(0)},\quad
\Yb^{(l+1)} = \Yb^{(l)} - \Yb^{(l)} \Mb (\Xb^{(0)})^\top \bGamma_l \Xb^{(0)}.
$$
Then for all $i \in \{1, \ldots, n\}$,
$$
y_i^{(l+1)} = y_i^{(l)} - \sum_{j=1}^{n} {\xb_i}^\top \bGamma_l \xb_j y_j^{(l)}.
$$
So $y_i^{(l+1)}$ does not depend on $y_{\mathrm{query}}^{(l+1)}$. For query position,
$$
y_{\mathrm{query}}^{(l+1)} = y_{\mathrm{query}}^{(l)} - \sum_{j=1}^{n} \xb_{\mathrm{query}}^\top \bGamma_l \xb_j y_j^{(l)}.
$$
Then we obtain $g(\xb, y, l)$ and $g(\xb, 0, l)$:
\begin{flalign*}
g(\xb, y, l)&= y_{\mathrm{query}}^{(l-1)} -  \sum_{j=1}^{n} \xb_{\mathrm{query}}^\top \bGamma_{l-1} \xb_j y_j^{(l-1)}\\
&=y_{\mathrm{query}}^{(l-2)} -  \sum_{j=1}^{n} \xb_{\mathrm{query}}^\top \bGamma_{l-2} \xb_j y_j^{(l-2)}-  \sum_{j=1}^{n} \xb_{\mathrm{query}}^\top \bGamma_{l-1} \xb_j y_j^{(l-1)}\\
&\vdotswithin{=}\\
&=y_{\mathrm{query}}^{(0)} -  \sum_{j=1}^{n} \xb_{\mathrm{query}}^\top \bGamma_{0} \xb_j y_j^{(0)}-\cdots -  \sum_{j=1}^{n} \xb_{\mathrm{query}}^\top \bGamma_{l-1} \xb_j y_j^{(l-1)}\\
&=y-\la\wb_{\mathrm{gd}}^{(0)}, \xb_{\mathrm{query}}\ra- \sum_{j=1}^{n} \xb_{\mathrm{query}}^\top \bGamma_{0} \xb_j y_j^{(0)}-\cdots -  \sum_{j=1}^{n} \xb_{\mathrm{query}}^\top \bGamma_{l-1} \xb_j y_j^{(l-1)};\\
g(\xb, 0, l)&= y_{\mathrm{query}}^{(l-1)} -  \sum_{j=1}^{n} \xb_{\mathrm{query}}^\top \bGamma_{l-1} \xb_j y_j^{(l-1)}\\
&=y_{\mathrm{query}}^{(l-2)} -  \sum_{j=1}^{n} \xb_{\mathrm{query}}^\top \bGamma_{l-2} \xb_j y_j^{(l-2)}-  \sum_{j=1}^{n} \xb_{\mathrm{query}}^\top \bGamma_{l-1} \xb_j y_j^{(l-1)}\\
&\vdotswithin{=}\\
&=y_{\mathrm{query}}^{(0)} -  \sum_{j=1}^{n} \xb_{\mathrm{query}}^\top \bGamma_{0} \xb_j y_j^{(0)}-\cdots -  \sum_{j=1}^{n} \xb_{\mathrm{query}}^\top \bGamma_{l-1} \xb_j y_j^{(l-1)}\\
&=-\la\wb_{\mathrm{gd}}^{(0)}, \xb_{\mathrm{query}}\ra- \sum_{j=1}^{n} \xb_{\mathrm{query}}^\top \bGamma_{0} \xb_j y_j^{(0)}-\cdots -  \sum_{j=1}^{n} \xb_{\mathrm{query}}^\top \bGamma_{l-1} \xb_j y_j^{(l-1)};.
\end{flalign*}
So we have $g(\xb, y, l)=g(\xb, 0, l)+y$. Observing $g(\xb, 0, l)$, we can find that it is linear in $\xb$ for the reason that every term of $g(\xb, 0, l)$ is linear in $\xb_{\mathrm{query}}$, which means we can rewrite it. We verify that there exists a $\btheta_l \in \RR^d$ for each $l \in [L]$, such that for all $\xb,y$,
$$
g(\xb, y, l) = g(\xb, 0, l) + y = \la \btheta_l, \xb \ra + y.
$$
Let $l=0$, we have $\la \btheta_0, \xb \ra=g(\xb, y, 0)-y=y_{\mathrm{query}}^{(0)}-y=-\la\wb_{\mathrm{gd}}^{(0)}, \xb_{\mathrm{query}}\ra$, so $\btheta_0=-\wb_{\mathrm{gd}}^{(0)}$. Next, we will show that for all $(\xb_i, y_i )\in \{(\xb_1,y_1) \ldots, (\xb_n,y_n), (\xb_{\mathrm{query}}, y_{\mathrm{query}})\}$,
$$
g(\xb_i, y_i, l) = y_i^{(l)} = \la \btheta_l, \xb_i \ra + y_i.
$$
Observing the update formulas for $y_{i}^{(l+1)}$ and $y_{\mathrm{query}}^{(l+1)}$, if we let $\xb_{\mathrm{query}} := \xb_i$ for some $i$, we can get that $y_{i}^{(l+1)} = y_{\mathrm{query}}^{(l+1)}$ because $y_{i}^{(0)} = y_{\mathrm{query}}^{(0)}$ by definition. This indicates that
$$
\bar{\Yb}^{(l)} = \bar{\Yb}^{(0)} + \btheta_l^T \bar{\Xb}.
$$
Finally, we can rewrite the update formula for $y_{\mathrm{query}}^{(l)}$ 
\begin{flalign}
y_{\mathrm{query}}^{(l+1)} &= y_{\mathrm{query}}^{(l)} -  \sum_{j=1}^{n} \xb_{\mathrm{query}}^\top \bGamma_l \xb_j y_j^{(l)}.\nonumber\\
&= y_{\mathrm{query}}^{(l)} -  \la \bGamma_l \bar{\Xb} (\bar{\Yb}^{(l)})^\top, \xb_{\mathrm{query}} \ra\nonumber\\
\Rightarrow \quad \Big\la \btheta_{l+1}, \xb_{\mathrm{query}} \ra &= \la \btheta_l, \xb_{\mathrm{query}}\ra - \la \bGamma_l \bar{\Xb} \big( \bar{\Xb}^\top \btheta_l + (\bar{\Yb}^{(0)})^\top \big), \xb_{\mathrm{query}} \Big\ra\nonumber
\end{flalign}
Since $\xb_{\mathrm{query}}$ is an arbitrary variable, we get the more general update formula for $\btheta_l$:
$$
\btheta_{l+1} = \btheta_l -  \la \bGamma_l \bar{\Xb} \left( \bar{\Xb}^\top \btheta_l +  (\bar{\Yb}^{(0)})^\top\right)\ra.
$$
Notice that we use the mean squared error, we have
\begin{flalign}
\widetilde{L}(\wb) &= \frac{1}{2} \sum_{i=1}^{n} (\la\wb, \xb_i\ra - y_i)^2\nonumber\\
&=\frac{1}{2}\|\bar{\Xb}^\top \wb - (\bar{\Yb}^{(0)})^\top\|_2.\nonumber
\end{flalign}
Then we get its gradient $\nabla \widetilde{L}(\wb)= \bar{\Xb} \left( \bar{\Xb}^\top \wb - (\bar{\Yb}^{(0)})^\top\right)$. Let $ \wb_{\mathrm{gd}}^{(l)} := -\btheta_l $, we have
\begin{flalign}
\btheta_{l+1} &= \btheta_l -  \la \bGamma_l \bar{\Xb} \left( \bar{\Xb}^\top \btheta_l + \bar{\Yb}_0^\top \right)\ra\nonumber\\
\Rightarrow \quad \wb_{\mathrm{gd}}^{(l+1)} &= \wb_{\mathrm{gd}}^{(l)} -  \la \bGamma_k \bar{\Xb} \left( \bar{\Xb}^\top \wb_{\mathrm{gd}}^{(l)} - (\bar{\Yb}^{(0)})^\top \right)\ra\nonumber\\
&=\wb_{\mathrm{gd}}^{(l)}-\bGamma_l\nabla \widetilde{L}(\wb_{\mathrm{gd}}^{(l)}).\nonumber
\end{flalign}
And the output of the $l$-th layer ${y}_{\mathrm{query}}^{(l)}$ is
$$
g\left(\xb_{\mathrm{query}}, y_{\mathrm{query}}, l\right) = y_{\mathrm{query}} + \la \btheta_l, \xb_{\mathrm{query}} \ra = y_{\mathrm{query}} - \la \wb_{\mathrm{gd}}^{(l)}, \xb_{\mathrm{query}} \ra.
$$
In our settings, we have $y_{\mathrm{query}}^{(l)}=- \la \wb_{\mathrm{gd}}^{(l)}, \xb_{\mathrm{query}} \ra$ because the input query label is $0$.
\end{proof}

\section{Gradient descent updates of parameters}
In this section, we provide further details regarding the updating of parameters $\wb_{\mathrm{gd}}^{(l)}$, which will be utilized in subsequent proof. Besides, it can directly imply Proposition~\ref{prop:commutative invariance}. Before demonstrating the mathematical, we first introduce several utility notations, which will be used in subsequent technical derivations and proofs. We denote $\cS_{l, k}$ as the set of all $k$-dimensional tuples whose entries are drawn from $\{0, 1, \ldots, l-1\}$ without replacement, i.e.
\begin{align*}
    \cS_{t, k} = \big\{(j_1, j_2, \ldots, j_k)|j_1, j_2, \ldots, j_k\in \{0, 1, \ldots, l-1\};j_1\neq j_2\neq\ldots\neq j_k\big\}.
\end{align*}
Then given the set of all historical learning rates before or at $l$-th iteration, i.e. $\{\alpha_0, \alpha_1, \ldots, \alpha_{l-1}\}$, and $\cS_{l, k}$ defined above,  we define $A_{l, k}$ as
\begin{align*}
    A_{l, k}:=\sum_{(j_1, j_2, \ldots, j_k) \in \cS_{l, k}}\prod_{\kappa=1}^k\alpha_{j_\kappa}.
\end{align*}
Then we can observe that the permutation of elements of $\{\alpha_0, \alpha_1, \ldots, \alpha_{l-1}\}$ would not change the value of $ A_{l, k}$. Then based on these notations, we present mathematical derivation in the following.

By some basic gradient calculations, we can re-write the iterative rule of gradient descent~\eqref{eq:gd_update} as
\begin{align}\label{eq:gd_update_detail}
    \wb_{\mathrm{gd}}^{(l+1)} &= \wb_{\mathrm{gd}}^{(l)} - \alpha_l \nabla L(\wb_{\mathrm{gd}}^{(l)})\notag\\
    &= \wb_{\mathrm{gd}}^{(l)} -  \alpha_l\sum_{i=1}^n \big(\la\wb_{\mathrm{gd}}^{(l)},\xb_i\ra-\sign(\la \wb^*,\xb_i\ra)\big)\cdot\xb_i\notag\\
    &=\bigg(\Ib_d - \alpha_l\Big(\sum_{i=1}^n\xb_i\xb_i^\top\Big)\bigg)\cdot\wb_{\mathrm{gd}}^{(l)}+\alpha_l\bigg(\sum_{i=1}^n \sign(\la\wb^*,\xb_i\ra)\cdot\xb_i\bigg).
\end{align}
Based on this detailed iterative formula, and the definition of $\cS_{l, k}$ and $A_{l, k}$ above, we present and prove the following lemma, which characterizes the closed-form expression for $\wb^{(l)}$. 

\begin{lemma}\label{lemma:iterative_induction}
    For the iterates of gradient descent, i.e. $\wb_{\mathrm{gd}}^{(l)}$'s with $l\in \{0, 1, \ldots, L-1\}$, it holds that 
    \begin{align}\label{eq:w_closed_form}
        \wb_{\mathrm{gd}}^{(l)}= \bigg(\Ib_d+\sum_{k=1}^l A_{l, k}\Big(-\sum_{i=1}^n \xb_i\xb_i^\top\Big)^k\bigg)\cdot \wb_{\mathrm{gd}}^{(0)}+
        \bigg(\sum_{k=1}^l A_{l, k}\Big(-\sum_{i=1}^n \xb_i\xb_i^\top\Big)^{k-1}\bigg)\cdot\bigg(\sum_{i=1}^n \sign(\la\wb^*,\xb_i\ra)\cdot\xb_i\bigg).
    \end{align}
\end{lemma}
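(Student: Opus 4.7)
The plan is to prove Lemma~\ref{lemma:iterative_induction} by induction on $l$, using the expanded form of the gradient descent update rule~\eqref{eq:gd_update_detail} together with a Pascal-like recurrence for the scalars $A_{l,k}$. To streamline notation, I would write $\mathbf{S} := -\sum_{i=1}^n \xb_i\xb_i^\top$ and $\mathbf{u} := \sum_{i=1}^n \sign(\la\wb^*, \xb_i\ra)\cdot\xb_i$, so that~\eqref{eq:gd_update_detail} reads
\begin{align*}
\wb_{\mathrm{gd}}^{(l+1)} = (\Ib_d + \alpha_l \mathbf{S})\,\wb_{\mathrm{gd}}^{(l)} + \alpha_l\,\mathbf{u},
\end{align*}
and the target identity~\eqref{eq:w_closed_form} becomes $\wb_{\mathrm{gd}}^{(l)} = \bigl(\Ib_d + \sum_{k=1}^l A_{l,k}\,\mathbf{S}^k\bigr)\wb_{\mathrm{gd}}^{(0)} + \bigl(\sum_{k=1}^l A_{l,k}\,\mathbf{S}^{k-1}\bigr)\mathbf{u}$. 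The base case $l=0$ is immediate, since all indexed sums on the right-hand side are empty and the identity collapses to $\wb_{\mathrm{gd}}^{(0)} = \wb_{\mathrm{gd}}^{(0)}$.

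For the inductive step, I would substitute the hypothesis for $\wb_{\mathrm{gd}}^{(l)}$ into the update rule and expand, producing two polynomials in $\mathbf{S}$ acting respectively on $\wb_{\mathrm{gd}}^{(0)}$ and on $\mathbf{u}$. Collecting terms, the coefficient of $\mathbf{S}^k\wb_{\mathrm{gd}}^{(0)}$ on the right-hand side equals $A_{l,k} + \alpha_l A_{l,k-1}$ (with the boundary conventions $A_{l,0}=1$ and $A_{l,l+1}=0$), and an identical computation yields the same expression $A_{l,k} + \alpha_l A_{l,k-1}$ as the coefficient of $\mathbf{S}^{k-1}\mathbf{u}$, where the stray term $\alpha_l\mathbf{u}$ from the update rule absorbs cleanly into the $k=1$ slot via $\alpha_l = \alpha_l A_{l,0}$. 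Matching against~\eqref{eq:w_closed_form} at level $l+1$ therefore reduces to the single combinatorial identity
\begin{align*}
A_{l+1, k} = A_{l, k} + \alpha_l\,A_{l, k-1},
\end{align*}
which I would establish directly from the definition of $A_{l,k}$ by partitioning its defining sum according to whether the index $l$ appears among the selected entries: selections not containing $l$ reassemble into $A_{l, k}$, while selections containing $l$ each factor a single $\alpha_l$ out of the product and together reassemble into $\alpha_l\,A_{l, k-1}$.

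I do not anticipate any serious obstacle, as the proof is essentially careful bookkeeping. The only delicate point is the combinatorial recurrence above, where care is needed with indices so that every contribution is counted exactly once and the boundary cases $k=1$ and $k=l+1$ are handled correctly. Once the recurrence is in hand, coefficient matching at every order of $\mathbf{S}$ is automatic, and~\eqref{eq:w_closed_form} follows for all $l$. As a bonus, the closed form makes manifest that $A_{l,k}$ is a symmetric function of $\{\alpha_0, \ldots, \alpha_{l-1}\}$, which immediately yields the commutative invariance asserted in Proposition~\ref{prop:commutative invariance}.
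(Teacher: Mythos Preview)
Your proposal is correct and follows essentially the same approach as the paper's own proof: both proceed by induction on $l$, establish the Pascal-type recurrence $A_{l+1,k}=A_{l,k}+\alpha_l A_{l,k-1}$ by partitioning according to whether the index $l$ is selected, and then match coefficients of powers of $\mathbf{S}$ after substituting the inductive hypothesis into the update rule~\eqref{eq:gd_update_detail}. The only differences are cosmetic---you start the induction at $l=0$ rather than $l=1$, introduce the shorthand $\mathbf{S},\mathbf{u}$, and absorb the boundary cases via the conventions $A_{l,0}=1$, $A_{l,l+1}=0$, whereas the paper handles $k=1$ and $k=l+1$ separately and additionally invokes the product expansion $\prod_{\tau}(\Ib_d+\alpha_\tau\mathbf{S})$ for the $\wb_{\mathrm{gd}}^{(0)}$ term.
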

\begin{proof}[Proof of Lemma~\ref{lemma:iterative_induction}]
Before we demonstrate our proof, we first present some conclusions regarding $\cS_{l, k}$ and $A_{l, k}$.
By directly applying the Binomial theorem and the definition of $A_{l, k}$, we can obtain that 
\begin{align}\label{eq:A_tk_propI}
     \prod_{k=0}^{l-1}\bigg(\Ib_d + \alpha_k\Big(-\sum_{i=1}^n \xb_i\xb_i^\top\Big)\bigg) = \Ib_d+\sum_{k=1}^l A_{l, k}\Big(-\sum_{i=1}^n \xb_i\xb_i^\top\Big)^k.
\end{align}
Additionally, by utilizing the definition of $\cS_{l, k}$, we can easily derive that 
\begin{align*}
    \cS_{l+1, k} =& \big\{(j_1, j_2, \ldots, j_k)|j_1, j_2, \ldots, j_k\in \{0, 1, \ldots, l\};j_1\neq j_2\neq\ldots\neq j_k\big\}\\
    =& \big\{(j_1, j_2, \ldots, j_k)|j_1, j_2, \ldots, j_k\in \{0, 1, \ldots, l-1\};j_1\neq j_2\neq\ldots\neq j_k\big\}\\
    &\cup \big\{(j_1, j_2, \ldots, j_{k-1}, l)|j_1, j_2, \ldots, j_{k-1}\in \{0, 1, \ldots, l-1\};j_1\neq j_2\neq\ldots\neq j_{k-1}\big\}\\
    =&\cS_{l, k} \cup \big\{(j_1, j_2, \ldots, j_{k-1}, l)|(j_1, j_2, \ldots, j_{k-1})\in \cS_{l, k-1}\big\},
\end{align*}
holds when $k\leq l$. This result can further imply that 
\begin{align}\label{eq:A_tk_propII}
    A_{l+1, k}&=\sum_{(j_1, j_2, \ldots, j_k) \in \cS_{l+1, k}}\prod_{\kappa=1}^k\alpha_{j_\kappa}= \sum_{(j_1, j_2, \ldots, j_k) \in \cS_{l, k}}\prod_{\kappa=1}^k\alpha_{j_\kappa} + \sum_{(j_1, j_2, \ldots, j_{k-1}) \in \cS_{l, k-1}}\prod_{\kappa=1}^{k-1}\alpha_{j_\kappa}\alpha_l = A_{l, k} + \alpha_lA_{l, k-1}.
\end{align}
holds when $k\leq l$. Additionally, it is straightforward that 
\begin{align}\label{eq:A_tk_propIII}
    A_{l+1, l+1} = \alpha_lA_{l, l};\quad A_{l+1, 1} = A_{l, 1} + \alpha_l.
\end{align}
With these conclusions in hands, we will begin proving this lemma by induction. When $l=1$, by the iterative rule~\eqref{eq:gd_update_detail}, we can obtain that 
    \begin{align*}
        \wb_{\mathrm{gd}}^{(1)} 
    &=\bigg(\Ib_d - \alpha_0\Big(\sum_{i=1}^n \xb_i\xb_i^\top\Big)\bigg)\cdot\wb_{\mathrm{gd}}^{(0)}+\alpha_0\bigg(\sum_{i=1}^n \sign(\la\wb^*,\xb_i\ra)\cdot\xb_i\bigg),
    \end{align*}
    which follows the conclusion of~\eqref{eq:w_closed_form} due to~\eqref{eq:A_tk_propI} and the definition of $A_{1, 1}$. By induction, we assume that~\eqref{eq:w_closed_form} holds at $l$-th iteration. Then at $(l+1)$-th iteration, we can obtain that,
    \begin{align*}
        \wb_{\mathrm{gd}}^{(l+1)} 
    =&\bigg(\Ib_d - \alpha_l\Big(\sum_{i=1}^n \xb_i\xb_i^\top\Big)\bigg)\cdot\wb_{\mathrm{gd}}^{(l)}+\alpha_l\bigg(\sum_{i=1}^n \sign(\la\wb^*,\xb_i\ra)\cdot\xb_i\bigg)\\
    =&\alpha_l\bigg(\sum_{i=1}^n \sign(\la\wb^*,\xb_i\ra)\cdot\xb_i\bigg) + \bigg(\Ib_d - \alpha_l\Big(\sum_{i=1}^n \xb_i\xb_i^\top\Big)\bigg)\\
    &\cdot\Bigg\{\prod_{\tau=0}^{l-1}\bigg(\Ib_d - \alpha_\tau\Big(\sum_{i=1}^n \xb_i\xb_i^\top\Big)\bigg)\cdot\wb_{\mathrm{gd}}^{(0)}+
        \bigg(\sum_{k=1}^t A_{l, k}\Big(-\sum_{i=1}^n \xb_i\xb_i^\top\Big)^{k-1}\bigg)\cdot\bigg(\sum_{i=1}^n \sign(\la\wb^*,\xb_i\ra)\cdot\xb_i\bigg)\Bigg\}\\
    =&\alpha_l\bigg(\sum_{i=1}^n \sign(\la\wb^*,\xb_i\ra)\cdot\xb_i\bigg) +\prod_{k=0}^{l}\bigg(\Ib_d - \alpha_k\Big(\sum_{i=1}^n \xb_i\xb_i^\top\Big)\bigg)\cdot\wb_{\mathrm{gd}}^{(0)}\\
    & +  \bigg(
    \sum_{k=1}^l A_{l, k}\Big(-\sum_{i=1}^n \xb_i\xb_i^\top\Big)^{k-1} + \sum_{k=1}^l \alpha_lA_{l, k}\Big(-\sum_{i=1}^n \xb_i\xb_i^\top\Big)^{k}\bigg)\cdot\bigg(\sum_{i=1}^n \sign(\la\wb^*,\xb_i\ra)\cdot\xb_i\bigg) \\
    =&\prod_{k=0}^{l}\bigg(\Ib_d - \alpha_k\Big(\sum_{i=1}^n \xb_i\xb_i^\top\Big)\bigg)\cdot\wb_{\mathrm{gd}}^{(0)}+ (A_{l, 1} + \alpha_l)\cdot\bigg(\sum_{i=1}^n \sign(\la\wb^*,\xb_i\ra)\cdot\xb_i\bigg) \\
    &+\Bigg(
    \sum_{k=2}^l (A_{l, k}+\alpha_l A_{l, k-1})\Big(-\sum_{i=1}^n \xb_i\xb_i^\top\Big)^{k-1}\Bigg)\cdot\bigg(\sum_{i=1}^n \sign(\la\wb^*,\xb_i\ra)\cdot\xb_i\bigg) \\
    &+ \alpha_l A_{l, l}\Big(\sum_{i=1}^n \xb_i\xb_i^\top\Big)^{l}\cdot\bigg(\sum_{i=1}^n\sign(\la\wb^*,\xb_i\ra)\cdot\xb_i\bigg)\\
    =&\prod_{k=0}^{l}\bigg(\Ib_d - \alpha_k\Big(\sum_{i=1}^n \xb_i\xb_i^\top\Big)\bigg)\cdot\wb_{\mathrm{gd}}^{(0)}+ A_{l+1, 1}\cdot\bigg(\sum_{i=1}^n \sign(\la\wb^*,\xb_i\ra)\cdot\xb_i\bigg) \\
    &+\bigg(
    \sum_{k=2}^t A_{l+1, k}\Big(-\sum_{i=1}^n \xb_i\xb_i^\top\Big)^{k-1}\bigg)\cdot\bigg(\sum_{i=1}^n \sign(\la\wb^*,\xb_i\ra)\cdot\xb_i\bigg)+ A_{l+1, l+1}\Big(-\sum_{i=1}^n \xb_i\xb_i^\top\Big)^{k}\cdot\bigg(\sum_{i=1}^n \sign(\la\wb^*,\xb_i\ra)\cdot\xb_i\bigg)\\
    =&\bigg(\Ib_d+\sum_{k=1}^{l+1} A_{l+1, k}\Big(-\sum_{i=1}^n \xb_i\xb_i^\top\Big)^k\bigg)\cdot \wb_{\mathrm{gd}}^{(0)}+
        \bigg(\sum_{k=1}^{l+1} A_{l+1, k}\Big(-\sum_{i=1}^n \xb_i\xb_i^\top\Big)^{k-1}\bigg)\cdot\bigg(\sum_{i=1}^n \sign(\la\wb^*,\xb_i\ra)\cdot\xb_i\bigg).
    \end{align*}
The second equality holds by substituting $\wb_{\mathrm{gd}}^{(l)}$ with its expansion from~\eqref{eq:w_closed_form}, assuming it is valid at the $l$-th iteration by induction. The third and fourth equalities are established by rearranging the terms. The penultimate equality is derived by applying the conclusions regarding $A_{l, k}$ from~\eqref{eq:A_tk_propII} and~\eqref{eq:A_tk_propIII}. The final equality is obtained by applying~\eqref{eq:A_tk_propI}. This demonstrates that~\eqref{eq:w_closed_form} still holds at $l+1$-th iteration given it holds at $l$-th iteration, which finishes the proof of induction.
\end{proof}
Lemma~\ref{lemma:iterative_induction} demonstrate that the learning rates $\alpha_l$'s will only influence $\wb_{\mathrm{gd}}^{(L)}$ by determining the value of $A_{L, k}$'s. While as we have discussed above, the values of $A_{L, k}$'s depend solely on the elements in the $\{\alpha_0, \ldots, \alpha_{L-1}\}$, and remain unchanged when the order of these learning rates is rearranged. Consequently, the permutation of $\{\alpha_0, \ldots, \alpha_{L-1}\}$ will also not affect the value of $\wb_{\mathrm{gd}}^{(L)}$, thereby confirming that Proposition~\ref{prop:commutative invariance} holds.

\section{Proof of Theorem~\ref{thm:training_loss_bound}}
In this section, we provide a detailed proof for Theorem~\ref{thm:training_loss_bound}. We begin by introducing and proving a lemma that demonstrates how $\wb_{\mathrm{gd}}^{(0)}$ must align with the direction of $\bbeta^*$. This alignment constrains the choice of $\wb_{\mathrm{gd}}^{(0)}$ to a scalar multiple of $\bbeta^*$, specifically in the form of $c_0 \cdot \bbeta^*$. Additionally, in the subsequent sections, we will use the notation $\hat\bSigma = \sum_{i=1}^n \xb_i\xb_i^\top$.
\begin{lemma}\label{lemma:w_0direction}
    Under the same conditions with Theorem~\ref{thm:training_loss_bound}, to minimize the loss $\cR({\wb_{\mathrm{gd}}^{(L)}})$, $\wb_{\mathrm{gd}}^{(0)}$ is always in the form of $c_0 \cdot \bbeta^*$.
\end{lemma}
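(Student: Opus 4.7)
The plan is to use first-order optimality for the quadratic objective in $\wb_{\mathrm{gd}}^{(0)}$, and then invoke rotational symmetry of the isotropic Gaussian features to pin down the direction. By Proposition~\ref{prop:commutative invariance}, it suffices to consider a common learning rate $\alpha_l \equiv \alpha$, and Lemma~\ref{lemma:iterative_induction} then yields the affine decomposition
\begin{align*}
\wb_{\mathrm{gd}}^{(L)} = M\,\wb_{\mathrm{gd}}^{(0)} + \vb,\qquad M = (\Ib_d - \alpha\hat\bSigma)^L,\quad \vb = B(\hat\bSigma)\,\ub,
\end{align*}
where $B$ is a polynomial in $\hat\bSigma$ and $\ub = \sum_{i=1}^n \sign(\la\wb^*,\xb_i\ra)\,\xb_i$. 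Since $\cR(\wb_{\mathrm{gd}}^{(L)})$ is a convex quadratic in $\wb_{\mathrm{gd}}^{(0)}$, every minimizer satisfies the normal equation; using that $\xb_{\mathrm{query}}\sim\cN(0,\Ib_d)$ is independent of $M$ and $\vb$ with identity covariance, this reduces to
\begin{align*}
\mathbb{E}[M^\top M]\,\wb_{\mathrm{gd}}^{(0)} = \mathbb{E}[M^\top \xb_{\mathrm{query}} y_{\mathrm{query}}] - \mathbb{E}[M^\top \vb].
\end{align*}

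The next step is to analyze both sides via symmetry. For the left-hand side, the law of $\hat\bSigma$ is invariant under orthogonal conjugation $\hat\bSigma\mapsto O\hat\bSigma O^\top$ because the $\xb_i$'s are isotropic; hence $\mathbb{E}[p(\hat\bSigma)]$ commutes with every orthogonal $O$ for any polynomial $p$, which forces $\mathbb{E}[M^\top M] = c_M\,\Ib_d$ with $c_M > 0$. For the right-hand side, the first term factors as $\mathbb{E}[M^\top]\cdot\mathbb{E}[\xb_{\mathrm{query}}y_{\mathrm{query}}]$ since $M$ is independent of $(\xb_{\mathrm{query}},\wb^*)$; applying the standard identity $\mathbb{E}[\xb\,\sign(\la\wb^*,\xb\ra)\mid\wb^*] = \sqrt{2/\pi}\,\wb^*$ together with $\mathbb{E}[\wb^*]\propto\bbeta^*$ (the defining property of the expected direction) shows this term is aligned with $\bbeta^*$. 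For the second term, I will condition on $\wb^*$ and observe that for any orthogonal $O$ satisfying $O\wb^* = \wb^*$, the substitution $\xb_i\mapsto O\xb_i$ preserves the conditional law of $\{\xb_i\}$ and sends $(M,B,\ub)\mapsto(OMO^\top,OBO^\top,O\ub)$, so $MB\ub\mapsto OMB\ub$. Hence $\mathbb{E}[MB\ub\mid\wb^*]$ is fixed by the stabilizer of $\wb^*$ and must lie on the line $\mathrm{span}(\wb^*)$, and an analogous argument with a general rotation shows that the resulting scalar coefficient depends only on $\|\wb^*\|=1$, so $\mathbb{E}[MB\ub\mid\wb^*] = c_\star\,\wb^*$ for a universal constant $c_\star$. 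Averaging then gives $\mathbb{E}[M^\top\vb] = c_\star\,\mathbb{E}[\wb^*]\propto\bbeta^*$. Combining the two sides, the normal equation reads $c_M\,\wb_{\mathrm{gd}}^{(0)} = c_{\mathrm{rhs}}\,\bbeta^*$, so the unique minimizer takes the form $\wb_{\mathrm{gd}}^{(0)} = c_0\bbeta^*$.

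The main obstacle is the symmetry argument for $\mathbb{E}[MB\ub\mid\wb^*]$: although the conditional law of $\{\xb_i\}$ given $\wb^*$ is still the standard Gaussian, the vector $\ub$ couples $\wb^*$ with the $\xb_i$'s through the nonlinear sign function, while $M$ and $B$ are polynomials in the Wishart-type matrix $\hat\bSigma$. One must track carefully how all three objects transform covariantly under an orthogonal map, and verify that restricting to stabilizers of $\wb^*$ (rather than general rotations) still yields enough symmetry to pin down the direction. Once this is done, the remaining ingredients are the isotropic Gaussian identity, factorization of expectations through independence of $\{\xb_i\}$ from $(\xb_{\mathrm{query}},\wb^*)$, and positivity of $c_M$, which is automatic because $M$ is almost surely nonzero.
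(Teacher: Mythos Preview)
Your proof is correct and follows essentially the same approach as the paper's: recognize $\cR(\wb_{\mathrm{gd}}^{(L)})$ as a quadratic in $\wb_{\mathrm{gd}}^{(0)}$, use rotational invariance of the isotropic $\xb_i$'s to show the Hessian $\EE[M^\top M]$ is a scalar multiple of the identity (the paper's Lemma~\ref{lemma:expectation_II}), and use equivariance to show the linear term is aligned with $\bbeta^*$ (the paper's Lemma~\ref{lemma:expectation_III}, where your stabilizer-group phrasing is a cleaner repackaging of their explicit orthogonal basis-change and binomial expansion). One minor quibble: invoking Proposition~\ref{prop:commutative invariance} to justify a common learning rate is a non sequitur---permutation invariance does not imply the $\alpha_l$ can be taken equal---but this is harmless, since the paper's setup already restricts to constant $\alpha$ and, in any case, your symmetry argument goes through verbatim for arbitrary polynomials $M,B$ in $\hat\bSigma$.
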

\begin{proof}[Proof of Lemma~\ref{lemma:w_0direction}]
    Utilizing the independence among the examples in $T =\{(\xb_1, y_1), \ldots, (\xb_n, y_n), (\xb_{\mathrm{query}}, y_{\mathrm{query}})\}$, and $\wb^*$, we can expand $\cR_{\wb_{\mathrm{gd}}^{(L)}}$ by law of total expectation as
    \begin{align*}
        \cR(\wb_{\mathrm{gd}}^{(L)})&= \EE_{T, \wb^*}\big[\big(\la \wb_{\mathrm{gd}}^{(L)}, \xb_{\mathrm{query}}\ra - \sign(\la \wb^*, \xb_{\mathrm{query}}\ra)\big)^2\big]\\
        &= \EE_{T, \wb^*}\big[\la \wb_{\mathrm{gd}}^{(L)}, \xb_{\mathrm{query}}\ra^2 - 2\sign(\la \wb^*, \xb_{\mathrm{query}}\ra)\la \wb_{\mathrm{gd}}^{(L)}, \xb_{\mathrm{query}}\ra\big] + 1\\
        &= \EE_{\{(\xb_i, y_i)\}_{i=1}^{n}, \wb^*}\Big[\EE_{(\xb_{\mathrm{query}}, y_{\mathrm{query}})}\big[\la \wb_{\mathrm{gd}}^{(L)}, \xb_{\mathrm{query}}\ra^2 - 2\sign(\la \wb^*, \xb_{\mathrm{query}}\ra)\la \wb_{\mathrm{gd}}^{(L)}, \xb_{\mathrm{query}}\ra\big|\{(\xb_i, y_i)\}_{i=1}^{n}, \wb^*\big]\Big] + 1\\
        &= \EE_{\{(\xb_i, y_i)\}_{i=1}^{n}, \wb^*}\Bigg[\bigg\|\wb_{\mathrm{gd}}^{(L)}-\sqrt{\frac{2}{\pi}}\wb^*\bigg\|_2^2\Bigg] + 1 - \frac{2}{\pi},
    \end{align*}
    where the last equality holds since $\EE_{\xb_{\mathrm{query}}}[\la\wb, \xb_{\mathrm{query}}\ra^2] = \wb^\top \EE_{\xb_{\mathrm{query}}}[ \xb_{\mathrm{query}}\xb_{\mathrm{query}}^\top]\wb = \|\wb\|_2^2$ when $\wb$ is independent with $\xb_{\mathrm{query}}$, and $\EE_{\xb_{\mathrm{query}}}[\la\wb_1, \xb_{\mathrm{query}}\ra\sign(\la\wb_2, \xb_{\mathrm{query}}\ra)] = \sqrt{\frac{2}{\pi}}\la\wb^*, \wb_{\mathrm{gd}}^{(L)}\ra$ implied by Lemma~\ref{lemma:expectation_I}. Therefore in the next we attempt to optimize the first term $\EE_{\{(\xb_i, y_i)\}_{i=1}^{n}, \wb^*}\Big[\Big\|\wb_{\mathrm{gd}}^{(L)}-\sqrt{\frac{2}{\pi}}\wb^*\Big\|_2^2\Big]$. By applying the closed form of $\wb_{\mathrm{gd}}^{(L)}$ in Lemma~\ref{lemma:iterative_induction} with all $\alpha_l = \alpha$, we have 
    \begin{align*}
        \wb_{\mathrm{gd}}^{(L)} = \big(\Ib_d-\alpha \hat\bSigma\big)^L\cdot \wb_{\mathrm{gd}}^{(0)} + \alpha \sum_{l=0}^{L-1}\big(\Ib_d-\alpha\hat\bSigma\big)^{l}\cdot\Big(\sum_{i=1}^n \sign(\la\wb^*, \xb_i\ra)\xb_i\Big).
    \end{align*}
   Based on this, we can further derive that 
    \begin{align}\label{eq:loss_boundI}
        \EE_{\{(\xb_i, y_i)\}_{i=1}^{n}, \wb^*}\Bigg[\bigg\|\wb_{\mathrm{gd}}^{(L)}-\sqrt{\frac{2}{\pi}}\wb^*\bigg\|_2^2\Bigg] =& (\wb_{\mathrm{gd}}^{(0)})^\top\EE_{\{(\xb_i, y_i)\}_{i=1}^{n}, \wb^*}\big[\big(\Ib_d-\alpha\hat\bSigma\big)^{2L}\big]\wb_{\mathrm{gd}}^{(0)} \notag\\
        &- 2\alpha (\wb_{\mathrm{gd}}^{(0)})^\top\EE_{\{(\xb_i, y_i)\}_{i=1}^{n}, \wb^*}\Big[\sum_{l=0}^{L-1}\big(\Ib_d-\alpha\hat\bSigma\big)^{l+L}\cdot\Big(\sum_{i=1}^n \sign(\la\wb^*, \xb_i\ra)\xb_i\Big)\Big] + C\notag\\
        =&c_1 \big\|\wb_{\mathrm{gd}}^{(0)}\big\|_2^2 - 2c_2 \la\wb_{\mathrm{gd}}^{(0)}, \bbeta^*\ra + C\notag\\
        =&c_1 \Big\|\wb_{\mathrm{gd}}^{(0)}-\frac{c_2}{c_1}\bbeta^*\Big\|_2^2 +  C- \frac{c_2^2}{c_1},
    \end{align}
    where $c_1, c_2, C$ are some scalar independent of $\wb_{\mathrm{gd}}^{(0)}$. The second inequality holds since $\EE_{\{(\xb_i, y_i)\}_{i=1}^{n}, \wb^*}\big[\big(\Ib_d-\alpha\hat\bSigma\big)^{2L}\big] =  c_1\Ib_d$ for some scalar $c_1$, guaranteed by Lemma~\ref{lemma:expectation_II}, and $\EE_{\{(\xb_i, y_i)\}_{i=1}^{n}, \wb^*}\big[\sum_{l=0}^{L-1}\big(\Ib_d-\alpha\hat\bSigma\big)^{l+L}\cdot\big(\sum_{i=1}^n \sign(\la\wb^*, \xb_i\ra)\xb_i\big)\big] = c_2\bbeta^*$ for some scalar $c_2$, guaranteed by Lemma~\ref{lemma:expectation_III}. As the result of~\eqref{eq:A_tk_propI} is a quartic function of $\wb_{\mathrm{gd}}^{(0)}$, we can easily conclude that it achieves the minimum value when $\wb_{\mathrm{gd}}^{(0)}=c_0\bbeta^*$ for some scalar $c_0$, which completes the proof.
\end{proof}

Based on Lemma~\ref{lemma:w_0direction}, in the following proof, we will directly replace $\wb_{\mathrm{gd}}^{(0)}$ with $c_0\bbeta^*$ and attempt to find the optimal $c_0$. Now we are ready to prove the following theorem, a representation of Theorem~\ref{thm:training_loss_bound}.
\begin{theorem}[Restate of Theorem~\ref{thm:training_loss_bound}]\label{thm:training_loss_boundII}
    For training distribution $\cD_{\mathrm{tr}}$ in Definition~\ref{def:train_data}, suppose that the training context length $n$ is sufficiently large such that $n\geq \tilde\Omega(\max\{d^2, dL\})$. Additionally, suppose that the perturbation of $\wb^*$ around its expectation $\bbeta^*$ is smaller than $\frac{\pi}{2}$, i.e. $\la\wb^*, \bbeta^*\ra>0$. then for any learning rate $\alpha$ and initialization $\wb_{\mathrm{gd}}^{(0)}$, it holds that 
    \begin{flalign*}
        \cR(\wb_{\mathrm{gd}}^{(L)}) \leq &\ \Theta\big((1-\alpha n)^L\|\wb_{\mathrm{gd}}^{(0)}-c_1\wb^*\|_2^2 \big) +  \tilde\Theta(\alpha d L) + C,
    \end{flalign*}
    where both $c_1, C$ are absolute constants. Additionally, by taking $\wb_{\mathrm{gd}}^{(0)} = c_1 \bbeta^*$ and $\alpha =\tilde\Theta(\frac{1}{nL})$, the upper bound above achieve its optimal rates as
    \begin{flalign*}
        \cR(\wb_{\mathrm{gd}}^{(L)}) \leq &\ \tilde\Theta\Big(\frac{d}{n}\Big) + C.
    \end{flalign*}
\end{theorem}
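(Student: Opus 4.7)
The plan is to first unpack the squared risk using the Gaussian identities already exploited in the proof of Lemma~\ref{lemma:w_0direction}, namely $\EE[\xb\xb^\top]=\Ib_d$ and $\EE[\sign(\la\wb^*,\xb\ra)\,\xb\mid\wb^*]=\sqrt{2/\pi}\,\wb^*$. A short completion-of-squares then yields
\begin{align*}
\cR(\wb_{\mathrm{gd}}^{(L)}) \;=\; \EE\big\|\wb_{\mathrm{gd}}^{(L)}-\textstyle\sqrt{2/\pi}\,\wb^*\big\|_2^2 + \Big(1-\tfrac{2}{\pi}\Big),
\end{align*}
so the constant $C$ in the statement is just the Bayes-optimal loss $1-2/\pi$ and the target constant is $c_1=\sqrt{2/\pi}$ up to the lower-order calibration that already appeared in Lemma~\ref{lemma:w_0direction}. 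The whole task is to upper bound the displacement $\EE\|\wb_{\mathrm{gd}}^{(L)}-c_1\wb^*\|_2^2$.

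Next, I would substitute the constant-step-size iterate from Lemma~\ref{lemma:iterative_induction} and, on the high-probability event that $\hat\bSigma=\sum_i\xb_i\xb_i^\top$ is invertible (which holds for $n\gtrsim d$), collapse the geometric sum into
\begin{align*}
\wb_{\mathrm{gd}}^{(L)} \;=\; (\Ib_d-\alpha\hat\bSigma)^L\big(\wb_{\mathrm{gd}}^{(0)}-\hat\bSigma^{-1}\ub\big) + \hat\bSigma^{-1}\ub,\qquad \ub:=\sum_{i=1}^n\sign(\la\wb^*,\xb_i\ra)\,\xb_i.
\end{align*}
Subtracting $c_1\wb^*$ and applying $\|a+b\|_2^2\leq 2\|a\|_2^2+2\|b\|_2^2$ splits the analysis into a \emph{bias} piece, in which the contraction $(\Ib_d-\alpha\hat\bSigma)^L$ acts on $\wb_{\mathrm{gd}}^{(0)}-\hat\bSigma^{-1}\ub$, and a \emph{variance} piece $\EE\|\hat\bSigma^{-1}\ub-c_1\wb^*\|_2^2$. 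The variance piece is precisely the excess risk of ordinary least squares applied to the bounded labels $\sign(\la\wb^*,\xb_i\ra)$ regressed onto isotropic Gaussian features with conditional mean $c_1\la\wb^*,\xb\ra$, so classical OLS tail bounds give $\tilde O(d/n)$. For the bias piece, Wishart-type concentration produces $\|\hat\bSigma-n\Ib_d\|_{\mathrm{op}}\leq\tilde O(\sqrt{nd})$, hence for $\alpha n<1$ the eigenvalues of $\alpha\hat\bSigma$ pinch to $\alpha n(1\pm o(1))$ and $\|(\Ib_d-\alpha\hat\bSigma)^L\|_{\mathrm{op}}\lesssim(1-\alpha n)^L$. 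One more triangle inequality peels $\hat\bSigma^{-1}\ub$ back to $c_1\wb^*$ and, after accumulating the variance across layers, produces the claimed upper bound $\Theta\big((1-\alpha n)^L\|\wb_{\mathrm{gd}}^{(0)}-c_1\wb^*\|_2^2\big)+\tilde\Theta(\alpha dL)+C$.

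Finally, the optimal-rate corollary follows by plugging in $\wb_{\mathrm{gd}}^{(0)}=c_1\bbeta^*$, so that $\EE\|\wb_{\mathrm{gd}}^{(0)}-c_1\wb^*\|_2^2=c_1^2\EE\|\bbeta^*-\wb^*\|_2^2$ is bounded by the standing assumption $\la\wb^*,\bbeta^*\ra>0$ and can be absorbed into $C$, and by taking $\alpha=\tilde\Theta(1/(nL))$ so that $(1-\alpha n)^L=(1-\Theta(1/L))^L=\Theta(1)$ while $\tilde\Theta(\alpha dL)=\tilde\Theta(d/n)$. I expect the main technical obstacle to be the uniform variance control across the $L$ iterations: since $(\Ib_d-\alpha\hat\bSigma)^L$ is a degree-$L$ matrix polynomial in the random $\hat\bSigma$, a naïve binomial expansion produces $L$ cross terms whose moments are not easily controlled, and this is exactly why the theorem requires $n\geq\tilde\Omega(dL)$ rather than only $n\geq\tilde\Omega(d^2)$. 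I would handle this either by sharpening the spectral concentration of $\hat\bSigma$ to $\|\hat\bSigma-n\Ib_d\|_{\mathrm{op}}\leq\tilde O(n/\sqrt{L})$ on a high-probability event, so that the contractions $(\Ib_d-\alpha\hat\bSigma)^\ell$ act as near-scalars and fluctuations do not amplify through the $L$ layers, or by directly controlling $\EE[\hat\bSigma^{2k}/n^{2k}-\Ib_d]$ for all $k\leq L$ via Wishart moment identities, with a deterministic fallback bound on the complementary low-probability event.
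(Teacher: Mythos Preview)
Your OLS-pivot decomposition is a legitimate route, and it is genuinely different from the paper's. The paper does \emph{not} collapse the geometric sum to introduce $\hat\bSigma^{-1}\ub$; it pivots directly around $c_1\wb^*$ to obtain
\[
\wb_{\mathrm{gd}}^{(L)}-c_1\wb^*=(\Ib_d-\alpha\hat\bSigma)^L\big(\wb_{\mathrm{gd}}^{(0)}-c_1\wb^*\big)-\alpha\sum_{l=0}^{L-1}(\Ib_d-\alpha\hat\bSigma)^l\Big(c_1\hat\bSigma\wb^*-\textstyle\sum_{i}\sign(\la\wb^*,\xb_i\ra)\xb_i\Big),
\]
and bounds the second piece by the product of $\alpha^2\sum_{l_1,l_2}\|\Ib_d-\alpha\hat\bSigma\|_2^{l_1+l_2}\le\tilde O(\alpha L/n)$ (via the elementary inequality $x(1-x)^k\le 1/(k+1)$) and $\big\|c_1\hat\bSigma\wb^*-\sum_i\sign(\la\wb^*,\xb_i\ra)\xb_i\big\|_2^2=\tilde O(nd)$. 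That product is precisely where $\tilde\Theta(\alpha dL)$ originates. By contrast, your pivot around $\hat\bSigma^{-1}\ub$ produces a variance contribution $\|\hat\bSigma^{-1}\ub-c_1\wb^*\|_2^2=\tilde O(d/n)$ that is \emph{independent of $\alpha$ and $L$}, actually sharper than the paper's bound whenever $\alpha>1/(nL)$. Hence your phrase ``accumulating the variance across layers'' does not describe anything in your own decomposition, and the anticipated obstacle in your final paragraph (degree-$L$ Wishart moment control of $(\Ib_d-\alpha\hat\bSigma)^L$) is off-target: both your argument and the paper's need only a single high-probability spectral bound on $\hat\bSigma$, no moment expansions.

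There is, however, a real gap in your optimal-rate step. You assert that at $\alpha=\tilde\Theta(1/(nL))$ one has $(1-\alpha n)^L=(1-\Theta(1/L))^L=\Theta(1)$ and that the resulting $\Theta(1)\cdot c_1^2\,\EE\|\bbeta^*-\wb^*\|_2^2$ ``can be absorbed into $C$.'' That is incorrect: $C$ in the statement is the irreducible Bayes constant $1-2/\pi$, and adding a positive $\Theta(1)$ multiple of the prior variance of $\wb^*$ to it would leave the excess risk $\cR-C$ bounded away from zero as $n\to\infty$, contradicting the claimed $\tilde\Theta(d/n)$ rate. The logarithm hidden in the tilde is not cosmetic here --- the paper takes $\alpha=\log(n/d)/(2nL)$ so that $(1-\alpha n)^{2L}=\big(1-\log(n/d)/(2L)\big)^{2L}\le d/n$, driving the bias term down to the target order rather than leaving it a constant. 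With that correction your route does reach $\tilde\Theta(d/n)+C$, though your intermediate variance term is $\tilde O(d/n)$ rather than the $\tilde\Theta(\alpha dL)$ literally displayed in the theorem; the two coincide only at the optimal $\alpha$.
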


\begin{proof}[Proof of Theorem~\ref{thm:training_loss_boundII}]
Utilizing the fact that $\Ib_d-\big(\Ib_d-\alpha\hat\bSigma\big)^L= \alpha \sum_{l=0}^{L-1}\big(\Ib_d-\alpha\hat\bSigma\big)^{l}\hat\bSigma$ and $\wb_{\mathrm{gd}}^{(0)} = c_0\bbeta^*$, we can re-write the close form of $\wb_{\mathrm{gd}}^{(L)}$ as 
\begin{align*}
    \wb_{\mathrm{gd}}^{(L)} = \Big(\Ib_d-\big(\Ib_d-\alpha\hat\bSigma\big)^L\Big)\cdot\sqrt{\frac{2}{\pi}}\wb^* + \big(\Ib_d-\alpha\hat\bSigma\big)^L\cdot c_0\bbeta^* - \alpha \sum_{l=0}^{L-1}\big(\Ib_d-\alpha\hat\bSigma\big)^{l}\cdot\bigg(\sqrt{\frac{2}{\pi}}\hat\bSigma\wb^*-\Big(\sum_{i=1}^n \sign(\la\wb^*, \xb_i\ra)\xb_i\Big)\bigg)
\end{align*}
Then by the similar calculation to Lemma~\ref{lemma:w_0direction}, we have 
\begin{align*}
    \cR(\wb_{\mathrm{gd}}^{(L)}) 
        &= \EE\Bigg[\bigg\|\wb_{\mathrm{gd}}^{(L)}-\sqrt{\frac{2}{\pi}}\wb^*\bigg\|_2^2\Bigg] + C\\
        &= \EE\Bigg[\bigg\|\big(\Ib_d-\alpha\hat\bSigma\big)^L\cdot\bigg(c_0\bbeta^*-\sqrt{\frac{2}{\pi}}\wb^*\bigg)-  \alpha \sum_{l=0}^{L-1}\big(\Ib_d-\alpha\hat\bSigma\big)^{l}\cdot\bigg(\sqrt{\frac{2}{\pi}}\hat\bSigma\wb^*-\Big(\sum_{i=1}^n \sign(\la\wb^*, \xb_i\ra)\xb_i\Big)\bigg)\bigg\|_2^2\Bigg] + C\\
        &\leq 2\underbrace{\EE\Bigg[\bigg\|\big(\Ib_d-\alpha\hat\bSigma\big)^L\cdot\bigg(c_0\bbeta^*-\sqrt{\frac{2}{\pi}}\wb^*\bigg)\bigg\|_2^2\Bigg]}_{I}+2 \underbrace{\EE\Bigg[\alpha^2 \bigg\|\sum_{l=0}^{L-1}\big(\Ib_d-\alpha\hat\bSigma\big)^{l}\cdot\bigg(\sqrt{\frac{2}{\pi}}\hat\bSigma\wb^*-\Big(\sum_{i=1}^n \sign(\la\wb^*, \xb_i\ra)\xb_i\Big)\bigg)\bigg\|_2^2\Bigg]}_{II} + C
\end{align*}
where the last inequality hols by $(a+b)^2\leq 2a^2+2b^2$, and $C$ is an absolute constant. Therefore, in the following, we discuss the upper-bounds for $I$ and $II$ respectively. For $I$, we have 
\begin{align*}
    I\leq \EE\bigg[\Big\|\big(\Ib_d-\alpha\hat\bSigma\big)\Big\|_2^{2L}\bigg] \cdot\EE\Bigg[\bigg(c_0\bbeta^*-\sqrt{\frac{2}{\pi}}\wb^*\bigg)\bigg\|_2^2\Bigg] \leq O\big((1-\alpha n)^{2L}\big)\cdot\EE\Bigg[\bigg(c_0\bbeta^*-\sqrt{\frac{2}{\pi}}\wb^*\bigg)\bigg\|_2^2\Bigg],
\end{align*}
where the first inequality is derived by the independence among $\xb_i$ and $\wb^*$ and the submultiplicativity of $\ell_2$ norm, and the second inequality holds by the concentration results regarding $\|\hat\bSigma\|_2$ provided in Lemma~\ref{lemma:concentration_I}. For $II$, we can derive that
\begin{align*}
    II\leq \underbrace{\EE\Big[\alpha^2\sum_{l_1, l_2=0}^{L-1}\big\|\Ib_d-\alpha\hat\bSigma\big\|_2^{l_1+l_2}\Big]}_{II.1}\underbrace{\EE\bigg[\bigg\|\sqrt{\frac{2}{\pi}}\hat\bSigma\wb^*-\Big(\sum_{i=1}^n \sign(\la\wb^*, \xb_i\ra)\xb_i\Big)\bigg\|_2^2\bigg]}_{II.2}, 
\end{align*}
where the inequality is guaranteed by the submultiplicativity of $\ell_2$ norm. Then we discuss $II.1$ and $II.2$ respectively. For $II.1$, we have 
\begin{align*}
    II.1 \leq \frac{\alpha}{\|\hat\bSigma\|_2}\sum_{l_1, l_2=0}^{L-1} \frac{1}{l_1 + l_2 + 1} \leq \frac{\alpha L}{\|\hat\bSigma\|_2}\sum_{l_1=0}^{L-1} \frac{1}{l_1 + 1} \leq O\bigg(\frac{\alpha L\log L}{n}\bigg).
\end{align*}
The first inequality holds by the fact that $x(1-x)^k\leq \frac{1}{k+1}$ for $x\in [0, 1]$. The second inequality holds by replace $\frac{1}{l_1 + l_2 +1}$ with its upper bound $\frac{1}{l_1 + 1}$. The third inequality holds by $\sum_{l_1=0}^{L-1} \frac{1}{l_1 + 1}\leq \log L$ and $\|\hat\bSigma\|_2 =\Theta(n)$ demonstrated in Lemma~\ref{lemma:concentration_I}. For $II.2$, we have
\begin{align*}
    II.2 &= \EE\bigg[\bigg\|\sqrt{\frac{2}{\pi}}(\hat\bSigma -n\Ib_d)\wb^*-\Big(\sum_{i=1}^n \sign(\la\wb^*, \xb_i\ra)\xb_i- n\sqrt{\frac{2}{\pi}}\wb^*\Big)\bigg\|_2^2\bigg]\\
    &\leq \frac{4}{\pi}\EE\|\hat\bSigma -n \Ib_d\|_2^2 + 2\EE\bigg[\bigg\|\sum_{i=1}^n \sign(\la\wb^*, \xb_i\ra)\xb_i- n\sqrt{\frac{2}{\pi}}\wb^*\bigg\|_2^2\bigg]\leq \tilde O(nd).
\end{align*}
The first equality adds and minuses the same term. The first inequality holds by the submultiplicativity of $\ell_2$ norm, 
and the fact $(a+b)^2 \leq 2a^2 +2b^2$. The second inequality holds as $\|\hat\bSigma -n \Ib_d\|_2\leq \tilde O(\sqrt{nd})$, proved in Lemma~\ref{lemma:concentration_I} and $\big\|\sum_{i=1}^n \sign(\la\wb^*, \xb_i\ra)\xb_i- n\sqrt{\frac{2}{\pi}}\wb^*\big\|_2\leq \tilde O(\sqrt{nd})$, proved in Lemma~\ref{lemma:concentration_II}. Combining all the preceding results, we can obtain that 
\begin{align*}
    \cR(\wb_{\mathrm{gd}}^{(L)}) \leq O\big((1-\alpha n)^{2L}\big)\cdot\EE\Bigg[\bigg(c_0\bbeta^*-\sqrt{\frac{2}{\pi}}\wb^*\bigg)\bigg\|_2^2\Bigg] + \tilde O(\alpha d L) + C.
\end{align*}
It is straightforward that when taking $c_0 =\sqrt{\frac{2}{\pi}}$, the expectation term will achieve its minimum, which is the variance of $\wb^*$ multiplying by a factor $\sqrt{\frac{2}{\pi}}$. This finishes the proof that the optimal initialization takes the value as $\wb_{\mathrm{gd}}^{(0)} = \sqrt{\frac{2}{\pi}}\bbeta^*$. We re-plug this result into the upper-bound above and utilize the fact that the variance is at the constant order. Then to find the optimal learning rate $\alpha$ is actually to optimize the summation of $(1-\alpha n)^{2L}$ and $\alpha d L$. We can note that the first term will decrease as $\alpha$ increases, while the second term will increase as $\alpha$ increases. Therefore, minimizing the summation of these two terms is essentially equivalent to finding an optimal $\alpha$ such that both terms are of the same order. Then we can notice that when consider $\alpha = \frac{\log(n/d)}{2nL}$, the first term can be bounded as 
\begin{align*}
    (1-\alpha n)^{2L} = \bigg(1-\frac{\log(n/d)}{2L}\bigg)^{2L} \leq \frac{d}{n}.
\end{align*}
Additionally, it is straightforward that $\alpha d L =\frac{d\log(n/d)}{n}$. When omitting the factors of $\log$, we conclude that these two terms are at the same order. Therefore, the optimal choice of learning rate is $\alpha =\tilde\Theta(\frac{1}{nL})$, which can optimize the excess risk as
\begin{align*}
    \cR(\wb_{\mathrm{gd}}^{(L)}) - C\leq \tilde O\bigg(\frac{d}{n}\bigg).
\end{align*}
This completes the proof.
\end{proof}

Here we provide further discussions regarding the upper bound for the population loss achieved when choosing the optimal learning rate and initialization. The constant $C$ represents an irreducible term arising from the variance of the model. Such an irreducible term always exists when considering least-squares loss, similar to the noise variance in classic linear regression problems. Therefore, when considering the problems with least-square loss function, it is common to define $\cR(\wb_{\mathrm{gd}}^{(L)})-C$ as the excess risk and attempt to minimize this term. Consequently, Theorem~\ref{thm:training_loss_boundII} reveals that when using the optimal parameters, the excess risk $\cR(\wb_{\mathrm{gd}}^{(L)})-C$ will converge to $0$ as the context length $n$ goes to infinity.

\section{Proof of Lemma~\ref{lemma:test_I} and Theorem~\ref{thm:test_II}}
In this section, we provide the proof for both Lemma~\ref{lemma:test_I} and Theorem~\ref{thm:test_II}. W.L.O.G, we assume that $\sigma > 0$ in the subsequent proof. This implies that $y_{\mathrm{hc}}=-1$, $y_{\mathrm{query}} = 1$ and $\cE(\wb) = \PP(\la\wb, \xb_{\mathrm{query}}\ra<0)$ for any $\wb$. Then we first introduce a lemma providing a closed form for $\tilde\wb_{\mathrm{gd}}^{(l)}$, which is the parameter vector of the linear model trained by gradient descent with the optimal parameters derived in Theorem~\ref{thm:training_loss_bound} and data $(\xb_{\mathrm{hc}}, y_{\mathrm{hc}})$.

\begin{lemma}\label{lemma:test_closed_form}
    For the gradient descent iterates $\tilde\wb_{\mathrm{gd}}^{(l)}$, it holds that 
    \begin{align}\label{eq:test_closed_form}
        \tilde\wb_{\mathrm{gd}}^{(l)} = c\bbeta^*+a(l)\cdot \xb_\perp
    \end{align}
    for all $l\in \{0, 1, \ldots, L\}$. $c$ is the coefficient of $\bbeta^*$ of initialization $\wb_{\mathrm{gd}}^{(0)}$ and $a(l)$ follows that
    \begin{align*}
        a(l) = -\Big(1-\big(1-\alpha N\|\xb_\perp\|_2^2\big)^l\Big)\frac{1+c\la\xb_\perp, \bbeta^*\ra}{\|\xb_\perp\|_2^2}
    \end{align*}
\end{lemma}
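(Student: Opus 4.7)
The plan is to proceed by direct induction on $l$, exploiting the fact that the hijacked gradient descent stays within the two-dimensional subspace spanned by $\bbeta^*$ and $\xb_\perp$, with the $\bbeta^*$-coefficient frozen and the $\xb_\perp$-coefficient evolving according to a scalar affine recurrence that can be solved in closed form.

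First, I would specialize the gradient descent update in \eqref{eq:gd_update_detail} to the test-phase data. Because every one of the $N$ context tokens equals $(\xb_\perp, y_{\mathrm{hc}})$, the test loss is $\tilde L(\wb) = \frac{1}{2}\sum_{i=1}^N(\la\wb,\xb_\perp\ra - y_{\mathrm{hc}})^2$. Under the WLOG assumption $\sigma>0$ stated at the start of this section, $y_{\mathrm{hc}} = -1$, so $\nabla\tilde L(\wb) = N(\la\wb,\xb_\perp\ra+1)\xb_\perp$ and, using the optimal initialization $\tilde\wb_{\mathrm{gd}}^{(0)} = c\bbeta^*$ from Theorem~\ref{thm:training_loss_bound} with the constant learning rate $\alpha$, the iteration reduces to
$$\tilde\wb_{\mathrm{gd}}^{(l+1)} = \tilde\wb_{\mathrm{gd}}^{(l)} - \alpha N\big(\la\tilde\wb_{\mathrm{gd}}^{(l)},\xb_\perp\ra+1\big)\xb_\perp.$$

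Next, I would verify the ansatz $\tilde\wb_{\mathrm{gd}}^{(l)} = c\bbeta^* + a(l)\xb_\perp$ by induction. The base case $l=0$ forces $a(0)=0$. For the inductive step, since the update only adds a scalar multiple of $\xb_\perp$, the $\bbeta^*$-coefficient remains $c$, and using $\la\tilde\wb_{\mathrm{gd}}^{(l)},\xb_\perp\ra = c\la\bbeta^*,\xb_\perp\ra + a(l)\|\xb_\perp\|_2^2$ I obtain the scalar recurrence
$$a(l+1) = \big(1-\alpha N\|\xb_\perp\|_2^2\big)\,a(l) \;-\; \alpha N\big(1+c\la\xb_\perp,\bbeta^*\ra\big).$$

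Finally, I would solve this first-order affine recurrence by the standard fixed-point trick: its fixed point is $a^{\star} = -\big(1+c\la\xb_\perp,\bbeta^*\ra\big)/\|\xb_\perp\|_2^2$, the shifted sequence $a(l)-a^{\star}$ is geometric with ratio $r = 1-\alpha N\|\xb_\perp\|_2^2$, and the initial condition $a(0)=0$ yields $a(l) = a^{\star}\big(1-r^l\big)$, which is exactly the claimed expression. There is no real obstacle in this lemma — the single substantive observation is that the entire trajectory is confined to $\mathrm{span}(\bbeta^*, \xb_\perp)$ with frozen $\bbeta^*$ component, after which the result is a one-line recurrence. The only mild care needed is to keep $c\la\xb_\perp,\bbeta^*\ra$ (not $c\la\xb_\perp,\wb^*\ra$, which vanishes) in the inner product, since the hijacked direction $\xb_\perp$ is orthogonal to $\wb^*$ but not to the prior mean $\bbeta^*$.
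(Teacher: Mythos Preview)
Your proposal is correct and essentially identical to the paper's proof: both set up the same induction on $l$, derive the same scalar affine recurrence $a(l+1) = (1-\alpha N\|\xb_\perp\|_2^2)\,a(l) - \alpha N(1+c\la\xb_\perp,\bbeta^*\ra)$, and solve it via the geometric shift about the fixed point $a^\star = -(1+c\la\xb_\perp,\bbeta^*\ra)/\|\xb_\perp\|_2^2$.
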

\begin{proof}[Proof of Lemma~\ref{lemma:test_closed_form}]
We prove this lemma by induction. It is straightforward that $\tilde\wb_{\mathrm{gd}}^{(0)} =c\bbeta^*$ and $\tilde\wb_{\mathrm{gd}}^{(1)} =c\bbeta^* -  \alpha N \xb_{\perp}$, complying with the formula~\eqref{eq:test_closed_form}. By induction, we assume~\eqref{eq:test_closed_form} still holds for $l$-th iteration. Then at the $l+1$-th iteration, we have
\begin{align*}
    \tilde\wb_{\mathrm{gd}}^{(l+1)} &= \big(\Ib_d-\alpha N \xb_\perp\xb_\perp^\top\big)\cdot \tilde\wb_{\mathrm{gd}}^{(l)} - \alpha N \xb_{\perp}\\
    &= \big(\Ib_d-\alpha N \xb_\perp\xb_\perp^\top\big)\cdot \big(c\bbeta^*+a(l)\xb_\perp\big) - \alpha N \xb_{\perp}\\
    &= c\bbeta^* + \big(a(l)(1-\alpha N\|\xb_\perp\|_2^2)-\alpha N(1+c\la\xb_\perp, \bbeta^*\ra)\big)=c\bbeta^* + a(l+1)\xb_\perp.
\end{align*}
Additionally, by the fact $a(l+1) = a(l)(1-\alpha N\|\xb_\perp\|_2^2)-\alpha N(1+c\la\xb_\perp, \bbeta^*\ra)$, we can derive that 
\begin{align*}
    \bigg(a(l+1) + \frac{1+c\la\xb_\perp, \bbeta^*\ra}{\|\xb_\perp\|_2^2}\bigg) &= \big(1-\alpha N\|\xb_\perp\|_2^2\big)\bigg(a(l) + \frac{1+c\la\xb_\perp, \bbeta^*\ra}{\|\xb_\perp\|_2^2}\bigg)\\
    &= \cdots \\
    &= -\big(1-\alpha N\|\xb_\perp\|_2^2\big)^l\frac{1+c\la\xb_\perp, \bbeta^*\ra}{\|\xb_\perp\|_2^2}.
\end{align*}
This implies that 
\begin{align*}
        a(l) = -\Big(1-\big(1-\alpha N\|\xb_\perp\|_2^2\big)^l\Big)\frac{1+c\la\xb_\perp, \bbeta^*\ra}{\|\xb_\perp\|_2^2},
    \end{align*}
which completes the proof.
\end{proof}

Based on the closed form of $\tilde \wb_{\mathrm{gd}}^{(L)}$ obtained by Lemma~\ref{lemma:test_closed_form}, we are ready to prove Lemma~\ref{lemma:test_I} and Theorem~\ref{thm:test_II}.

\begin{proof}[Proof of Lemma~\ref{lemma:test_I}]
By Lemma~\ref{lemma:test_closed_form}, the output of the linear model trained via gradient descent on $\xb_{\mathrm{query}}$ can be expanded as
\begin{align}\label{eq:test_expansion}
    \la\tilde \wb_{\mathrm{gd}}^{(L)}, \xb_{\mathrm{query}}\ra &= \la c\bbeta^*+a(L)\xb_\perp, \xb_\perp+\sigma \wb^*\ra\notag\\
    &= c\la\bbeta^*, \xb_\perp\ra + a(L)\|\xb_\perp\|_2^2 +c\sigma\la\wb^*, \bbeta^*\ra\notag\\
    &=c\la\bbeta^*, \xb_\perp\ra - \Big(1-\big(1-\alpha N\|\xb_\perp\|_2^2\big)^L\Big)\big(1+c\la\xb_\perp, \bbeta^*\ra\big)+c\sigma\la\wb^*, \bbeta^*\ra\notag\\
    &=c\sigma\la\wb^*, \bbeta^*\ra - 1 +  \big(1-\alpha N\|\xb_\perp\|_2^2\big)^L\big(1+c\la\xb_\perp, \bbeta^*\ra\big).
\end{align}
By utilizing the independence among $\wb^*$, $\sigma$, and $\xb_\perp$ and law of total expectation, we can derive that 
\begin{align}
    \cE(\tilde\wb_{\mathrm{gd}}^{(L)})&= \PP\Big(c\sigma\la\wb^*, \bbeta^*\ra - 1 +  \big(1-\alpha N\|\xb_\perp\|_2^2\big)^L\big(1+c\la\xb_\perp, \bbeta^*\ra\big)\leq 0 \Big) \notag\\
    &=\EE\Big[\PP\Big(c\sigma\la\wb^*, \bbeta^*\ra - 1 +  \big(1-\alpha N\|\xb_\perp\|_2^2\big)^L\big(1+c\la\xb_\perp, \bbeta^*\ra\big)\leq 0 \Big|\wb^*, \xb_\perp\Big)\Big]\notag\\
    &=\EE\Bigg[F_{\sigma}\bigg(\frac{1- \big(1-\alpha N\|\xb_\perp\|_2^2\big)^L\big(1+c\la\xb_\perp, \bbeta^*\ra\big)}{c\la\wb^*, \bbeta^*\ra}\bigg)\Bigg],
\end{align}
where $F_\sigma(\cdot)$ is the cumulative distribution function of $\sigma$. Similarly, we also have 
\begin{align*}
    \cE(\wb_{\mathrm{gd}}^{(0)}) = \EE\Bigg[F_{\sigma}\bigg(-\frac{\la\xb_\perp, \bbeta^*\ra}{\la\wb^*, \bbeta^*\ra}\bigg)\Bigg].
\end{align*}
Therefore, by Taylor's first order expansion, we have 
\begin{align*}
    \cE(\tilde\wb_{\mathrm{gd}}^{(L)})-\cE(\wb_{\mathrm{gd}}^{(0)}) &=  \EE\Bigg[F_{\sigma}'(\bxi)\frac{\big(1-\big(1-\alpha N\|\xb_\perp\|_2^2\big)^L\big)\big(1+c\la\xb_\perp, \bbeta^*\ra\big)}{1+c\la\wb^*, \bbeta^*\ra}\Bigg]\\
    &\leq \Bigg(1-\Bigg(1-\Theta\bigg(\frac{Nd}{nL}\bigg)\Bigg)^L\Bigg)\tilde O(\sqrt{d})\leq \tilde O\bigg(\frac{Nd^{3/2}}{n}\bigg)\leq \tilde o(1),
\end{align*}
where the first inequality utilizing the concentration results that $\|\xb_\perp\|_2^2\Theta(d)$ and $\la\xb_\perp, \bbeta^*\ra=\tilde O(\sqrt{d})$. The second inequality holds by the fact $\Big(1-\Theta\big(\frac{Nd}{nL}\big)\Big)^L = 1-\Theta\big(\frac{Nd}{n}\big)$ by our condition $n \leq o(d^{3/2}/n)$, which also implies the last inequality holds. Therefore, we finish the proof.
\end{proof}

In the next, we prove Theorem~\ref{thm:test_II}
\begin{proof}[Proof of Theorem~\ref{thm:test_II}]
Similar to the proof of Lemma~\ref{lemma:test_I}, we have that 
\begin{align*}
\cE(\tilde\wb_{\mathrm{gd}}^{(L)})&=\EE\Bigg[F_{\sigma}\bigg(\frac{1- \big(1-\alpha N\|\xb_\perp\|_2^2\big)^L\big(1+c\la\xb_\perp, \bbeta^*\ra\big)}{c\la\wb^*, \bbeta^*\ra}\bigg)\Bigg]\\
&=\EE\Bigg[F_{\sigma}\bigg(\frac{1- \big(1-\alpha N\|\xb_\perp\|_2^2\big)^L}{c\la\wb^*, \bbeta^*\ra}\bigg)-F'_\sigma(\bxi)\frac{\big(1-\alpha N\|\xb_\perp\|_2^2\big)^L|\la\xb_{\perp}, \bbeta^*\ra|\sign(\la\xb_{\perp}, \bbeta^*\ra)}{\la\wb^*, \bbeta^*\ra}\Bigg]\\
&= \EE\Bigg[F_{\sigma}\bigg(\frac{1- \big(1-\alpha N\|\xb_\perp\|_2^2\big)^L}{c\la\wb^*, \bbeta^*\ra}\bigg)\Bigg] = \EE\Bigg[F_{\sigma}\bigg(\frac{1- \big(1-\tilde\Theta(\frac{Nd}{nL})\big)^L}{c\la\wb^*, \bbeta^*\ra}\bigg)\Bigg],
\end{align*}
where the third inequality holds as $\sign(\la\xb_{\perp}, \bbeta^*\ra)$ is independent with $|\la\xb_{\perp}, \bbeta^*\ra|$ and $\|\xb_\perp\|_2^2$, and $F'(\bxi)$ is a constant. Additionally, let $\sigma$ follows the uniform distribution from $a$ to $b$, then we can expand the expectation above as
\begin{align*}
    \cE(\tilde\wb_{\mathrm{gd}}^{(L)})= & \EE\Bigg[F_{\sigma}\bigg(\frac{1- \big(1-\tilde\Theta(\frac{Nd}{nL})\big)^L}{c\la\wb^*, \bbeta^*\ra}\bigg)\mathbbm{1}\Bigg\{\frac{1- \big(1-\tilde\Theta(\frac{Nd}{nL})\big)^L}{c\la\wb^*, \bbeta^*\ra}\leq a\Bigg\}\Bigg]\\
    &+ \EE\Bigg[F_{\sigma}\bigg(\frac{1- \big(1-\tilde\Theta(\frac{Nd}{nL})\big)^L}{c\la\wb^*, \bbeta^*\ra}\bigg)\mathbbm{1}\Bigg\{a<\frac{1- \big(1-\tilde\Theta(\frac{Nd}{nL})\big)^L}{c\la\wb^*, \bbeta^*\ra}\leq b\Bigg\}\Bigg]\\
    &+\EE\Bigg[F_{\sigma}\bigg(\frac{1- \big(1-\tilde\Theta(\frac{Nd}{nL})\big)^L}{c\la\wb^*, \bbeta^*\ra}\bigg)\mathbbm{1}\Bigg\{\frac{1- \big(1-\tilde\Theta(\frac{Nd}{nL})\big)^L}{c\la\wb^*, \bbeta^*\ra}> b\Bigg\}\Bigg]\\
    =& \Bigg(1-\bigg(1-\tilde\Theta\bigg(\frac{Nd}{nL}\bigg)\bigg)^L\Bigg)\EE\Bigg[\frac{1}{c\la\wb^*, \bbeta^*\ra}\mathbbm{1}\Bigg\{a<\frac{1- \big(1-\tilde\Theta(\frac{Nd}{nL})\big)^L}{c\la\wb^*, \bbeta^*\ra}\leq b\Bigg\}\Bigg] +\EE\Bigg[\mathbbm{1}\Bigg\{\frac{1- \big(1-\Theta(\frac{Nd}{nL})\big)^L}{c\la\wb^*, \bbeta^*\ra}> b\Bigg\}\Bigg]\\
    =&c_1 -c_2 \bigg(1-\tilde\Theta\bigg(\frac{Nd}{nL}\bigg)\bigg)^L
\end{align*}
where $c_1$, $c_2$ are two positive scalars solely depending on $a, b$ and the distribution of $\wb^*$. This completes the proof.
\end{proof}

\section{Technical lemmas}\label{sec:technical_lemmas}
In this section, we introduce and prove some technical lemmas utilized in the previous proof.
\begin{lemma}\label{lemma:expectation_I}
    Let $\xb\sim\cN(\mathbf{0}_d, \Ib_d)$, and $\wb_1, \wb\in \RR^{d}$ be two vectors independent of $\xb$, with $\|\wb_1\|_{2}=1$, then it holds that 
    \begin{align*}
        \EE_{\xb}[\la\wb, \xb\ra\sign(\la\wb_1, \xb\ra)]=\sqrt{\frac{2}{\pi}}\la\wb, \wb_1\ra.
    \end{align*}
\end{lemma}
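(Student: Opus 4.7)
The plan is to reduce this to a one-dimensional Gaussian expectation via an orthogonal decomposition of $\wb$ relative to the direction $\wb_1$. Specifically, I would write $\wb = \la\wb, \wb_1\ra\, \wb_1 + \wb_\perp$, where $\wb_\perp := \wb - \la\wb, \wb_1\ra\, \wb_1$ is orthogonal to $\wb_1$. Substituting this decomposition into $\la\wb, \xb\ra$ gives
\begin{equation*}
\la\wb, \xb\ra \sign(\la\wb_1, \xb\ra) = \la\wb, \wb_1\ra\, \la\wb_1, \xb\ra\sign(\la\wb_1, \xb\ra) + \la\wb_\perp, \xb\ra\sign(\la\wb_1, \xb\ra),
\end{equation*}
so linearity of expectation splits the computation into two pieces.

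For the first piece, since $\|\wb_1\|_2 = 1$ we have $\la\wb_1, \xb\ra \sim \cN(0,1)$, and hence $\la\wb_1, \xb\ra\sign(\la\wb_1, \xb\ra) = |\la\wb_1, \xb\ra|$, whose expectation is the standard half-normal mean $\sqrt{2/\pi}$. This yields a contribution of $\sqrt{2/\pi}\, \la\wb, \wb_1\ra$, exactly the desired answer.

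For the second piece, I would argue that the joint Gaussian vector $(\la\wb_\perp, \xb\ra, \la\wb_1, \xb\ra)$ has zero mean and off-diagonal covariance $\wb_\perp^\top \wb_1 = 0$ by construction; for jointly Gaussian variables, zero covariance is equivalent to independence. Therefore the expectation factors as $\EE[\la\wb_\perp, \xb\ra]\cdot \EE[\sign(\la\wb_1, \xb\ra)]$, and the first factor vanishes since $\la\wb_\perp, \xb\ra$ is centered Gaussian. Combining the two pieces gives the stated identity. I do not anticipate any real obstacle here; the only care needed is to note that $\wb$ and $\wb_1$ are independent of $\xb$, so the decomposition can be carried out inside the expectation treating $\wb$ and $\wb_1$ as fixed (equivalently, condition on them and apply the tower property).
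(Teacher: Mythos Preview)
Your proposal is correct and follows essentially the same approach as the paper's proof. The paper extends $\wb_1$ to a full orthonormal basis $\bGamma=[\wb_1,\ldots,\wb_d]$ and decomposes $\wb$ along all $d$ basis directions, whereas you collapse the orthogonal part into a single component $\wb_\perp$; both arguments then finish by using that orthogonal projections of a standard Gaussian are independent and that $\EE|\la\wb_1,\xb\ra|=\sqrt{2/\pi}$.
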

\begin{proof}[Proof of Lemma~\ref{lemma:expectation_I}]
Since $\|\wb_1\|_2=1$, let $\bGamma=[\wb_1, \wb_2, \ldots, \wb_d]\in \RR^d$ be the orthogonal matrix with $\wb_1$ being its first column. Then we have 
\begin{align*}
    \EE_{\xb}[\la\wb, \xb\ra\sign(\la\wb_1, \xb\ra)] &= \EE_{\xb}[\wb^\top\bGamma\bGamma^\top\xb\sign(\la\wb_1, \xb\ra)] \\
    &= \sum_{k=1}^d \la\wb, \wb_k\ra\EE_{\xb}[\la\wb_k, \xb\ra\sign(\la\wb_1, \xb\ra)] = \sqrt{\frac{2}{\pi}}\la\wb, \wb_1\ra,
\end{align*}
where the last equality holds since $\la\wb_k, \xb\ra\sim\cN(0, 1)$ for all $k\in[d]$,  $\la\wb_{k_1}, \xb\ra$ and $\la\wb_{k_2}, \xb\ra$ are independent when $k_1\neq k_2$, and $\EE[\la\wb_1, \xb\ra\sign(\la\wb_1, \xb\ra)] = \EE[|\la\wb_1, \xb\ra|]=\sqrt{\frac{2}{\pi}}$. This completes the proof.
\end{proof}

For the next lemmas, we follow the notation we used in previous section that $\hat\bSigma =\sum_{i=1}^n \xb_i\xb_i^\top$.
\begin{lemma}\label{lemma:expectation_II}
    For any $k\in \NN$, it holds that $\EE[\hat\bSigma^k] = c \Ib_d$, where $c$ is a scalar.
\end{lemma}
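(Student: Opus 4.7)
\textbf{Proof proposal for Lemma \ref{lemma:expectation_II}.} The plan is to exploit the rotational invariance of the standard Gaussian distribution. The key observation is that for any orthogonal matrix $\Ob \in \RR^{d \times d}$, the random vectors $\Ob\xb_i$ are again i.i.d.\ standard Gaussians, so the empirical covariance satisfies
\begin{align*}
\Ob \hat\bSigma \Ob^\top = \sum_{i=1}^n (\Ob\xb_i)(\Ob\xb_i)^\top \overset{d}{=} \sum_{i=1}^n \xb_i\xb_i^\top = \hat\bSigma.
\end{align*}

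From this distributional equality it follows that $\Ob \hat\bSigma^k \Ob^\top = (\Ob\hat\bSigma\Ob^\top)^k \overset{d}{=} \hat\bSigma^k$, and taking expectations gives $\Ob\, \EE[\hat\bSigma^k]\, \Ob^\top = \EE[\hat\bSigma^k]$ for every orthogonal $\Ob$. In other words, the deterministic matrix $\Mb := \EE[\hat\bSigma^k]$ commutes (in the conjugation sense) with every element of the orthogonal group.

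The final step is to argue that any such $\Mb$ must be a scalar multiple of the identity. I would do this by a direct computation: picking permutation matrices shows that all diagonal entries of $\Mb$ are equal, and picking reflections that negate a single coordinate $\eb_j \mapsto -\eb_j$ forces every off-diagonal entry in row/column $j$ to satisfy $M_{ij} = -M_{ij}$, hence vanish. Therefore $\Mb = c\Ib_d$ for some scalar $c$ (which is $\tr(\Mb)/d$).

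I do not expect a genuine obstacle here; the whole argument is a symmetry/Schur's-lemma style calculation. The only thing to be careful about is that $\Mb$ is well defined as a finite matrix, i.e.\ the entries of $\EE[\hat\bSigma^k]$ are finite. This is immediate since $\hat\bSigma^k$ is a polynomial of fixed degree $2k$ in jointly Gaussian coordinates, whose moments are all finite.
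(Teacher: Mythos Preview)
Your proposal is correct and follows essentially the same rotational-invariance argument as the paper: conjugating $\hat\bSigma$ by an orthogonal matrix preserves its distribution, hence $\EE[\hat\bSigma^k]$ is invariant under orthogonal conjugation and must be a scalar multiple of $\Ib_d$. Your treatment is in fact more thorough than the paper's, which simply asserts the final implication without the explicit permutation/reflection argument or the integrability remark.
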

\begin{proof}[Proof of Lemma~\ref{lemma:expectation_II}]
Let $\bGamma$ be any orthogonal matrix, then we have $\bGamma \xb_i\sim\cN(\mathbf{0}_d, \Ib_d)$. This implies that $\sum_{i=1}^n(\bGamma \xb_i) (\bGamma \xb_i)^\top$ has the same distribution with $\hat\bSigma$. Therefore, we can derive that 
\begin{align*}
    \bGamma\EE[\hat\bSigma^k]\bGamma = \EE\Big[\Big(\sum_{i=1}^n(\bGamma \xb_i) (\bGamma \xb_i)^\top\Big)^k\Big]= \EE[\hat\bSigma^k]
\end{align*}
holds for any orthogonal matrix $\bGamma$, which implies that $\EE[\hat\bSigma^k]$ must be at the form $c\Ib_d$. This completes the proof.
\end{proof}
Lemma~\ref{lemma:expectation_II} implies that $\EE[(\Ib_d-\hat\bSigma)^k] = c\Ib_d$ for some scalar $c$ as by binomial formula it can be expanded as a summation of polynomials of $\hat\bSigma$, which all have the expectations with the form $c\Ib_d$.

\begin{lemma}\label{lemma:expectation_III}
    For any $k\in \NN$, it holds that 
    \begin{align*}
        \EE\Big[\hat\bSigma^k\Big(\sum_{i=1}^n\xb_i y_i\Big)\Big] = c \bbeta^*,
    \end{align*}
    where $c$ is some scalar. 
\end{lemma}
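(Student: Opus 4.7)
The plan is to prove this by a symmetry argument that parallels Lemma~\ref{lemma:expectation_II}, but exploiting the residual rotational symmetry of the joint law of $(\xb_1,\ldots,\xb_n,\wb^*)$ around the axis $\bbeta^*$ rather than full orthogonal invariance. Setting $\vb := \EE[\hat\bSigma^k(\sum_{i=1}^n \xb_i y_i)]$, the goal is to show that $\vb$ is fixed by every orthogonal matrix that fixes $\bbeta^*$, from which the conclusion $\vb\in\mathrm{span}(\bbeta^*)$ follows immediately.

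First I would pin down the invariance of the prior: since $p_{\bbeta^*}$ is described only by its mean direction $\bbeta^*$, the natural (and implicit) assumption is that $p_{\bbeta^*}$ is rotationally symmetric about $\bbeta^*$. Consequently, for any orthogonal $\bGamma$ with $\bGamma\bbeta^*=\bbeta^*$, the random vector $\bGamma\wb^*$ has the same distribution as $\wb^*$. Combined with the isotropy of each $\xb_i$, the joint law of $(\xb_1,\ldots,\xb_n,\wb^*)$ is invariant under the simultaneous transformation $(\xb_i,\wb^*)\mapsto(\bGamma\xb_i,\bGamma\wb^*)$.

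Next I would track how the integrand transforms. Because $\la\bGamma\wb^*,\bGamma\xb_i\ra=\la\wb^*,\xb_i\ra$, the labels $y_i=\sign(\la\wb^*,\xb_i\ra)$ are invariant under the transformation, so $\sum_i(\bGamma\xb_i)y_i=\bGamma\sum_i\xb_i y_i$. Meanwhile $\sum_i(\bGamma\xb_i)(\bGamma\xb_i)^\top=\bGamma\hat\bSigma\bGamma^\top$, and orthogonality gives $(\bGamma\hat\bSigma\bGamma^\top)^k=\bGamma\hat\bSigma^k\bGamma^\top$. Multiplying these factors, the $\bGamma^\top\bGamma=\Ib_d$ in the middle telescopes and the integrand becomes $\bGamma\hat\bSigma^k(\sum_i\xb_i y_i)$. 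Taking expectations and using the joint distributional invariance established above yields $\bGamma\vb=\vb$ for every orthogonal $\bGamma$ fixing $\bbeta^*$. The set of vectors fixed by all such $\bGamma$ is precisely $\mathrm{span}(\bbeta^*)$, so $\vb=c\bbeta^*$ for some scalar $c$.

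The main obstacle, though mild, is the rotational-symmetry hypothesis on $p_{\bbeta^*}$: without it, higher-order moments of $\wb^*$ could introduce a direction-dependent bias and produce a nonzero component of $\vb$ orthogonal to $\bbeta^*$, breaking the lemma. Once that symmetry is granted (as it is throughout the paper), the remaining steps are purely mechanical distributional-invariance checks and an orthogonality collapse; no analytical estimates are required, in contrast with the concentration arguments invoked elsewhere.
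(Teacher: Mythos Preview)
Your argument is correct (given the rotational-symmetry hypothesis on $p_{\bbeta^*}$ that you explicitly flag), but it proceeds differently from the paper. The paper does not use axial symmetry of the prior at all: instead it conditions on $\wb^*$, expands $\hat\bSigma^k$ via the binomial theorem to separate the $i$-th token from the rest, invokes Lemma~\ref{lemma:expectation_II} to reduce the ``rest'' factor to a scalar multiple of $\Ib_d$, and then computes $\EE[(\xb_i\xb_i^\top)^{k_1}\xb_i y_i\mid\wb^*]=c\,\wb^*$ by rotating $\xb_i$ into the frame of $\wb^*$ and exploiting parity of $\|\xb_i\|_2^{2k_1}$. Because the constant $c$ depends only on the law of the isotropic Gaussian and not on the particular $\wb^*$, the unconditional expectation is $c\,\EE[\wb^*]$, which is proportional to $\bbeta^*$ merely because $\bbeta^*$ is defined as the expected direction of $\wb^*$. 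Your route is shorter and more conceptual---one joint distributional-invariance check instead of a binomial expansion plus a coordinate computation---but it buys that brevity at the price of the stronger assumption that $p_{\bbeta^*}$ is invariant under all rotations fixing $\bbeta^*$; the paper's route needs only the first-moment alignment $\EE[\wb^*]\parallel\bbeta^*$ and is therefore valid for a broader class of priors. You correctly identify this gap in your final paragraph, so the proposal is sound as written provided the symmetry hypothesis is granted.
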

\begin{proof}[Proof of Lemma~\ref{lemma:expectation_III}]
By binomial theorem, we have 
\begin{align*}
    \EE\Big[\hat\bSigma^k\Big(\sum_{i=1}^n\xb_i y_i\Big)\Big] = \sum_{i=1}^n\sum_{k_1=0}^k {k \choose k_1}\EE\Big[\Big(\sum_{i'\neq i}\xb_{i'}\xb_{i'}^\top\Big)^{k-k_1}\Big] \EE[(\xb_i\xb_i^\top)^{k_1}\xb_iy_i].
\end{align*}
By Lemma~\ref{lemma:expectation_II}, we already obtain that $\EE\Big[\Big(\sum_{i'\neq i}\xb_{i'}\xb_{i'}^\top\Big)^{k-k_1}\Big] = c\Ib_d$ for some scalar $c$. In the next, it suffices to show that $\EE[(\xb_i\xb_i^\top)^{k_1}\xb_iy_i] = c\bbeta^*$ for some scalar $c$. Since $\|\wb^*\|_2=1$, let $\bGamma=[\wb^*, \wb_2, \ldots, \wb_d]\in \RR^d$ be the orthogonal matrix with $\wb^*$ being its first column, and let $\xb_i' = \bGamma^\top\xb_i\sim\cN(0,\Ib_d)$. This implies that $y_i =\sign(\la\wb^*, \xb_i\ra) = \sign(\xb_{i, 1}')$, which is the first coordinate of $\xb_{i}'$. Based on this, for any fixed $\wb^*$, we can further derive that
\begin{align*}
\EE[(\xb_i\xb_i^\top)^{k_1}\xb_iy_i|\wb^*] = \bGamma\EE[(\xb_i'\xb_i'^\top)^{k_1}\xb_i'\sign(\xb_{i, 1}')] = \bGamma\EE[\|\xb_i'\|_2^{2k_1}\xb_i'\sign(\xb_{i, 1}')] = c\wb^*.
\end{align*}
The last equality holds as $\|\xb_i'\|_2^{2k_1}$ is a even function for each coordinate of $\xb_i'$, which implies that $\EE[\|\xb_i'\|_2^{2k_1}\xb_{i, j}'\sign(\xb_{i, 1}')] = 0$ for any $j\in[d]$ and $j\neq 1$. Therefore, we can finally obtain that 
\begin{align*}
    \EE[(\xb_i\xb_i^\top)^{k_1}\xb_iy_i] = \EE\big[\EE[(\xb_i\xb_i^\top)^{k_1}\xb_iy_i|\wb^*]\big] = c\EE[\wb^*] = c\bbeta^*,
\end{align*}
which completes the proof.
\end{proof}

\begin{lemma}[Theorem 9 in \citet{bartlett2020benign}]\label{lemma:concentration_I}
    For any $\delta>0$, with probability at least $1-\delta$, it holds that,
    \begin{align*}
        \bigg\|\frac{1}{n}\hat\bSigma-\Ib_d \bigg\|_2\leq O\Bigg(\max\bigg\{\frac{d}{n}, \sqrt{\frac{d}{n}}, \frac{\log(1/\delta)}{n}, \sqrt{\frac{\log(1/\delta)}{n}}\bigg\}\Bigg).
    \end{align*}
\end{lemma}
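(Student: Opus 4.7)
The plan is to prove the bound via the classical $\epsilon$-net discretization of $\SSS^{d-1}$, which reduces the spectral-norm deviation of $\tfrac{1}{n}\hat\bSigma$ around $\Ib_d$ to a one-dimensional sub-exponential concentration plus a union bound. Since $\tfrac{1}{n}\hat\bSigma - \Ib_d$ is symmetric, its operator norm equals $\sup_{\vb\in\SSS^{d-1}} |\vb^\top(\tfrac{1}{n}\hat\bSigma - \Ib_d)\vb|$; a standard volumetric argument produces a $1/4$-net $\mathcal{T}$ of $\SSS^{d-1}$ with $|\mathcal{T}|\le 9^d$, and passing from the sphere to the net only costs a factor of $2$ in this supremum.

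Next, for any fixed unit $\vb$, isotropy of $\xb_i\sim\cN(\mathbf{0}_d,\Ib_d)$ makes $Z_i := \la\vb,\xb_i\ra$ i.i.d.\ standard normal, so
\begin{align*}
\vb^\top\!\Big(\tfrac{1}{n}\hat\bSigma - \Ib_d\Big)\vb \;=\; \frac{1}{n}\sum_{i=1}^n (Z_i^2 - 1).
\end{align*}
Each $Z_i^2-1$ is a centred sub-exponential variable with absolute-constant Orlicz norm, so Bernstein's inequality yields
\begin{align*}
\PP\!\bigg(\bigg|\frac{1}{n}\sum_{i=1}^n(Z_i^2-1)\bigg| > t\bigg) \;\le\; 2\exp\!\big(-c\,n\min\{t^2,t\}\big)
\end{align*}
for a universal $c>0$. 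I expect the piecewise form of this tail (linear in $t$ for large $t$, quadratic for small $t$) to be where all the non-trivial structure of the final bound enters.

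Finally, a union bound over $\mathcal{T}$ combined with the net-to-sphere factor of $2$ gives
\begin{align*}
\PP\!\bigg(\Big\|\tfrac{1}{n}\hat\bSigma - \Ib_d\Big\|_2 > t\bigg) \;\le\; 2\cdot 9^d\exp\!\big(-c'n\min\{t^2,t\}\big),
\end{align*}
and equating the right-hand side with $\delta$ reduces to $n\min\{t^2,t\} \gtrsim d + \log(1/\delta)$. Solving in $t$ splits into a sub-Gaussian regime $t=\Theta\!\big(\sqrt{(d+\log(1/\delta))/n}\big)$ when $t$ is small and a sub-exponential regime $t=\Theta\!\big((d+\log(1/\delta))/n\big)$ when $t$ is large; combining these and applying $a+b\le 2\max\{a,b\}$ reproduces the four terms in the stated maximum. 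The main obstacle is the careful bookkeeping of the piecewise Bernstein tail together with the $9^d$ covering cost; once this is in hand the argument is entirely standard, and indeed the full statement can be quoted directly from Theorem~9 of \citet{bartlett2020benign} as the lemma already indicates.
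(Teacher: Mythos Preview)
Your proof sketch is correct and follows the standard covering-argument route (e.g., as in Vershynin's treatment of covariance estimation for sub-Gaussian ensembles). However, the paper does not prove this lemma at all: it is stated as a direct citation of Theorem~9 in \citet{bartlett2020benign} and used as a black-box concentration input in the proof of Theorem~\ref{thm:training_loss_boundII}. So your proposal goes strictly beyond what the paper does, and your closing remark---that one may simply quote the cited result---is in fact the entirety of the paper's ``proof.''
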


\begin{lemma}\label{lemma:concentration_II}
    For any $\delta>0$, with probability at least $1-\delta$, it holds that,
    \begin{align*}
       \bigg\|\sum_{i=1}^n \sign(\la\wb^*, \xb_i\ra)\xb_i- n\sqrt{\frac{2}{\pi}}\wb^*\bigg\|_2\leq O\big(\sqrt{nd\log(d/\delta)}\big).
    \end{align*}
\end{lemma}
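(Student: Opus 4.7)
The plan is to view $\sum_{i=1}^n \sign(\la\wb^*, \xb_i\ra)\xb_i - n\sqrt{2/\pi}\wb^*$ as a sum of i.i.d.\ centered random vectors and apply standard concentration for sums of sub-Gaussian vectors. First I would condition on $\wb^*$ (the final bound does not depend on $\wb^*$, so after high-probability control we may take a union over the randomness of $\wb^*$ trivially). By Lemma~\ref{lemma:expectation_I}, $\EE[\sign(\la\wb^*, \xb_i\ra)\xb_i\mid\wb^*] = \sqrt{2/\pi}\wb^*$, so defining the centered vectors $\zb_i := \sign(\la\wb^*, \xb_i\ra)\xb_i - \sqrt{2/\pi}\wb^*$ yields i.i.d.\ mean-zero random vectors, and it remains to bound $\|\sum_{i=1}^n \zb_i\|_2$.

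Next I would verify that each $\zb_i$ is sub-Gaussian in every direction. For any fixed unit vector $\vb \in \SSS^{d-1}$, the scalar $\la\vb, \zb_i\ra = \la\vb, \xb_i\ra \cdot \sign(\la\wb^*, \xb_i\ra) - \sqrt{2/\pi}\la\vb, \wb^*\ra$ has modulus bounded by $|\la\vb, \xb_i\ra|$, which is a standard Gaussian. Hence $\la\vb, \zb_i\ra$ is sub-Gaussian with parameter $O(1)$, and by Hoeffding's inequality for sub-Gaussian sums,
\begin{align*}
\PP\Bigl(\bigl|\la\vb, \textstyle\sum_{i=1}^n \zb_i\ra\bigr| > t\Bigr) \le 2\exp(-c t^2 / n)
\end{align*}
for some absolute constant $c > 0$.

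Finally I would lift this pointwise bound to a uniform bound over $\SSS^{d-1}$ via a standard $\tfrac{1}{2}$-net argument. Let $\cN \subset \SSS^{d-1}$ be a $\tfrac{1}{2}$-covering of cardinality at most $5^d$ (a well-known volumetric estimate). A standard convexity/triangle-inequality argument gives $\|\sum_i \zb_i\|_2 \le 2 \max_{\vb \in \cN} |\la\vb, \sum_i \zb_i\ra|$. A union bound over $\cN$ combined with the sub-Gaussian tail above yields
\begin{align*}
\PP\Bigl(\bigl\|\textstyle\sum_{i=1}^n \zb_i\bigr\|_2 > t\Bigr) \le 2\cdot 5^d\exp(-c t^2/(4 n)),
\end{align*}
and setting the right-hand side equal to $\delta$ gives $t = O(\sqrt{n(d + \log(1/\delta))}) = O(\sqrt{nd\log(d/\delta)})$ (the latter form dominates the former up to absolute constants for $d\ge 1$ and $\delta \le 1$), producing the claimed bound.

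The main obstacle is essentially bookkeeping rather than a conceptual hurdle: one has to justify the sub-Gaussian bound with a concrete constant that does not depend on $\wb^*$ (immediate since $|\la\vb,\zb_i\ra|\le|\la\vb,\xb_i\ra|+\sqrt{2/\pi}$), and then translate the $\epsilon$-net tail into the form stated in the lemma. Alternatively, one could invoke a matrix/vector Bernstein inequality directly on $\zb_i$, using $\|\zb_i\|_2 = O(\sqrt{d\log(n/\delta)})$ with high probability (via standard Gaussian-norm concentration) and $\EE[\zb_i\zb_i^\top] \preceq \Ib_d$; this route yields the same rate but requires a slightly more careful truncation step to handle the unboundedness of $\|\zb_i\|_2$.
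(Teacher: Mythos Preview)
Your proposal is correct and uses a genuinely different route from the paper. The paper rotates into an orthonormal basis $[\wb^*,\wb_2,\ldots,\wb_d]$ so that the vector of interest decomposes coordinate-wise: the first coordinate is $\sum_i(|\xb_{i,1}'|-\sqrt{2/\pi})$ (centered half-normal), and each remaining coordinate is $\sum_i \sign(\xb_{i,1}')\xb_{i,j}'$, which is an exact sum of i.i.d.\ standard Gaussians (Rademacher times independent Gaussian); scalar Hoeffding on each of the $d$ coordinates plus a union bound over $d$ events gives $O(\sqrt{nd\log(d/\delta)})$. You instead treat the whole sum as a sub-Gaussian random vector and use a $\tfrac12$-net on $\SSS^{d-1}$ with a union over $5^d$ points. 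Your approach is more generic (no need to identify the ``right'' basis) and in fact produces the slightly tighter rate $O(\sqrt{n(d+\log(1/\delta))})$ before you relax it to the lemma's stated form; the paper's approach is more elementary in that it avoids any covering argument and makes the coordinate distributions fully explicit. One small slip: your first sentence asserts $|\la\vb,\zb_i\ra|\le|\la\vb,\xb_i\ra|$, which drops the centering constant; you correctly fix this later to $|\la\vb,\zb_i\ra|\le|\la\vb,\xb_i\ra|+\sqrt{2/\pi}$, and that is enough for the $O(1)$ sub-Gaussian parameter you need.
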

\begin{proof}[Proof of Lemma~\ref{lemma:concentration_II}]
Similar to the previous proof technique, let $\bGamma=[\wb^*, \wb_2, \ldots, \wb_d]\in \RR^d$ be the orthogonal matrix with $\wb^*$ being its first column, and let $\xb_i' = \bGamma^\top\xb_i\sim\cN(0,\Ib_d)$. Then we can derive that 
\begin{align*}
    \sum_{i=1}^n \sign(\la\wb^*, \xb_i\ra)\xb_i- n\sqrt{\frac{2}{\pi}}\wb^* = \bigg[\sum_{i=1}^n \bigg(|\xb_{i, 1}'| - \sqrt{\frac{2}{\pi}}\bigg)\bigg]\cdot \wb^* +\sum_{j=2}^d\bigg[\sum_{i=1}^n \sign(\xb_{i, 1}')\xb_{i, j}'\bigg]\cdot\wb_j.
\end{align*}
Since $|\xb_{i, 1}'|$ is a subgaussian random variable with expectation $\sqrt{\frac{2}{\pi}}$, by Hoeffding's inequality we can derive that with probability at least $1-\delta/d$,
\begin{align*}
    \sum_{i=1}^n \bigg(|\xb_{i, 1}'| - \sqrt{\frac{2}{\pi}}\bigg)\leq O\big(\sqrt{n\log(d/\delta)}\big).
\end{align*}
Additionally, when $j\neq 1$, $\sign(\xb_{i, 1}')\xb_{i, j}'$ still follows a standard normal distribution (A standard normal random variable times an independent Rademacher random variable is still a standard normal random). Therefore, we can also derive that
\begin{align*}
    \sum_{i=1}^n \sign(\xb_{i, 1}')\xb_{i, j}'\leq O\big(\sqrt{n\log(d/\delta)}\big)
\end{align*}
holds with probability at least $1-\frac{\delta}{d}$. Then by taking an union bound, we can finally obtain that 
\begin{align*}
    \bigg\|\sum_{i=1}^n \sign(\la\wb^*, \xb_i\ra)\xb_i- n\sqrt{\frac{2}{\pi}}\wb^*\bigg\|_2^2 = \bigg[\sum_{i=1}^n \bigg(|\xb_{i, 1}'| - \sqrt{\frac{2}{\pi}}\bigg)\bigg]^2+\sum_{j=2}^d\bigg[\sum_{i=1}^n \sign(\xb_{i, 1}')\xb_{i, j}'\bigg]^2\leq O\big(nd\log(d/\delta)\big).
\end{align*}
The first equality holds by the orthogonality among $\wb^*, \wb_2, \ldots, \wb^*$. This completes the proof.
\end{proof}

\begin{lemma}\label{lemma:concentration_III}
    For any $\delta>0$, with probability at least $1-\delta$, it holds that,
    \begin{align*}
       &\big|\|\xb_{\perp}\|_2^2-(d-1)\big|\leq O\big(\sqrt{d\log(1/\delta)}\big); \\ &\big|\la\xb_{\perp}, \bbeta^*\ra\big| \leq O\big(\sqrt{d\log(1/\delta)}\big).
    \end{align*}
\end{lemma}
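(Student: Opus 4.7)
The plan is to exploit the fact that $\Ib_d - \wb^*(\wb^*)^\top$ is the orthogonal projection onto the $(d-1)$-dimensional hyperplane orthogonal to $\wb^*$, which reduces both quantities to classical objects for the standard Gaussian. Throughout I would condition on $\wb^*$; since all the bounds to follow hold uniformly in $\wb^* \in \SSS^{d-1}$, the conditioning can be discharged at the end at no cost. The argument then splits into two independent steps, which are combined by a union bound allocating probability $\delta/2$ to each.

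For the first inequality, I would extend $\wb^*$ to an orthonormal basis $\{\wb^*, \wb_2, \dots, \wb_d\}$ of $\RR^d$, exactly as is done in the proof of Lemma~\ref{lemma:concentration_II}. Then $\|\xb_\perp\|_2^2 = \sum_{i=2}^{d} \la \wb_i, \xb\ra^2$, and by rotational invariance of $\cN(\mathbf{0}_d,\Ib_d)$ the summands $\la \wb_i, \xb\ra$ are i.i.d.\ $\cN(0,1)$, so $\|\xb_\perp\|_2^2$ is distributed exactly as $\chi^2_{d-1}$. Applying the Laurent--Massart tail bound
\begin{align*}
    \PP\big(\big|\chi^2_{d-1} - (d-1)\big| \geq 2\sqrt{(d-1)t} + 2t\big) \leq 2\exp(-t)
\end{align*}
with $t = \log(4/\delta)$, and absorbing the linear-$t$ term into the square-root term (valid whenever $d \gtrsim \log(1/\delta)$), yields the first bound with probability at least $1-\delta/2$.

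For the second inequality, symmetry and idempotence of the projection give
\begin{align*}
    \la \xb_\perp, \bbeta^*\ra = \la \xb, (\Ib_d - \wb^*(\wb^*)^\top)\bbeta^*\ra = \la \xb, \bbeta^* - \la \bbeta^*, \wb^*\ra \wb^*\ra,
\end{align*}
which is a linear functional of $\xb$, hence Gaussian with mean $0$ and variance $\|\bbeta^* - \la\bbeta^*,\wb^*\ra \wb^*\|_2^2 = 1 - \la\bbeta^*,\wb^*\ra^2 \leq 1$. A standard Gaussian tail bound then produces $|\la \xb_\perp, \bbeta^*\ra| \leq \sqrt{2\log(4/\delta)}$ with probability at least $1-\delta/2$, which is in fact strictly stronger than the stated $O(\sqrt{d\log(1/\delta)})$ bound. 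A union bound combines the two events into one of probability at least $1-\delta$.

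The argument is entirely standard concentration and no real obstacle arises. The only small point to watch is that $\xb_\perp$ lives in a $(d-1)$-dimensional subspace, so one must use $\chi^2_{d-1}$ rather than $\chi^2_d$ — this matches the centring at $d-1$ in the lemma statement and prevents an extra $O(1)$ bias from sneaking into the first bound.
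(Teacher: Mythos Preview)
Your proposal is correct and follows essentially the same approach as the paper: identify $\|\xb_\perp\|_2^2$ as $\chi^2_{d-1}$ and apply a standard tail bound (you use Laurent--Massart, the paper uses Bernstein), identify $\la\xb_\perp,\bbeta^*\ra$ as a centered Gaussian with variance $1-\la\wb^*,\bbeta^*\ra^2\le 1$ and apply a Gaussian tail bound, then union bound. Your observation that the second bound is actually $O(\sqrt{\log(1/\delta)})$ rather than $O(\sqrt{d\log(1/\delta)})$ is correct and slightly sharper than what the lemma states.
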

\begin{proof}[Proof of Lemma~\ref{lemma:concentration_III}]
By the fact that $\|\xb_{\perp}\|_2^2\sim\chi_{d-1}^2$, we have $\EE[\|\xb_{\perp}\|_2^2]=d-1$. Then by the Bernstein's inequality, we can obtain that 
\begin{align*}
    \big|\|\xb_{\perp}\|_2^2-(d-1)\big|\leq O\big(\sqrt{d\log(1/\delta)}\big)
\end{align*}
holds with probability at least $1-\delta/2$. Besides, since $\la\xb_\perp, \bbeta^*\ra\sim \cN(0, 1-\la\wb^*, \bbeta^*\ra)$, by applying the tail bounds of Gaussian distribution, we can obtain that
\begin{align*}
    \big|\la\xb_{\perp}, \bbeta^*\ra\big| \leq O\big(\sqrt{d\log(1/\delta)}\big)
\end{align*}
holds with probability at least $1-\delta/2$. By applying a union bound, we obtain the final result.
\end{proof}

\section{Experimental setup}
\subsection{Context hijacking in LLMs}\label{appen:chinLLMs}
This section will describe our experimental setup for context hijacking on LLMs of different depths. We first construct four datasets for different tasks, including language, country, sports, and city. The samples in each dataset consist of four parts: prepend, error result, query, and correct result. Each task has a fixed template for the sample. For the language, the template is ``\texttt{\{People\} did not speak \{error result\}. The native language of \{People\} is \{correct result\}}''. For the country, the template is ``\texttt{\{People\} does not live in \{error result\}. \{People\} has a citizenship from \{correct result\}}''. For the sports, the template is ``\texttt{\{People\} is not good at playing \{error result\}. \{People\}'s best sport is \{correct result\}}''. For the city, the template is ``\texttt{\{Landmarks\} is not in \{error result\}. \{Landmarks\} is in the city of \{correct result\}}''. We allow samples to have certain deviations from the templates, but they must generally conform to the semantics of the templates. Instance always match the reality, and the main source of instances is the
\textit{CounterFact} dataset \cite{meng2022locating}. In our dataset, each task contains three hundred to seven hundred specific instances. We conduct experiments on GPT2 ~\cite{radford2019language} of different sizes. Specifically, we consider GPT2, GPT2-MEDIUM, GPT2-LARGE, and GPT2-XL. They have 12 layers, 24 layers, 36 layers, and 48 layers, respectively. We construct a pipeline that test each model on each task, recording the number of prepends for which the context just succeeded in perturbing the output. For those samples that fail to perturb within a certain number of prepends (which is determined by the maximum length of the pre-trained model), we exclude them from the statistics. Finally, we verify the relationship between model depth and robustness by averaging the number of prepends required to successfully perturb the output.
\subsection{Numerical experiments}
We use extensive numerical experiments to verify our theoretical results, including gradient descent and linear transformers.

\textbf{Gradient descent:} We use a single-layer neural network as the gradient descent model, which contains only one linear hidden layer. Its input dimension is the dimension $d$ of feature $\xb$, and we mainly experiment on $d=\{15,20,25\}$. Its output dimension is $1$, because we only need to judge the classification result by its sign. We use the mean square error as the loss function and \textit{SGD} as the optimizer. All data comes from the defined training distribution $\cD_{\mathrm{tr}}$. The hyperparameters we set include training context length $N=50$, mean of the Gaussian distribution $\bbeta^\star=1$, variance of the Gaussian distribution $\bSigma=0.1$ (then normalized). We initialize the neural network to $c\bbeta$, and then perform gradient descents with steps $Steps=\{1,2,...,8\}$ and learning rate $lr$. We use grid search to search for the optimal $c$ and $lr$ for the loss function. This is equivalent to the trained transformers of layers 1 to 8 learning to obtain the shared signal $c\bbeta$ and the optimal learning rate $lr$ for the corresponding number of layers. Then they can use in-context data to fine-tune $c\bbeta$ to a specific $\wb^\star$.

After obtaining the optimal initialization and learning rate, we test it on the dataset from $\cD_{\mathrm{te}}$. Again, we set exactly the same hyperparameters as above. In addition, we set the $\sigma$ in the test distribution to 0.1.

\textbf{Linear Transformer:} We train on multi-layer single-head linear transformers and use \textit{Adam} as the optimizer. The training settings for models with different numbers of layers are exactly the same. We use the initial learning rate $lr\in \{0.0001,0.0002\}$, and the training steps are 600,000. We use a learning rate decay mechanism, where the learning rate is decayed by half every 50,000 training steps. For training and testing data, we set the data dimension $d=20$ and the training context length $N=\{10,20,30,40\}$. We use a batchsize of 5,000 and apply gradient clipping with a threshold of 1.

\section{Additional experiments}
\subsection{Robustness of standard transformers with different number of layers}
\begin{figure}
    \centering
    \includegraphics[width=0.75\linewidth]{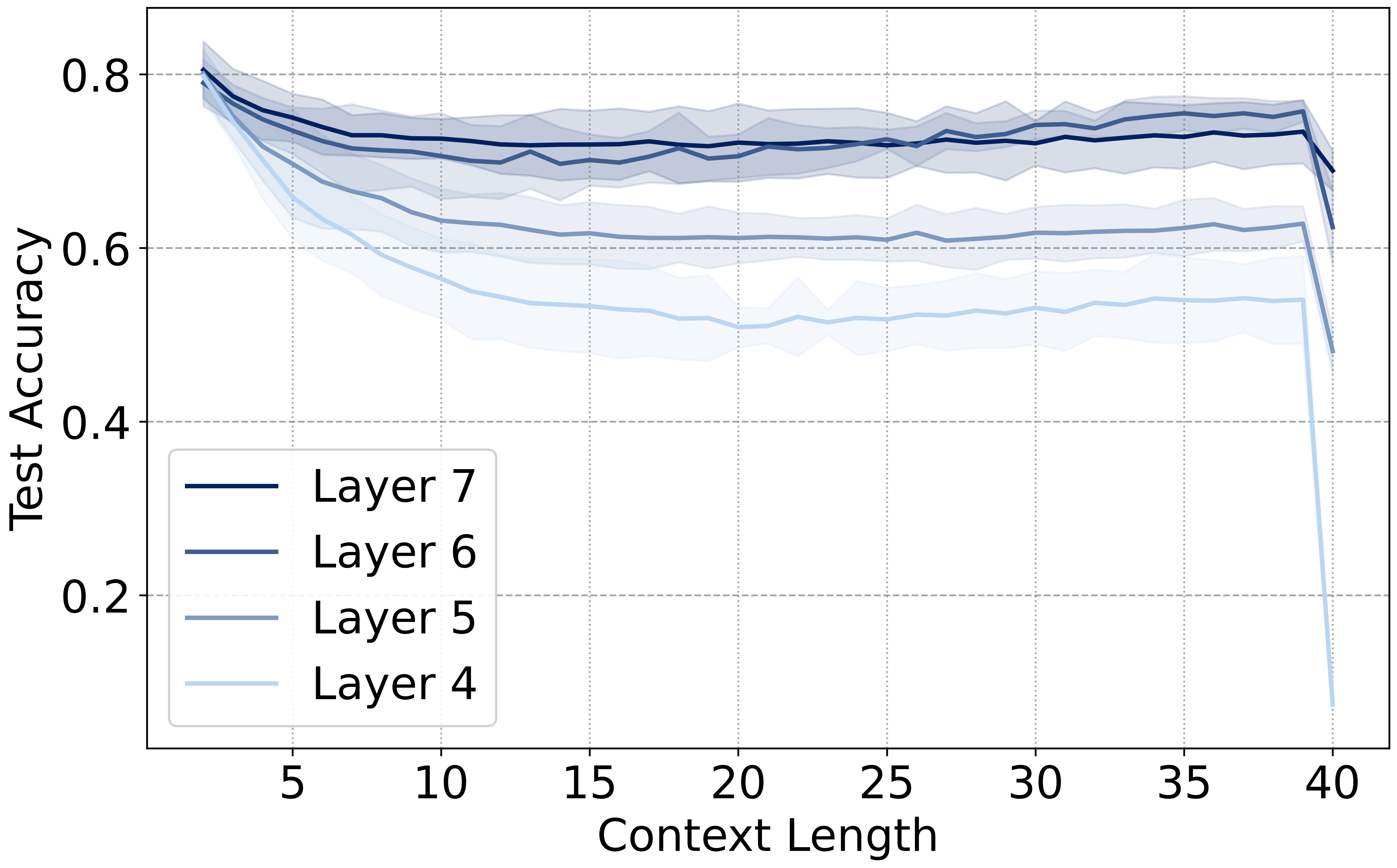}
    \vskip -0.in
    \caption{Standard transformers experiments with different depths. Testing the trained standard transformers (GPT-2 architecture ~\cite{radford2019language}) on the test set, as the number of interference samples increases, the model classification accuracy decreases and gradually converges. The results also show that deeper models are more robust.}
    \label{fig:standardtf}
    \vskip -0.2in
\end{figure}
To generalize the results to more realistic settings, we transfer the experiments from linear transformers to larger and standard transformers, such as GPT-2~\cite{radford2019language}. We train and test GPT-2 with different numbers of layers based on exactly the same settings as the linear transformers experiments. The results once again verify our theory (Figure \ref{fig:standardtf}). As the context length increases, the model’s accuracy decreases, but increasing the number of layers of the model significantly improves the robustness, indicating that our theory has more practical significance. Then we describe the setup of standard transformers experiments briefly.

\textbf{Setup:} We use the standard transformers of the GPT-2 architecture for the experiments, and the main settings are similar to ~\cite{garg2022can}. We set the embedding size to 256, the number of heads to 8, and the batch size to 64. We use a learning rate decay mechanism similar to linear transformers experiments, with an initial learning rate of 0.0002, and then reduced by half every 200,000 steps, for a total of 600,000 steps. We use \textit{Adam} as the optimizer.

\subsection{Linear transformers facing different interference intensity}
In this section, we mainly discuss how the robustness of the model changes with the interference intensity. In our modeling, the interference intensity is determined only by the distance between the query sample and the similar interference samples defined in the test set, that is, by the variable $\sigma$ in $\cD_{\mathrm{te}}$. In real-world observations, according to the idea of the induction head~\cite{olsson2022context}, the more similar the context prepend used for interference is to the query, the more likely the model is to use in-context learning to output incorrect results. Therefore, we examine different $\sigma$ to determine whether the model conforms to the actual real-world interference situation, that is, to verify the rationality of our modeling.

\begin{figure*}
    \centering
    \includegraphics[width=0.8\linewidth]{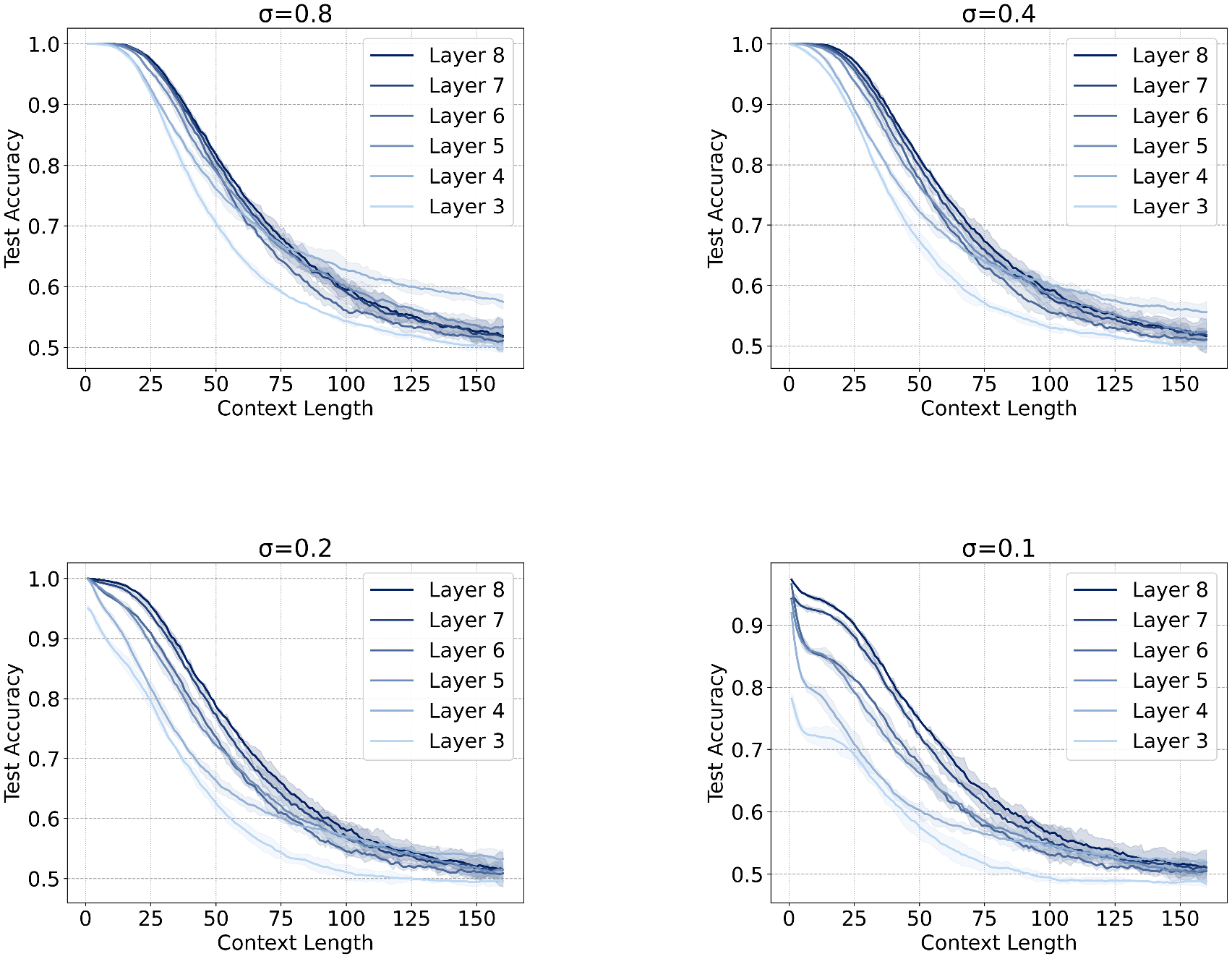}
    \vskip -0.in
    \caption{Linear transformers experiments with different depths and different $\sigma$. In real-world semantics, smaller $\sigma$ means stronger interference. Comparing the test performance of the model under different $\sigma$, we can find that as $\sigma$ decreases, the robustness of the model decreases significantly, which verifies the rationality of our modeling.}
    \label{fig:lineartf_sigma}
    \vskip -0.1in
\end{figure*}

Observing the experiment results in Figure~\ref{fig:lineartf_sigma}, when $\sigma$ gradually decreases from 0.8 to 0.1, that is, the interference intensity of the data gradually increases, the classification accuracy of the model decreases significantly. When $\sigma$ is larger and the interference context is less, the model can always classify accurately, indicating that weak interference does not affect the performance of the model, which is consistent with real observations. Various experimental phenomena show that our modeling of the context hijacking task by the distance between the interference sample and the query sample is consistent with the real semantics.



\end{document}